\newcommand{\inner}[2]{\ensuremath{\langle #1, #2\rangle}}
\newcommand{\two}{\ensuremath{\mathbf{2}}}
\newcommand{\distfunc}{\ensuremath{\mathbf{Dist}}}
\newcommand{\setfunc}{\ensuremath{\mathbf{Set}}}
\newcommand{\seqfunc}{\ensuremath{\mathbf{Seq}}}
\newcommand{\strfunc}{\ensuremath{\mathbf{Str}}}
\newcommand{\permfunc}{\ensuremath{\mathbf{Perm}}}
\newcommand{\mvconst}{\ensuremath{\delta}}
\newcommand{\consts}{\ensuremath{\Delta}}
\newcommand{\mvaconst}{\ensuremath{\gamma}}
\newcommand{\aconsts}{\ensuremath{\Gamma}}
\newcommand{\mvsconst}{\ensuremath{\sigma}}
\newcommand{\expansion}{\ensuremath{\geq}}
\newcommand{\expandsinto}{\ensuremath{\leq}}
\newcommand{\dnf}{\ensuremath{\operatorname{dnf}}}
\newcommand{\expand}{\ensuremath{\operatorname{expand}}}
\newcommand{\belief}{\ensuremath{\mathbb{B}}}
\newcommand{\mvhintree}{\ensuremath{\mathbb{H}}}
\newcommand{\renorm}{\ensuremath{\operatorname{renorm}}}
\newcommand{\suppo}{\ensuremath{\operatorname{supp}}^0}
\newcommand{\supp}{\ensuremath{\operatorname{supp}}}
\newcommand{\mvconj}{\ensuremath{\operatorname{conj}}}
\newcommand{\mvhyp}{\ensuremath{\mathbb{D}}}
\newcommand{\prove}{\ensuremath{\operatorname{prove}}}
\newcommand{\mvpath}{\rho}
\begin{document}

\title{On Learning to Prove}

\author{\name Daniel Huang \email dehuang@berkeley.edu \\
       \addr Department of Electrical Engineering and Computer Science\\
       University of California\\
       Berkeley, CA, USA}


\maketitle

\begin{abstract}
In this paper, we consider the problem of learning a first-order theorem prover that uses a representation of beliefs in mathematical claims to construct proofs. The inspiration for doing so comes from the practices of human mathematicians where ``plausible reasoning" is applied in addition to deductive reasoning to find proofs.

Towards this end, we introduce a representation of beliefs that assigns probabilities to the exhaustive and mutually exclusive first-order \emph{possibilities} found in Hintikka's theory of distributive normal forms. The representation supports Bayesian update, induces a distribution on statements that does not enforce that logically equivalent statements are assigned the same probability, and suggests an embedding of statements into an associated Hilbert space.

We then examine \emph{conjecturing} as model selection and an \emph{alternating-turn} game of determining consistency. The game is amenable (in principle) to \emph{self-play} training to learn beliefs and derive a prover that is complete when \emph{logical omniscience} is attained and sound when beliefs are \emph{reasonable}. The representation has super-exponential space requirements as a function of quantifier depth so the ideas in this paper should be taken as theoretical. We will comment on how \emph{abstractions} can be used to control the space requirements at the cost of completeness.
\end{abstract}

\begin{keywords}
theorem proving, logical uncertainty, conjecturing, game play, distributive normal forms
\end{keywords}

\section{Introduction}
\label{sec:intro}

The process of discovering a mathematical proof can be seen as a perfect information game where the goal is to show that a path exists (\ie, the proof) between a given starting state (\ie, the axioms) and ending state (\ie, the claim) using a predefined collection of rules (\ie, deduction). Like other perfect information games such as Go and Chess, the complexity of the theorem proving game involves managing the combinatorial nature of the search space. We can do this, for instance, by identifying useful heuristics and patterns. This is one sense in which players can learn and improve from their experiences playing the game.

The idea of ``learning from experience" suggests that we can apply \emph{machine learning} to learn these heuristics and patterns as opposed to distilling them manually from human experience. Towards this end, researchers have demonstrated that machine learned algorithms can navigate the search spaces of Go~\citep{silver2016mastering} and Chess~\citep{silver2017mastering} at a level exceeding human experts (\ie, consistently defeat the best human players). Researchers have also experimented with applying machine learning to theorem provers~\citep[\eg, see ][]{komendantskaya2012machine,kaliszyk2014machine,gauthier2017tactictoe,duncantactics,neurosat,holstep,premise,guided,mctsrl,huang2018gamepad}, although the problem is much more difficult compared to Go and Chess when quantifiers are involved.\footnote{The state spaces of Chess and Go, albeit large, are finite. In contrast, quantifiers can range over infinite domains.}

In this paper, we consider the problem of learning a prover for \emph{first-order logic},\footnote{First-order logic along with the axioms of set theory are expressive---they are in principle sufficient to encode most of modern mathematics, although humans generally work at a higher level of abstraction and within a natural language extended with mathematical concepts as opposed to a formal language.} a well-understood setting with quantification, where we directly use a representation of beliefs in mathematical claims to construct proofs.\footnote{The literature on automated theorem proving is expansive see~\citep[\eg, see][for a survey of first-order methods]{fitting2012first}. Most provers use a proof-theoretic system as the primary abstraction for representing mathematical knowledge.} The inspiration for doing so comes from the practices of human mathematicians where ``plausible reasoning"\footnote{P{\'o}lya has written extensively on plausible reasoning, \ie, the heuristic and non-deductive aspects of mathematical reasoning, including (1) weighing evidence for and against a conjecture, (2) making physical analogies, and (3) reasoning from randomness~\citep[\eg, see][]{polya1990mathematics1,polya1990mathematics2,polya2004solve}.} is used in addition to deductive reasoning to discover proofs.\footnote{The non-deductive aspects of mathematical reasoning has been recognized by mathematicians and philosophers~\citep[\eg, see][]{hacking1967slightly,corfield2003towards,parikh2010sentences,seidenfeld2012uncertainty,mazur2014plausible}.}

We start by introducing a representation of beliefs that assigns probabilities to the exhaustive and mutually exclusive first-order \emph{possibilities} found in the theory of first-order \emph{distributive normal forms} (dnfs) devised by the philosopher Jaakko Hintikka (Section~\ref{sec:repr}). The idea of assigning weights to dnfs has been proposed by~\citet{hintikka1970surface} in the context of inductive philosophy so the idea is not new. Our contribution here is extract and formalize some of these ideas for the purposes of ``learning to prove". We show that the representation supports a form of Bayesian update and induces a distribution on the validity of first-order statements that does not enforce that logically equivalent statements are assigned the same probability---otherwise, we would end up in a circular situation where we require a prover in order to assign probabilities. In addition, we show that there is an \emph{embedding} of first-order statements into an associated Hilbert space where mutual exclusion in logic translates into orthogonality in the space.

Next, we consider two applications that a direct probabilistic representation of beliefs in mathematical claims has for ``learning to prove". First, we identify \emph{conjecturing} as a form of (statistical) \emph{model selection} (Section~\ref{sec:tp}). Second, we introduce an \emph{alternating-turn game} that involves determining the consistency of possibilities (Section~\ref{sec:game}). The game is amenable (in principle) to self-play training, a technique that has demonstrated success in learning expert-level play for the games of Go and Chess, to learn beliefs that can be used to construct a prover that is complete when \emph{logical omniscience}\footnote{An agent is \emph{logically omniscient} if it knows all the logical consequences that follow from a set of axioms. Consequently, logical omniscience should fail in the interim while learning a prover---there is nothing to learn if an agent already possesses knowledge of all theorems.} is attained and sound provided that players maintain \emph{reasonable}\footnote{Roughly speaking, an agent is \emph{reasonable} if it does not assign zero probability to a possibility that it has not been able to falsify. We will define this formally in Section~\ref{subsubsec:repr:ht:renorm}.} beliefs. Implementing and empirically testing self-play for these games is technically challenging and beyond the scope of this paper.

The ideas in this paper should be taken with one major caveat: the space complexity of the representation is (highly) super-exponential as a function of quantifier \emph{depth} (\ie, the maximal number of nested quantifiers) so that the ideas are not practically implementable without modification. Thus our analysis in its current form should only be seen as conducting a thought experiment. As a step towards making the ideas here more practical, we will comment on how to control the sizes of the representations at the cost of completeness by treating certain combinations of properties as observationally indistinguishable, \ie, by making \emph{abstractions} and \emph{lazily} considering more properties as needed (Section~\ref{sec:abs}). This suggests a path towards implementation (\eg, for the game).

As one final qualification concerning the ideas in this paper, we acknowledge that we have taken a somewhat narrow view of ``learning to prove". First, we restrict ourselves to a first-order axiomatic view of mathematics.\footnote{Although the axiomatic approach to mathematics is widely adopted, mathematicians typically do not carry out the paradigm to its full extent and write completely formal proofs. When they do, there are a variety of formal languages they can choose from in addition to first-order logic including higher-order logic and type theories. The practicality of formalizing mathematics has been aided by the development of tools called \emph{interactive theorem provers}. (For instance, \citet{gonthier2013machine} formalized the Feit-Thompson theorem, a deep result in group theory, using an interactive theorem prover.) There are interactive theorem provers based on first-order logic~\citep[\eg, see][]{mizar}, higher-order logic~\citep[\eg, see][]{isabelle}, and type theories~\citep[\eg, see][]{coq}. An interesting direction of future work would be to see how the ideas in this paper apply to higher-order and type-theoretic settings.} Second, we consider only a probabilistic aspect of plausible reasoning.\footnote{The use of probabilistic reasoning to model plausible reasoning is not a new idea---for instance, see work on probabilistic graphical models~\citep{pearl1988pgm} and work on inductive inference~\citep[\eg, see][]{solomonoff1964formal,solomonoff1964formal2,jaeger2005logic}. The field of \emph{automated reasoning}~\citep[\eg, see][for a survey]{robinson2001handbook,robinson2001bhandbook} contains work on other forms of non-deductive reasoning including reasoning by induction~\citep[\eg, see][]{quinlan1986induction,bundy2001ind,comon2001ind}, abduction~\citep[\eg, see][]{console1991relationship,mayer1993first,gabbay1998handbook,denecker2002abduction}, and analogy~\citep[\eg, see][]{davies1987logical,ashley1988modelling,russell1988analogy}.} Finally, we emphasize that our work is not human-style theorem proving~\citep[\eg, see][]{ganesalingam2017fully} even though we take inspiration from human mathematicians. In spite of these limitations and shortcomings, we believe that the ideas presented here offer a descriptive account of ``learning to prove" that cohesively accounts for the role of beliefs in the proving process, the utility of conjecturing, and the value of abstraction.

\section{Preliminaries}
\label{sec:prelim}

We begin by setting up the notation and terminology we will use throughout this paper (Section~\ref{subsec:prelim:notation}). Next, we provide intuition for Hintikka's dnfs (Section~\ref{subsec:prelim:mot}) and then introduce them formally for first-order logic without equality\footnote{The restriction to first-order logic without equality is for simplicity: dnfs are defined for first-order logic with equality as well. All results given here apply to dnfs in both cases with the appropriate modifications. The difference between the two is between an \emph{inclusive} treatment of quantifiers (without equality) and an \emph{exclusive} treatment of quantifiers (with equality). As usual, note that we can include a binary predicate that encodes equality in first-order logic without equality, the difference with the case of first-order logic with equality being that structures may not necessarily be \emph{normal}.} (Section~\ref{subsec:prelim:dnf}). For more background on dnfs, we refer the reader to~\citep{hintikka1965distributive,hintikka1973logic,nelte1997formulas}.

\subsection{Notation and Background}
\label{subsec:prelim:notation}

Let $\two \eqdef \set{0, 1}$. We will interchangeably use $\false$ for $0$ (false) and $\true$ for $1$ (true). $\N$ denotes the set of naturals and $\N^+$ denotes the set of positive naturals. $\R$ denotes the set of reals and $\R^+$ denotes the set of positive reals.

We write $\setfunc(X) \cong X \rightarrow \two$ to indicate the power set of $X$. We write $|X|$ for the cardinality of the set $X$. We will often write a binary relation such as $\sim: X \times X \rightarrow \two$ in infix notation as $x \sim y \eqdef \sim(x, y)$ for $x \in X$ and $y \in X$. The notation $\set{x \ST P(x)}$ where $P: X \rightarrow \two$ is a predicate on a set $X$ indicates a set comprehension. We also write set comprehensions for indexed sets as $\set{x_i \ST P(i)}$ where $P: I \rightarrow \two$ is a predicate on an index set $I$ that indexes $X$.

When order matters, we use $\seq{\cdot}$ for sequences instead of $\set{\cdot}$. We write $\seqfunc^n(X) \eqdef \set{\seq{x \ST x \in X} \ST |\seq{x \ST x \in X}| = n}$ for the set of length $n$ sequences comprised of elements from $X$. We write $\strfunc^n(X)$ for the set of length $n$ strings comprised of elements from $X$.

We will use ellipsis notation ``$\dots$" frequently in this paper. As usual, it means ``fill in the dots with all the missing elements in between". For example, $x_1, \dots, x_n$ gives the elements $x_1$, $x_2$, and so on until $x_n$. When the commas are omitted as in $x_1 \dots x_n$, the notation indicates a string of those elements instead.

\subsubsection{First-order logic}

The syntax of first-order logic (without equality) is summarized below.
\begin{align*}
    \mvterm & \eqdef x \bnfsep c \bnfsep f^n(\mvterm_1, \dots, \mvterm_n) \bnfsep \mvpred^n(\mvterm_1, \dots, \mvterm_n) \\
    \mvform & \eqdef \mvterm \bnfsep \lnot \mvform \bnfsep \mvform \lor \mvform \bnfsep (\exists x) \mvform
\end{align*}
We use the meta-variable $\mvterm$ to refer to terms. A \emph{term} is either a variable $x$, a constant $c$, a $n$-ary function $f^n(\mvterm_1, \dots, \mvterm_n)$ applied to $n$ terms, or a $n$-ary predicate $\mvpred^n(\mvterm_1, \dots, \mvterm_n)$ on $n$ terms. We use the meta-variable $\mvform$ to refer to formulas. A \emph{formula} is either a term ($\mvterm$), the logical negation of a formula ($\lnot \mvform$), the logical \emph{or} of two formulas ($\mvform \lor \mvform$), or an existential quantification ($(\exists x) \mvform$). As usual, we encode logical \emph{and} as $\mvform_1 \land \mvform_2 \eqdef \lnot (\lnot \mvform_1 \lor \lnot \mvform_2)$ and universal quantification as $(\forall x) \mvform \eqdef \lnot (\exists x) \lnot \mvform$ where we assume the usual precedence and use additional (meta-level) parentheses to aid the parsing of formulas. The meta-level notation $(\pm)^b \mvform$ where $b \in \two$ either negates the formula $(\pm)^0 \mvform \eqdef \lnot \mvform$ or leaves it alone $(\pm)^1 \mvform \eqdef \mvform$.

We write a formula with free variables as $\mvform[x_{i_1}, \dots, x_{i_n}]$ where $x_1, x_2, \dots$ is a supply of free variables. A formula without free variables is called a \emph{sentence}. 

We use a standard deductive system for first-order logic and write $\mvform_1 \vdash \mvform_2$ if there is a derivation of $\mvform_2$ using $\mvform_1$, any logical axioms, and the rules of inference. We write $\Gamma = \set{\mvform_1, \dots, \mvform_n}$ to be a set of sentences. We say that $\Gamma$ is \emph{consistent} if a contradiction is not derivable, \ie, both $\Gamma \vdash \mvform$ and $\Gamma \vdash \lnot \mvform$ are not derivable for any $\mvform$ where we take the conjunction of all sentences in $\Gamma$ when it appears to the left of $\vdash$.

We use the standard semantics of first-order logic based on structures.\footnote{For more background on first-order logic, we refer the reader to~\citep{hodges2001elementary}.} A \emph{structure} is a tuple $\cM \eqdef (D, \Sigma, \denote{\cdot})$ where $D$ is a (potentially empty) set called the domain, $\Sigma$ is a signature (the functions and relations of the language), and $\denote{\cdot}$ is an interpretation of the signature. Note that an empty domain cannot be used to interpret a language with constants. We say that a formula $\mvform$ is \emph{satisfiable} in a structure $\cM$ if $\cM \vDash \mvform[a_{i_1}, \dots, a_{i_n}]$ for every $a_{i_1}, \dots, a_{i_n} \in D$ where $\vDash$ is the usual satisfaction relation defined by induction on the structure of formulas and we overload $\mvform[a_{i_1}, \dots, a_{i_n}]$ to mean that the interpretation of the variable $x_{i_m}$ in $\mvform[x_{i_1}, \dots, x_{i_m}, \dots, x_{i_n}]$ is $a_{i_m}$. A sentence is \emph{satisfiable} if there is some structure $\cM$ such that $\cM \vDash \mvform$.

Recall that first-order logic with a standard proof system and standard semantics is \emph{sound} (\ie, $\vDash \mvform$ if $\vdash \mvform$) and \emph{complete} (\ie, $\vdash \mvform$ if $\vDash \mvform$). Thus a sentence $\mvform$ is consistent iff it is satisfiable. A sentence $\mvform$ is inconsistent if it is satisfiable in no structures, consistent if it is satisfiable in at least one structure, and \emph{logically valid} if it is satisfiable in every structure. We write $\mvform_1 \equiv \mvform_2$ when $\mvform_1$ and $\mvform_2$ are \emph{logically equivalent}.

\subsubsection{Graphs and trees}

A \emph{directed graph} is a tuple $(V, E)$ where $V$ is a set of \emph{vertices} and $E \subseteq \set{ (v_1, v_2) \ST v_1, v_2 \in V}$ is a set of \emph{edges}. Because we only consider directed graphs in this paper, we will abbreviate directed graph as \emph{graph}. A \emph{path} in a graph $(V, E)$ is a graph $(V', E')$ of the form $V' \eqdef \set{v_1, \dots, v_k} \subseteq V$ and $E' \eqdef \set{(v_1, v_2), \dots, (v_{k-1}, v_k)} \subseteq E$ where all $v_i$ are distinct. We refer to $v_1$ and $v_k$ as the \emph{endpoints} of the path.

A \emph{(rooted) tree} is a tuple $(V, E, v_R)$ where $(V, E)$ is a graph such that any two vertices are connected by a unique path and $v_R \in V$ is a vertex designated as a \emph{root}. Because there is only one path between any two vertices, a path between $v_1 \in V$ and $v_k \in V$ can be identified by the traversed vertices $\set{v_1, \dots, v_k}$, or simply the two endpoints $v_1$ and $v_k$. We say that $v_p \in V$ is a \emph{parent} of $v_c \in V$, and $v_c$ is a \emph{child} of $v_p$, if there is a path $\set{v_R, \dots, v_p, v_c}$. We write $\child: V \rightarrow \setfunc(V)$ so that $\child(v)$ obtains the set of children of $v$. if $v_c$ We say that $v_a \in V$ is an \emph{ancestor} of $v_d \in V$, and $v_d$ is a \emph{descendant} of $v_a$, if there is a path $\set{v_R, \dots, v_a, \dots, v_d}$. We write $\anc: V \rightarrow \setfunc(V)$ so that $\anc(v)$ obtains the set of ancestors of $v$ ($\desc: V \rightarrow \setfunc(V)$ for descendants).

\subsection{Distributive Normal Forms: Intuition}
\label{subsec:prelim:mot}

The role of a dnf of a first-order formula is analogous to that of a disjunctive normal form of a propositional formula in that the dnf of a formula is a disjunction of mutually exclusive possibilities. That we can exhaustively describe mutually exclusive possibilities in the first-order setting is not obvious as the domain of quantification can be infinite and individuals in the domain can become related to one another as more individuals are considered. We start with an example to illustrate the basic problem and solution due to Hintikka.

Consider a first-order theory with one binary predicate $<$, where $x < y \eqdef <(x, y)$ is infix for ``x is less than y", for describing individuals and their order relations with one another. We can look at what the normal form of the statement ``every individual has an individual that is smaller than it", encoded in this language as
\[
(\forall x) (\exists m) m < x
\]
could be. Assuming that we have a constant that names each element in the domain of quantification, a first attempt would be to translate each $\forall$ into a conjunction (over the domain of individuals) and each $\exists$ into a disjunction (over the domain of individuals), and use a propositional normal form. That is, we convert the result of translating the quantifiers away
\[
\lAnd_{x} \left(\lOr_{m} m < x \right) \,
\]
into disjunctive normal form. Unfortunately, the domain of quantification can be infinite, so the resulting formula may be of infinite size. The ``trick" for circumventing this is to enumerate how the predicates at hand can describe the relationships between $k$ individuals (uniformly in $k$) instead of enumerating tuples of individuals. We can then identify possible \emph{kinds} of worlds by listing which kinds of individuals exist or not.

To see how this works, we rewrite the original statement as
\[
\lnot (\exists x) \lnot ((\exists m) (m < x)) \,.
\]
(In words, it is impossible to find an individual that does not have an individual that is less than it.) In this form, we can think of the normal form of a statement with quantification as describing whether kinds of individuals with certain relations to one another exist or not. In order to exhaust all the possibilities, we need to consider all the cases in which $x$ and $m$ can related to one another that are consistent with the original formula.

We can see this better in our specific example by introducing notation that enumerates all descriptions of one and two free individual variables describable by the predicate $M$. When there is one free individual variable $x_1$, the only possibility is to relate $x_1$ to itself as below
\[
P_{a_1}(x_1) \eqdef (\pm)^{a_1}(x_1 < x_1)
\]
where $a_1 = 0$ says that $x_1$ is not less than itself and $a_1 = 1$ says that $x_1$ is less than itself. When there are two free individual variables $x_1$ and $x_2$, we have
\[
Q_{a_1a_2a_3}(x_1, x_2) \eqdef (\pm)^{a_1}(x_1 < x_2) \land (\pm)^{a_2}(x_2 < x_1) \land (\pm)^{a_3}(x_2 < x_2)
\]
where the subscript $a_1a_2a_3a_4$ indexes each $Q$. For example,
\[
Q_{100}(x_1, x_2) = x_1 < x_2 \land \lnot(x_2 < x_1) \land \lnot(x_2 < x_2) \,.
\]
We enumerate all combinations of whether such individuals exist or not next.
\begin{multline*}
\mvconst_{b_1\dots b_{2^{512}}} \eqdef \\
(\pm)^{b_1} [(\exists x_1) P_0(x_1) \land \lnot (\exists x_2) Q_{000}(x_1, x_2) \land \dots \land \lnot (\exists x_2) Q_{111}(x_1, x_2)] \land \\
\ldots \land \\
(\pm)^{b_{2^{256}}} [(\exists x_1) P_0(x_1) \land (\exists x_2) Q_{000}(x_1, x_2) \land \dots \land (\exists x_2) Q_{111}(x_1, x_2)] \land \\
(\pm)^{b_{2^{256}+1}} [(\exists x_1) P_1(x_1) \land \lnot (\exists x_2) Q_{000}(x_1, x_2) \land \dots \land \lnot (\exists x_2) Q_{111}(x_1, x_2)] \land \\
\ldots \land \\
(\pm)^{b_{2^{512}}} [(\exists x_1) P_1(x_1) \land (\exists x_2) Q_{000}(x_1, x_2) \land \dots \land (\exists x_2) Q_{111}(x_1, x_2)]
\end{multline*}
The possible kinds of worlds described by our original formula is then any
\[
\mvconst_{b_1\dots b_{2^{512}}}
\]
that implies the original formula. When we introduce dnfs more formally (Section~\ref{subsec:prelim:dnf}), we will see that the possible kinds of worlds are \emph{constituents}.

The example deserves some remarks. First, note that we really have exhaustively enumerated all the mutually exclusive possibilities. The possibility $\mvconst_{0\dots0}$ describes one extreme where there are no individuals (and hence the original statement is vacuously true),\footnote{Note that traditional presentations of first-order model theory disallow empty domains although this restriction is not necessary. On the syntactic side, we will need to modify proof rules (\eg, the rule $(\forall x) \rightarrow (\exists x)$ used in converting formula to prenex normal form no longer holds) to maintain soundness and completeness.} the possibility $\mvconst_{0\dots0b^{2^{256}+1}\dotsb^{2^{512}}}$ requires individuals to be less than themselves, and the possibility $\mvconst_{1\dots1}$ enables every kind of individual to exist with respect to $<$. Second, note that the number of possibilities even in this small example (two individuals and one predicate) is enormous at $2^{512}$. The astronomical number of constituents is not an issue for theoretical purposes although it does render the straightforward application of the theory to be unfeasible.

\subsection{Distributive Normal Forms: Background}
\label{subsec:prelim:dnf}

Define the set
\[
\mathbf{S}(\set{\mvform_1, \dots, \mvform_k}) \eqdef \set{ \lAnd_{i \in \set{1, \dots, k}} (\pm)^{b_i} \mvform_i \ST b_1 \in \two, \dots, b_k \in \two} \,.
\]
An element of $\mathbf{S}(\set{\mvform_1, \dots, \mvform_k})$ is a conjunction of every $\mvform_i$ or its negation.

Let $\cA[y_1, \dots, y_k]$ denote the set of all atomic formula (\ie, a predicate applied to a tuple of terms) involving the free individual terms (\ie, constants or variables) $y_1, \dots, y_k$. Let $\cB[y_1, \dots, y_k]$ denote the subset of $\cA[y_1, \dots, y_k]$ that mentions $y_k$ at least once.

\paragraph{Attributive constituents}
An \emph{attributive constituent} with $k$ free individual terms $y_1, \dots, y_k$ of depth $0$ is an element of $\mathbf{S}(\cB[y_1, \dots, y_k])$. We write $\aconsts^{(0)}[y_1, \dots, y_k] \eqdef \mathbf{S}(\cB[y_1, \dots, y_k])$ for the set of all attributive constituents with $k$ free individual terms $y_1, \dots, y_k$ of depth $0$. By convention, we set $\aconsts^{(0)}[] = \set{\true}$. An attributive constituent of depth $0$ is a formula of the form
\[
\mvaconst^{(0)}_r[y_1, \dots, y_k] = \lAnd_{i \in \set{1, \dots, \ell^\cB_k}} (\pm)^{b_i} B_i[y_1, \dots, y_k]
\]
where $\ell^\cB_k \eqdef |\cB[y_1, \dots, y_k]|$, each $b_i \in \two$, and each $B_i[y_1, \dots, y_k] \in \cB[y_1, \dots, y_k]$. The subscript $r$ indexes the attributive constituent and can be identified with the string $b_1 \dots b_{\ell^\cB_k}$. Let $\cG^0_k \eqdef \strfunc^{\ell^\cB_k}(\two)$ be an index set for attributive constituents with $k$ free individual terms of depth $0$. We have $\cG^0_k \cong \aconsts^{(0)}[y_1, \dots, y_k]$. The superscript $(0)$ indicates the \emph{depth} of the formula, \ie, the maximal number of nested quantifiers in the formula. Hence a depth of $0$ indicates that there are no quantifiers.

The set of attributive constituents $\aconsts^{(d)}[y_1, \dots, y_k]$ of depth $d > 0$ is defined by induction on $d$. More concretely, we have an \emph{attributive constituent} with $k$ free individual terms $y_1, \dots, y_k$ of depth $d > 0$ has the form
\begin{multline*}
\mvaconst^{(d)}_{r, s}[y_1, \dots, y_k] = \mvaconst^{(0)}_r[y_1, \dots, y_k] \\ \land
\begin{cases}
\lAnd_{r' \in \cG^0_{k+1}} (\pm)^{s(r')} (\exists x) \mvaconst^{(0)}_{r'}[y_1, \dots, y_k, x] & \mbox{$d = 1$}\\
\lAnd_{(r', s') \in \cG^{d-1}_{k+1}} (\pm)^{s(r', s')} (\exists x) \mvaconst^{(d-1)}_{r', s'}[y_1, \dots, y_k, x] & \mbox{$d > 1$}
\end{cases}
\end{multline*}
where we will explain the undefined notation below. Let $\cG^{d}_{k} \eqdef \cG^0_k \times (\cG^{d-1}_{k + 1} \rightarrow \bm{2}) \cong \aconsts^{(d)}[y_1, \dots, y_k]$ be an index set for attributive constituents of depth $d > 0$ with $k$ free individual terms $y_1, \dots, y_k$. The subscript $(r, s) \in \cG^d_k$ is a pair of $r \in \cG^0_k$ and a function $s: \cG^{d-1}_{k+1} \rightarrow \two$ indicating whether the appropriately indexed attributive constituent (of depth $d-1$ with $k+1$ free individual terms) exists or not. When the indices do not matter, we will abbreviate $\mvconst^{(d)}_{r, s}[y_1, \dots, y_k]$ as $\mvconst^{(d)}[y_1, \dots, y_k]$. When we refer to two distinct attributive constituents whose indices do not matter, we will overload the subscripts as in $\mvconst^{(d)}_i$ and $\mvconst^{(d)}_j$ to distinguish them.

An attributive constituent with $k$ free individual terms $y_1, \dots, y_k$ of depth $d \geq 0$ can equivalently be defined as
\begin{multline*}
\mvaconst^{(d)}_{r, s}[y_1, \dots, y_k] = \mvaconst^{(0)}_r[y_1, \dots, y_k] \land \lAnd_{(r', s') \in \cG^{d-1}_{k+1}|^s_+} (\exists x) \mvaconst^{(d-1)}_{r', s'}[y_1, \dots, y_k, x] \\
\land \left( (\forall x) \lOr_{(r', s') \in \cG^{d-1}_{k+1}|^s_+} \mvaconst^{(d-1)}_{r', s'}[y_1, \dots, y_k, x] \right)
\end{multline*}
where $\cG^d_k|^s_+ \eqdef \set{ (r', s') \ST s(r', s') = 1 }$ is the index set restricted to the positive ones as given by the function $s$.

\paragraph{Constituents}
A \emph{constituent} with $k$ free individual terms $y_1, \dots, y_k$ of depth $d \geq 0$ is a formula of the form
\[
\mvconst^{(d)}_{q, r, s}[y_1, \dots, y_k] = A_q[y_1, \dots, y_{k-1}] \land \mvaconst^{(d)}_{r, s}[y_1, \dots, y_k]
\]
where $A_q \in \mathbf{S}(\cA[y_1, \dots, y_k])$. Let $\consts^{(d)}[y_1, \dots, y_k]$ be the set of constituents of depth $d$ with $k$ free individual terms. By convention, we set $\consts^{(0)}[] = \set{\true}$. We write $\cD^d_k \cong \consts^{(d)}[y_1, \dots, y_k]$ for the set indexing $\consts^{(d)}[y_1, \dots, y_k]$. We use the same abbreviation scheme for the indices of constituents as we did for attributive constituents. Note that a constituent is an attributive constituent with an additional $A_q[y_1, \dots, y_{k-1}]$. Thus attributive constituents and constituents can be identified when there are $0$ free individual terms.

\paragraph{Distributive normal forms}
A \emph{distributive normal form} (dnf) with $k$ free individual terms $y_1, \dots, y_k$ is a disjunction of constituents
\[
\lOr_{\mvconst^{(d)}[y_1, \dots, y_k] \in D} \mvconst^{(d)}[y_1, \dots, y_k]
\]
for some subset $D \subseteq \consts^{(d)}[y_1, \dots, y_k]$ of constituents.

\paragraph{Properties}
Attributive constituents, constituents, and dnfs have the following useful properties~\citep{hintikka1965distributive}.
\begin{proposition}[Existence, mutual exclusion, and exclusivity]\hfill
\begin{description}[noitemsep]
    \item[Existence] Every formula $\mvform[y_1, \dots, y_k]$ (of depth $d$) has a distributive normal form (of depth $d$), \ie, there is a function $\dnf: \cL[y_1, \dots, y_k] \rightarrow \setfunc(\consts^{(d)}[y_1, \dots, y_k])$ such that
    \[
    \mvform^{(d)}[y_1, \dots, y_k] = \lOr_{\mvconst^{(d)}[y_1, \dots, y_k] \in \dnf(\mvform[y_1, \dots, y_k])} \mvconst^{(d)}[y_1, \dots, y_k]
    \]
    where $\cL[y_1, \dots, y_k]$ is the set of well-formed first-order sentences with free individual terms $y_1, \dots, y_k$.
    \item[Mutual exclusion] Any two constituents and attributive constituents of the same depth are mutually exclusive, \ie, $\mvconst^{(d)}_i \implies \lnot \mvconst^{(d)}_j$ for any $\mvconst^{(d)}_i \neq \mvconst^{(d)}_j$.
    \item[Expansion] Every constituent $\mvconst^{(d)}[y_1, \dots, y_k]$ can be written as a disjunction of its expansion constituents, \ie, there is a function $\expand: \N \times \consts^{(d)} \rightarrow \setfunc(\consts^{(d + e)})$ such that
    \[
    \mvconst^{(d)}[y_1, \dots, y_k] \equiv \lOr_{\mvconst^{(d+e)}[y_1, \dots, y_k] \in \expand(e, \mvconst^{(d)}[y_1, \dots, y_k])} \mvconst^{(d+e)}[y_1, \dots, y_k] \,.
    \]
\end{description}
\label{prop:prelim:prop}
\end{proposition}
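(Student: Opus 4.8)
The plan is to prove the three items in the order \textbf{Mutual exclusion}, \textbf{Expansion}, \textbf{Existence}, since the second uses the first and the third uses both. Each goes by induction on the quantifier depth $d$, carried out uniformly in the number $k$ of free individual terms, so that the step at depth $d$ with $k$ free terms may invoke the claim at depth $d-1$ with $k+1$ free terms, matching the recursive shape of $\mvaconst^{(d)}_{r,s}$. Since a constituent factors as $\mvconst^{(d)}_{q,r,s}[y_1,\dots,y_k] = A_q[y_1,\dots,y_{k-1}] \land \mvaconst^{(d)}_{r,s}[y_1,\dots,y_k]$ with $A_q \land \mvaconst^{(0)}_r$ a complete sign-assignment over all of $\cA[y_1,\dots,y_k]$, it is enough in each case to argue about attributive constituents and then carry the extra $A_q$ factor along. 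For \textbf{Mutual exclusion}: at depth $0$, distinct $\mvaconst^{(0)}_r \ne \mvaconst^{(0)}_{r'}$ assign opposite signs to some $B_i \in \cB[y_1,\dots,y_k]$, so their conjunction derives $B_i \land \lnot B_i$; at depth $d>0$, if the indices differ in $r$ one reduces to the depth-$0$ case on the $\mvaconst^{(0)}$-conjuncts, and if they differ in $s$ then some index $(\rho,\sigma)$ has $s(\rho,\sigma)=1$ and $s'(\rho,\sigma)=0$, so one constituent contains the conjunct $(\exists x)\mvaconst^{(d-1)}_{\rho,\sigma}[\dots,x]$ while the other contains its negation. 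The $A_q$ part only adds the case of disagreement on an atom among $y_1,\dots,y_{k-1}$. This item is essentially syntactic; the induction merely organizes the bookkeeping.

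For \textbf{Expansion} it suffices to handle $e=1$ and iterate. The content is that $\mvconst^{(d)}_{q,r,s}[y_1,\dots,y_k]$ --- which fixes the atomic diagram of $y_1,\dots,y_k$ and which depth-$(d{-}1)$ kinds of one further individual $x$ are realized --- is equivalent to the disjunction of the depth-$(d{+}1)$ constituents that keep the atomic diagram and whose realized depth-$d$ kinds of $x$ \emph{project onto exactly} the depth-$(d{-}1)$ kinds recorded by $s$. To formalize this I would first define, by recursion on $d$ directly on the index data, a projection $\pi \colon \cG^{d}_{k+1} \to \cG^{d-1}_{k+1}$ (for $d=1$, forget the existential data; for $d>1$, a depth-$(d{-}2)$ kind is marked present iff some depth-$(d{-}1)$ kind mapping to it under $\pi$ is present), and then set $\expand(1,\mvconst^{(d)}_{q,r,s})$ to be the $\mvconst^{(d+1)}_{q,r,\tilde s}$ whose pushforward $\iota' \mapsto \bigvee_{\pi(\iota)=\iota'}\tilde s(\iota)$ equals $s$; the fibers of $\pi$ partition $\cG^d_{k+1}$ by construction, and $\expand(e,\cdot)$ is obtained by iterating. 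It then remains to prove the equivalence. One direction, that each such depth-$(d{+}1)$ constituent entails $\mvconst^{(d)}_{q,r,s}$, unwinds from the definition of $\mvaconst^{(d)}$ together with the inductive hypothesis ($\mvaconst^{(d-1)}_{\iota'} \equiv \bigvee_{\pi(\iota)=\iota'}\mvaconst^{(d)}_\iota$ at $k+1$ free terms). The other direction, that any model of $\mvconst^{(d)}_{q,r,s}$ satisfies one of them, is handled by reading $\tilde s$ off from the depth-$d$ kinds the model actually realizes and checking that the pushforward condition is then forced. The base case $d=0$ is the statement that $\mvaconst^{(0)}_r[y_1,\dots,y_k]$ equals the disjunction, over all $t\colon\cG^0_{k+1}\to\two$, of $\mvaconst^{(0)}_r$ conjoined with ``the atomic types of $x$ realized are exactly those with $t=1$'', which holds because each structure determines exactly one such $t$ and each $\mvaconst^{(0)}_{r'}[y_1,\dots,y_k,x]$, mentioning only atoms involving $x$, is automatically compatible with $\mvaconst^{(0)}_r[y_1,\dots,y_k]$. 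Taking all $\tilde s$ (resp.\ $t$) here gives, as the special case of expanding $\true$, that the disjunction of all depth-$d$ (attributive) constituents with a fixed list of free terms is logically valid.

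For \textbf{Existence} I would induct on the structure of $\mvform$, producing at each stage a dnf of the same depth. A quantifier-free $\mvform$ goes to disjunctive normal form over the finite set $\cA[y_1,\dots,y_k]$, a disjunction of depth-$0$ constituents. If $\mvform \equiv \lOr_{i\in I}\mvconst^{(d)}_i$, then $\lnot\mvform \equiv \lOr_{i\notin I}\mvconst^{(d)}_i$ by \textbf{Mutual exclusion} together with validity of the disjunction of all depth-$d$ constituents (the $\true$ case above). For $\mvform_1\lor\mvform_2$, take dnfs of depths $d_1,d_2$, lift both to depth $\max(d_1,d_2)$ by \textbf{Expansion}, and union the constituent sets. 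The crux is $(\exists x)\mvform[y_1,\dots,y_k,x]$ with $\mvform$ of depth $d-1$: write $\mvform \equiv \lOr_i \mvconst^{(d-1)}_{q_i,r_i,s_i}[y_1,\dots,y_k,x]$, distribute $(\exists x)$ over the disjunction, pull the $x$-free factor $A_{q_i}[y_1,\dots,y_{k-1}]$ outside each $(\exists x)$, and replace $(\exists x)\mvaconst^{(d-1)}_{r_i,s_i}[y_1,\dots,y_k,x]$ by the disjunction of all depth-$d$ attributive constituents $\mvaconst^{(d)}_{r,s}[y_1,\dots,y_k]$ whose index has $s(r_i,s_i)=1$ (again an instance of the \textbf{Expansion}/validity fact); reattaching the $A$-parts and collecting the constituents yields a dnf of depth $d$.

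The step I expect to be the main obstacle is \textbf{Expansion}: getting the projection $\pi$ and the exact pushforward condition relating $s$ and $\tilde s$ right (and the parallel bookkeeping that ties the $A_q$ factor of a constituent in $x$ to the outer atomic diagram when $(\exists x)$ is pushed inward), and then verifying the harder direction, namely that the complete depth-$(d{+}1)$ description realized by an arbitrary model of $\mvconst^{(d)}_{q,r,s}$ indeed lands in the set $\expand(1,\mvconst^{(d)}_{q,r,s})$. Everything else is either a finite propositional manipulation or a direct unwinding of the definitions.
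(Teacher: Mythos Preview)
The paper does not prove this proposition at all: it is stated as background and attributed to \citet{hintikka1965distributive} via a citation immediately preceding the proposition, with no proof environment following it. So there is no ``paper's own proof'' to compare against.

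That said, your sketch is essentially the standard argument one finds in Hintikka's original development, and the overall architecture---prove mutual exclusion first, then expansion, then existence by structural induction on formulas using the previous two---is correct. The mutual-exclusion argument is fine as written. For expansion, your plan to define the projection $\pi$ on index data and characterize $\expand(1,\cdot)$ by a pushforward condition is the right idea, and you have correctly identified the delicate direction (showing that the depth-$(d{+}1)$ type realized by an arbitrary model of $\mvconst^{(d)}_{q,r,s}$ lands in the declared expansion set). For existence, the quantifier case is handled exactly as you describe: distribute $(\exists x)$ over the disjunction, extract the $x$-free atomic part, and rewrite the remaining $(\exists x)\mvaconst^{(d-1)}$ as a disjunction of depth-$d$ attributive constituents asserting that particular kind exists. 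One small point: in the existence proof you should also note that the equality in the proposition is stated as $=$ rather than $\equiv$, but what one actually proves (and what Hintikka proves) is logical equivalence; the paper is slightly loose here.
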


Any $\mvconst^{(d+e)}[y_1, \dots, y_k] \in \expand(e, \mvconst^{(d)}[y_1, \dots, y_k])$ is said to \emph{refine} or is a \emph{refinement} of $\mvconst^{(d)}[y_1, \dots, y_k]$.\footnote{The original terminology that Hintikka uses is \emph{subordinate}. We prefer the term refinement because it evokes the intuition that $\mvconst^{(d+e)}[y_1, \dots, y_k] \in \expand(e, \mvconst^{(d)}[y_1, \dots, y_k])$ describes the possibility described by $\mvconst^{(d)}[y_1, \dots, y_k]$ in finer detail.} We write $\mvconst^{(d)}[y_1, \dots, y_k] \expandsinto \mvconst^{(d+e)}[y_1, \dots, y_k]$ when $\mvconst^{(d+e)}[y_1, \dots, y_k]$ refines $\mvconst^{(d)}[y_1, \dots, y_k]$. Let
\[
\consts[y_1, \dots, y_k] \eqdef \bigcup_{d \in \N} \consts^{(d)}[y_1, \dots, y_k] \,.
\]
Then the refinement relation $\expandsinto: \consts[y_1, \dots, y_k] \times \consts[y_1, \dots, y_k] \rightarrow \two$ is a partial order and $(\consts[y_1, \dots, y_k], \expansion)$ is a poset.

It is well-known that validity of first-order formulas is undecidable. Consequently, the consistency of constituents in a dnf is undecidable. There is a weaker notion called \emph{trivial inconsistency} that is decidable. There are several notions of trivial inconsistency~\citep[\eg, see][]{hintikka1973logic,nelte1997formulas}, although the exact form is not important for our purposes.
\begin{proposition}[Completeness]
An attributive constituent is inconsistent iff all of its expansions at some depth are trivially inconsistent~\citep{hintikka1965distributive}.
\end{proposition}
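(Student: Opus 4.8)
The plan is to prove the two directions of the biconditional separately, the forward one being immediate and the converse carrying all the weight. For the forward direction, suppose all expansions of an attributive constituent $\mvaconst^{(d)}$ at depth $d+e$ are trivially inconsistent. By the Expansion clause of Proposition~\ref{prop:prelim:prop} (which holds for attributive constituents as well), $\mvaconst^{(d)}$ is logically equivalent to the disjunction of all of its refinements at depth $d+e$; a trivially inconsistent formula is in particular inconsistent (this being the defining requirement on any admissible notion of trivial inconsistency), and a finite disjunction of inconsistent formulas is inconsistent, so $\mvaconst^{(d)}$ is inconsistent. I would prove the converse in contrapositive form: if for every $e \in \N$ at least one refinement of $\mvaconst^{(d)}$ at depth $d+e$ fails to be trivially inconsistent, then $\mvaconst^{(d)}$ is satisfiable, hence consistent (satisfiable sentences being consistent).

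To exploit this hypothesis, organize the refinements of $\mvaconst^{(d)}$ into a tree $T$ whose root is $\mvaconst^{(d)}$ and in which the children of a node $\eta$ of depth $d+e$ are the refinements of $\eta$ of depth $d+e+1$ that are not trivially inconsistent, with $\eta$ declared a leaf as soon as $\eta$ is itself trivially inconsistent. Using that every standard notion of trivial inconsistency is preserved under refinement — so that, contrapositively, the ancestors of a non-trivially-inconsistent constituent are non-trivially inconsistent — the non-trivially-inconsistent refinement promised at depth $d+e$ then has all of its ancestors non-trivially inconsistent and hence appears as a node of $T$ at level $e$; so $T$ has nodes at every level. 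Since each constituent has only finitely many refinements at the next depth, $T$ is finitely branching, so by König's lemma it contains an infinite branch $\mvaconst^{(d)} = \mvaconst_0 \expandsinto \mvaconst_1 \expandsinto \mvaconst_2 \expandsinto \cdots$, with each $\mvaconst_i$ of depth $d+i$ and none trivially inconsistent.

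From this branch I would build a model in term-model style, using the equivalent presentation of an attributive constituent recalled in the excerpt: $\mvaconst^{(d)}_{r,s}[y_1,\dots,y_k]$ asserts that $y_1,\dots,y_k$ realize a fixed quantifier-free description, that a witness $x$ exists for each ``permitted'' depth-$(d-1)$ extension (those $(r',s')$ with $s(r',s')=1$), and that every $x$ realizes one of the permitted extensions. Reading each $\mvaconst_i$ along the branch this way yields, coherently across all finite depths — coherence being exactly what $\expandsinto$ guarantees — a recursive specification of which tuples of individuals may exist and which witnesses must exist. Concretely, build a finitely-branching tree of ``individual descriptions'': a node at level $k$ records a tuple $(a_1,\dots,a_k)$ together with the depth-$0$ description it is to realize, and its children are the permitted one-individual extensions read off the $s$-data along the branch; let the domain $D$ consist of these descriptions (identifying ones forced to coincide) and interpret each predicate on a tuple from $D$ according to the attached quantifier-free data. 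A routine induction on $i$ then shows $\cM \vDash \mvaconst_i$ for all $i$ — the existential conjuncts hold because a witness was inserted for every permitted kind, the universal conjuncts hold because every tuple over $D$ entered the construction at some finite level and so realizes a permitted kind — and in particular $\cM \vDash \mvaconst_0 = \mvaconst^{(d)}$, contradicting its inconsistency.

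The main obstacle is precisely this model construction together with the verification that $\cM \vDash \mvaconst^{(d)}$; it is the Hintikka-style analogue of the Henkin term-model / cut-free completeness argument and carries the real content of the proposition. The delicate points are: defining the interpretation of the predicates on $D$ so that it is well defined and simultaneously consistent with the descriptions attached at all levels (descriptions at different depths must agree on the finite information they share, which follows from $\expandsinto$ but must be made explicit); handling the inclusive treatment of quantifiers, including an empty domain at the root and the interplay with any constants occurring among the $y_i$; and ensuring that no ``unexpected'' kind of individual is ever introduced into $D$, since that is exactly what makes the universally quantified conjuncts true. By contrast, the finitely-branching/König step and the forward direction are routine.
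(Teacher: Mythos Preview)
Your approach is correct and matches the paper's own treatment, which does not give a proof but only a one-line sketch citing \cite{hintikka1965distributive}: ``show that a consistent attributive constituent always has an expansion that is not trivially inconsistent; the result follows from an application of K\H{o}nig's tree lemma.'' Your K\H{o}nig-branch plus term-model construction is exactly the standard Hintikka argument being invoked; the only slip is terminological---you have swapped the labels ``forward'' and ``converse'' (the direction \emph{inconsistent $\Rightarrow$ all expansions at some depth trivially inconsistent} is the hard one, and that is the one you correctly prove via K\H{o}nig and the model construction).
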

\noindent Thus, an inconsistency at depth $d$ will eventually manifest itself as trivially inconsistent at some depth $e \geq d$, although the depth $e$ is not recursively computable.\footnote{There are notions of trivial inconsistency that are not strong enough to ensure completeness as noted by~\citet{nelte1997formulas}.} The main idea is show that a consistent attributive constituent always has an expansion that is not trivially inconsistent; the result follows from an application of K\H{o}nig's tree lemma.

\section{Representing Beliefs in Mathematical Knowledge}
\label{sec:repr}

In this section, we introduce a representation that assigns probabilities to the exhaustive and mutually exclusive possibilities of first-order logic that we have just seen to be constituents. More concretely, we formalize a method for assigning weights to constituents and an appropriate Bayesian update following the idea of assigning weights to constituents described by~\citet[pg. 274--282]{hintikka1970surface} (Section~\ref{subsec:repr:ht}). The representation induces a probability distribution on the validity of first-order statements that does not enforce that logically equivalent statements are assigned the same probability so that the beliefs of agents that are not logically omniscient\footnote{The problem of logical omniscience is an issue encountered in epistemic logic~\citep[\eg, see][]{sim1997epistemic,fagin2004reasoning,halpern2011dealing} where we reason about the knowledge of agents. One solution for weakening logical omniscience involves modeling \emph{impossible possible worlds}, \ie, modeling worlds that an agent considers possible but are eventually revealed to be impossible. Hintikka argues that dnfs provide a model of impossible possible worlds---an impossible possible world is an inconsistent constituent that is not trivially inconsistent at some depth and revealed to be trivially inconsistent at a later depth~\citep[by completeness,][]{hintikka1979impossible}. Thus the application of dnfs to address the problem of logical omniscience has also been hinted at by Hintikka.} can be encoded (Section~\ref{subsec:repr:prob}). At the end of the section, we identify an embedding space---a Hilbert space---for first-order statements based on the probabilistic representation where mutual exclusion in logic translates into orthogonality in the space (Section~\ref{subsec:repr:embed}).

\begin{remark}[Simple first-order languages]
For simplicity, we restrict attention to first-order languages with a finite number of predicates, no function symbols, and no constants unless stated otherwise.\footnote{As a reminder, the effect of equality is to give an exclusive interpretation of quantifiers. All the results that hold on constituents in first-order logic without equality also hold on constituents in first-order logic with equality with the appropriate modifications.  

Note that functions can be encoded as predicates in first-order logic with equality. 

Observe also that the current setting actually admits a finite number of constants. More concretely, we can associate each constant $c$ with a monadic predicate $\mvpred_c$ where the interpretation of $\mvpred_c(x)$ is ``$x$ is the constant $c$". Any formula $\mvform[c]$ that refers to the constant $c$ can thus be translated to $(\exists x) \mvpred_c(x) \land \mvform[x]$ where $x$ is not free in $\mvform$ and we add the additional axiom $(\exists x) \mvpred_c(x)$ to the theory. Hence, we are roughly working with first-order languages with a finite number of predicates, functions, and constants.} The constant-free restriction simplifies the form of constituents and dnfs we will need to consider. As a reminder, every first-order formula $\mvform[y_1, \dots, y_k]$ with $k$ free individual terms $y_1, \dots, y_k$ has a dnf of depth $d$ constituents. In a constant-free setting, the free individual terms $y_1, \dots, y_k$ are all variables. Thus the original formula is equivalent to its universal closure $(\forall y_1) \dots (\forall y_k) \mvform[y_1, \dots, y_k]$, which is a formula with $0$ free individual terms (\ie, a sentence). Consequently, we only need to consider the set of constituents $\consts^{(d)}_0[]$, abbreviated $\consts^{(d)}$, of depth $d$ with $0$ free individual terms. We have that $\consts^{(0)} \eqdef \set{\true}$ by convention.
\end{remark}

\subsection{Hintikka Trees}
\label{subsec:repr:ht}

We formalize a representation that assigns probabilities to constituents in this section. As the set of constituents at any depth exhaust and describe all mutually exclusive possibilities at that depth, the idea behind the representation is the standard one: list all possibilities and assign weights to them that sum to one. We construct the representation in two parts. First, we introduce a \emph{refinement tree} that keeps track of the refinement relation because constituents of different depths do not denote mutually exclusive possibilities when they are related according to the refinement partial order (Section~\ref{subsubsec:repr:ht:refine}). Second, we describe how to assign weights to the refinement tree which completes the static representation of an agent's beliefs (Section~\ref{subsubsec:repr:ht:weights}). After we introduce the representation, we introduce dynamics via a renormalization operator which can be interpreted as a form of Bayesian update for beliefs (Section~\ref{subsubsec:repr:ht:renorm}).

\begin{figure}[t]
    \centering
    \begin{tikzpicture}[edge from parent/.style={draw,-latex},level/.style={sibling distance=60mm/#1}]
\node [] (z){$\mvconst^{(0)}_\epsilon$}
child {
  node (a) {$\mvconst^{(1)}_{1}$}
  child {
    node (b) {$\mvconst^{(2)}_{11}$}
      child { node (c) {$\vdots$} } 
      child { node (d) {$\vdots$} }
  }
  child {
    node (g) {$\mvconst^{(2)}_{1K_2}$}
      child { node (e) {$\vdots$} }
      child { node (f) {$\vdots$} }
  }
}
child {
  node (j) {$\mvconst^{(1)}_{K_1}$}
  child {
    node (k) {$\mvconst^{(2)}_{K_1 1}$}
      child { node (m) {$\vdots$} } 
      child { node (n) {$\vdots$} }
  }
  child {
    node (l) {$\mvconst^{(2)}_{K_1 K_3}$}
      child { node (o) {$\vdots$} }
      child { node (p) {$\vdots$} }
  }
}
;
\path (a) -- (j) node [midway] {\dots};
\path (b) -- (g) node [midway] {\dots};
\path (k) -- (l) node [midway] {\dots};
\path (c) -- (d) node [midway] {\dots};
\path (e) -- (f) node [midway] {\dots};
\path (m) -- (n) node [midway] {\dots};
\path (o) -- (p) node [midway] {\dots};
\end{tikzpicture}
    \caption{A depiction of a refinement tree $(\consts, E)$. Each vertex represents a constituent and each edge indicates a refinement relation. For example, the constituent $\mvconst^{(2)}_{11}$ occurs in the expansion of $\mvconst^{(1)}_{1}$. We assume that each constituent's set of refinements is an indexed set so that $\mvconst^{(2)}_{11}$ indicates that we take the first refinement of $\mvconst^{(0)}_\epsilon$ and then take the first refinement of $\mvconst^{(1)}_1$.}
    \label{fig:repr:et}
\end{figure}

\subsubsection{Refinement tree}
\label{subsubsec:repr:ht:refine}

Let the set of vertices be the set of constituents of any depth $\consts$. Let the set of edges $\xi \eqdef \set{ (\mvconst^{(d)}, \mvconst^{(d+1)}) \ST \mvconst^{(d)} \expandsinto \mvconst^{(d+1)}}$ consist of the refinement relation omitting reflexive relations. Then $(\consts, \xi)$ is a graph that encodes the refinement relation (minus the reflexive edges).

The graph $(\consts, \xi)$ is not a tree because the expansions of two distinct constituents can share refining constituents, although the shared constituents are necessarily inconsistent.
\begin{proposition}
Suppose $\mvconst^{(d)}_i \neq \mvconst^{(d)}_j$. If $\mvconst^{(d+e)} \in \expand(e, \mvconst^{(d)}_i) \cap \expand(e, \mvconst^{(d)}_j)$, then $\mvconst^{(d+e)}$ is inconsistent.
\end{proposition}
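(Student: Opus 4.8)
The plan is to reduce everything to the Mutual exclusion and Expansion clauses of Proposition~\ref{prop:prelim:prop}, together with the equivalence of consistency and satisfiability. The key observation is that being a refinement entails logical implication in the expected direction: if $\mvconst^{(d+e)} \in \expand(e, \mvconst^{(d)})$, then $\mvconst^{(d+e)} \implies \mvconst^{(d)}$, because by the Expansion property $\mvconst^{(d)}$ is logically equivalent to the disjunction of all constituents in $\expand(e, \mvconst^{(d)})$, and every disjunct implies the disjunction.

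Given this, the argument is short. Suppose $\mvconst^{(d+e)} \in \expand(e, \mvconst^{(d)}_i) \cap \expand(e, \mvconst^{(d)}_j)$ with $\mvconst^{(d)}_i \neq \mvconst^{(d)}_j$. Applying the observation twice gives $\mvconst^{(d+e)} \implies \mvconst^{(d)}_i$ and $\mvconst^{(d+e)} \implies \mvconst^{(d)}_j$, hence $\mvconst^{(d+e)} \implies \mvconst^{(d)}_i \land \mvconst^{(d)}_j$. But $\mvconst^{(d)}_i$ and $\mvconst^{(d)}_j$ are distinct constituents of the same depth $d$, so by Mutual exclusion $\mvconst^{(d)}_i \implies \lnot \mvconst^{(d)}_j$, i.e.\ $\mvconst^{(d)}_i \land \mvconst^{(d)}_j$ is unsatisfiable. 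Any formula implying an unsatisfiable formula is itself unsatisfiable, so $\mvconst^{(d+e)}$ is unsatisfiable, and therefore inconsistent by completeness of first-order logic (satisfiable iff consistent, as recalled in Section~\ref{subsec:prelim:notation}).

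There is essentially no hard step here; the only thing worth being careful about is the direction of the implication induced by refinement (refinements are \emph{stronger}, not weaker, than what they refine), which is exactly what the Expansion clause delivers. One could alternatively phrase the whole proof semantically: a structure satisfying $\mvconst^{(d+e)}$ would, via the two expansions, have to satisfy both $\mvconst^{(d)}_i$ and $\mvconst^{(d)}_j$, contradicting that the depth-$d$ constituents partition the possibilities; but invoking Proposition~\ref{prop:prelim:prop} directly is cleaner and avoids re-deriving the partition property. If one wants to be fully explicit, it is worth noting that this also shows the shared refinement is trivially inconsistent at depth $d+e$ or some later depth by the Completeness proposition, though that strengthening is not needed for the statement as given.
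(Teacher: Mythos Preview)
Your proof is correct and is essentially the paper's argument: the paper phrases it semantically (assume a model of $\mvconst^{(d+e)}$, observe it must satisfy both $\mvconst^{(d)}_i$ and $\mvconst^{(d)}_j$, contradict mutual exclusion), which is exactly the alternative you sketch at the end, while your main write-up just rephrases the same step via implications derived from the Expansion clause. There is no substantive difference.
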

\begin{proof}
Assume additionally for the sake of contradiction that $\mvconst^{(d+e)}$ is consistent. Then there exists a structure $\cM$ such that $\cM \vDash \mvconst^{(d+e)}$. Thus we have that $\cM \vDash \mvconst^{(d)}_i$ and $\cM \vDash \mvconst^{(d)}_j$ (because $\mvconst^{(d+e)} \in \expand(e, \mvconst^{(d)}_i) \cap \expand(e, \mvconst^{(d)}_j)$ by another assumption), which contradicts that $\mvconst^{(d)}_i$ and $\mvconst^{(d)}_j$ are mutually incompatible (by exclusivity in Proposition~\ref{prop:prelim:prop}).
\end{proof}

By the proposition above, we can associate any shared constituent that is a refinement of two parent constituents to either parent constituent and disassociate it with the other without changing the consistency of either parent constituent. In other words, we can remove one edge. We can use this observation to convert $(\consts, \xi)$ into a tree. We call $(\consts, \xi_R)$ a \emph{refinement tree} where $\xi_R$ is the set of edges obtained after the pruning procedure described above is applied. Throughout the rest of this paper, we will assume that we have chosen one such pruning and will write $(\consts, \xi_R)$ as $(\consts, \xi)$. We will also overload $\expandsinto$ to refer to the pruned refinement partial order. We have the following obvious relationships between the (pruned) refinement partial order and (pruned) refinement tree.
\begin{proposition}\hfill
\begin{enumerate}[noitemsep]
    \item $\mvconst^{(d)} \expandsinto \mvconst^{(d+1)}$ iff $\mvconst^{(d+1)} \in \child(\mvconst^{(d)})$.
    \item $\mvconst^{(d)} \expandsinto \mvconst^{(e)}$ iff $\mvconst^{(d)} = \mvconst^{(e)}$ or $\mvconst^{(d)} \in \anc(\mvconst^{(e)})$ (equivalently $\mvconst^{(e)} \in \desc(\mvconst^{(d)})$).
\end{enumerate}
\end{proposition}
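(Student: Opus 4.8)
The plan is to reduce both claims to one structural observation: the refinement relation is \emph{graded by quantifier depth}. By the very definition $\xi = \set{(\mvconst^{(d)}, \mvconst^{(d+1)}) \ST \mvconst^{(d)} \expandsinto \mvconst^{(d+1)}}$, every edge of $\xi$, and hence of the pruned edge set $\xi_R$, joins a depth-$d$ constituent to a depth-$(d+1)$ one. First I would record what this forces on the tree $(\consts, \xi_R)$, which is rooted at the unique depth-$0$ constituent $\mvconst^{(0)}_\epsilon = \true$ (recall $\consts^{(0)} = \set{\true}$). Since every edge strictly raises depth and there is only one constituent of depth $0$, the unique root-to-$v$ path cannot revisit depth $0$ and so has strictly increasing depths; in particular it has length equal to the depth of $v$, and each of its edges is oriented from the lower-depth endpoint to the higher-depth endpoint. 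Reading off the definitions of parent, child, and ancestor in a tree, this gives: $\child(\mvconst^{(d)}) = \set{ v \ST (\mvconst^{(d)}, v) \in \xi_R }$ with each such $v$ of depth $d+1$; that $(u,w) \in \xi_R$ implies $u \in \anc(w)$; and that $\anc$ is transitive (the root path to a vertex restricts to the root path to any vertex it passes through).

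For Part 1, I would use that the overloaded (pruned) relation $\expandsinto$ is by construction the reflexive–transitive closure of $\xi_R$. If $\mvconst^{(d)} \expandsinto \mvconst^{(d+1)}$, choose a $\xi_R$-chain $\mvconst^{(d)} = w_0, \dots, w_m = \mvconst^{(d+1)}$; by the grading each step raises depth by exactly $1$, so $m = 1$, i.e.\ $(\mvconst^{(d)}, \mvconst^{(d+1)}) \in \xi_R$, which by the previous paragraph is precisely $\mvconst^{(d+1)} \in \child(\mvconst^{(d)})$. Conversely, if $\mvconst^{(d+1)} \in \child(\mvconst^{(d)})$ then $(\mvconst^{(d)}, \mvconst^{(d+1)}) \in \xi_R$ is a one-edge chain, whence $\mvconst^{(d)} \expandsinto \mvconst^{(d+1)}$.

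For Part 2, I would again pass to $\xi_R$-chains. Given $\mvconst^{(d)} \expandsinto \mvconst^{(e)}$, take such a chain $\mvconst^{(d)} = w_0, \dots, w_m = \mvconst^{(e)}$: if $m = 0$ then $\mvconst^{(d)} = \mvconst^{(e)}$, and if $m \geq 1$ then by Part 1 each $w_{i+1} \in \child(w_i)$, hence $w_i \in \anc(w_{i+1})$, and transitivity of $\anc$ yields $\mvconst^{(d)} \in \anc(\mvconst^{(e)})$. Conversely, if $\mvconst^{(d)} \in \anc(\mvconst^{(e)})$, the segment of the (depth-monotone) root path from $\mvconst^{(d)}$ to $\mvconst^{(e)}$ is a $\xi_R$-chain, giving $\mvconst^{(d)} \expandsinto \mvconst^{(e)}$; the case $\mvconst^{(d)} = \mvconst^{(e)}$ is reflexivity. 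Finally, $\mvconst^{(d)} \in \anc(\mvconst^{(e)}) \iff \mvconst^{(e)} \in \desc(\mvconst^{(d)})$ is immediate from the definitions of $\anc$ and $\desc$.

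There is no genuinely hard step here; the content is bookkeeping once the depth-grading is isolated. The two points that need care are (i) pinning down the root of the refinement tree as the depth-$0$ constituent $\true$ and noting that, because all edges increase depth, root paths are depth-monotone — this is what lets us identify the tree's parent/child/ancestor structure with the edge set $\xi_R$ — and (ii) unpacking that the "pruned refinement partial order" written $\expandsinto$ is exactly the reflexive–transitive closure of $\xi_R$, so that membership in it is always witnessed by a finite $\xi_R$-chain. I would emphasize that we do \emph{not} need the separate (true) fact that the original unpruned order $\expansion$ factors through consecutive depths, since the proposition concerns only the pruned order carried by the tree $(\consts, \xi_R)$.
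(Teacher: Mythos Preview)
Your proposal is correct and is essentially the same approach as the paper, which proves this proposition with the single word ``Straightforward.'' You have simply spelled out in detail the depth-grading bookkeeping that the paper leaves implicit.
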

\begin{proof}
Straightforward.
\end{proof}

\begin{figure}[t]
    \centering
    \newcommand{\drawline}[4]{
    \pgfmathsetmacro \angle {30}
    \pgfmathsetmacro \xd {{2/3*cos(\angle)}}
    \pgfmathsetmacro \yd {{2/3*sin(\angle)}}
    \pgfmathsetmacro \x {{#1-1+(#2-1)*(\xd)}}
    \pgfmathsetmacro \y {{#3-1+(#2-1)*(\yd)}}

    \draw[#4] (\x,\y+1) -- (\x+\xd,\y+1+\yd);
}

\newcommand{\drawhor}[3]{
  \draw[#3] (\x,\y) -- (\x+1,\y);
}

\newcommand{\drawvert}[3]{
  \draw[#3] (\x,\y) -- (\x,\y+1);
}

\newcommand{\drawlinea}[4]{
    \pgfmathsetmacro \angle {30}
    \pgfmathsetmacro \xd {{2/3*cos(\angle)}}
    \pgfmathsetmacro \yd {{2/3*sin(\angle)}}
    \pgfmathsetmacro \x {{#1-1+(#2-1)*(\xd)}}
    \pgfmathsetmacro \y {{#3-1+(#2-1)*(\yd)}}

    \draw[#4, very thick] (\x,\y+1) -- (\x+\xd,\y+1+\yd);
}

\newcommand{\drawslant}[4]{
    \pgfmathsetmacro \angle {30}
    \pgfmathsetmacro \xd {{2/3*cos(\angle)}}
    \pgfmathsetmacro \yd {{2/3*sin(\angle)}}
    \pgfmathsetmacro \x {{#1-1+(#2-1)*(\xd)}}
    \pgfmathsetmacro \y {{#3-1+(#2-1)*(\yd)}}

    \draw[fill=#4, fill opacity=0.5] (\x,\y+1) -- (\x+\xd,\y+1+\yd) -- (\x+1+\xd,\y+1+\yd) -- (\x+1,\y+1) -- cycle;
}

\newcommand{\drawbox}[4]{
    \pgfmathsetmacro \angle {30}
    \pgfmathsetmacro \xd {{2/3*cos(\angle)}}
    \pgfmathsetmacro \yd {{2/3*sin(\angle)}}
    \pgfmathsetmacro \x {{#1-1+(#2-1)*(\xd)}}
    \pgfmathsetmacro \y {{#3-1+(#2-1)*(\yd)}}

    \draw[fill=#4, fill opacity=0.5] (\x,\y) -- (\x+\xd,\y+\yd) -- (\x+1+\xd,\y+\yd) -- (\x+1,\y) -- cycle;

    \draw[fill=#4, fill opacity=0.5] (\x,\y+1) -- (\x+\xd,\y+1+\yd) -- (\x+\xd,\y+\yd) -- (\x,\y) -- cycle;

    \draw[fill=#4, fill opacity=0.5] (\x,\y) -- (\x+1,\y) -- (\x+1,\y+1) -- (\x,\y+1) -- cycle;
    
    \draw[fill=#4, fill opacity=0.5] (\x,\y+1) -- (\x+\xd,\y+1+\yd) -- (\x+1+\xd,\y+1+\yd) -- (\x+1,\y+1) -- cycle;
    
    \draw[fill=#4, fill opacity=0.5] (\x+1,\y+1) -- (\x+1+\xd,\y+1+\yd) -- (\x+1+\xd,\y+\yd) -- (\x+1,\y) -- cycle;
}

\begin{tabular}{p{2cm}p{4.6cm}p{4.6cm}}
    \centering
    {\begin{tikzpicture}
    \drawlinea{0}{0}{0}{blue!25}
    \drawline{0}{1}{0}{black}
    
    \end{tikzpicture}} &
    {\begin{tikzpicture}
\drawslant{0}{0}{0}{blue!25}
\drawslant{0}{1}{0}{white}

\drawslant{1}{0}{0}{white}
\drawslant{1}{1}{0}{white}

\drawslant{2}{0}{0}{blue!25}
\drawslant{2}{1}{0}{white}
\end{tikzpicture}} &
    {\begin{tikzpicture}
\drawbox{0}{1}{0}{white}
\drawbox{1}{1}{0}{white}
\drawbox{2}{1}{0}{white}

\drawbox{0}{1}{1}{white}
\drawbox{1}{1}{1}{white}
\drawbox{2}{1}{1}{white}

\drawbox{0}{1}{2}{white}
\drawbox{1}{1}{2}{white}
\drawbox{2}{1}{2}{white}

\drawbox{0}{1}{3}{white}
\drawbox{1}{1}{3}{white}
\drawbox{2}{1}{3}{white}

\drawbox{1}{0}{0}{white}

\drawbox{1}{0}{1}{white}
\drawbox{2}{0}{1}{white}

\drawbox{0}{0}{2}{white}
\drawbox{1}{0}{2}{white}
\drawbox{2}{0}{2}{white}

\drawbox{1}{0}{3}{white}

\drawbox{0}{0}{0}{blue!25}
\drawbox{2}{0}{0}{blue!25}

\drawbox{0}{0}{1}{blue!25}

\drawbox{0}{0}{3}{blue!25}
\drawbox{2}{0}{3}{blue!25}
\end{tikzpicture}} \\
$\consts^{(1)}$ & \quad\quad\quad\quad $\consts^{(2)}$ & \quad\quad\quad\quad $\consts^{(3)}$
\end{tabular}
    \caption{An illustration of the set of depth $d = 1$, $2$, or $3$ constituents (\ie, the universe of possibilities by depth) where each cell corresponds to a constituent, the dimension of the cell corresponds to the depth of the constituents, and the refinement relation is encoded as projection. Cells colored light blue indicate that the associated constituent is consistent and white cells indicate that the associated constituent is inconsistent.}
    \label{fig:repr:embed:hd}
\end{figure}

A path between constituents $\mvconst^{(d)}$ and $\mvconst^{(d + e)}$ is the sequence of constituents and their refinements $\mvconst^{(d)} \expandsinto \mvconst^{(d+1)} \expandsinto \dots \expandsinto \mvconst^{(d+e)}$. Because there is only one path between any two vertices in a tree, we can identify a constituent (\ie, a node in a refinement tree) with the path taken through a refinement tree starting at the root node $\mvconst^{(0)}$ to reach it.

Figure~\ref{fig:repr:et} gives an illustration of a refinement tree where constituents are indexed by their paths. The root constituent $\mvconst^{(0)}_\epsilon$ of the tree is indexed by the empty path $\epsilon$. Figure~\ref{fig:repr:embed:hd} gives another illustration of a refinement tree.

\subsubsection{Assigning weights}
\label{subsubsec:repr:ht:weights}

We assign weights to constituents by attaching a weight to each node of the refinement tree. Because the assignment of weights needs to respect the refinement partial order, we will need a notion of coherence between the weight assignments to adjacent levels of the refinement tree.
\begin{definition}
A \emph{Hintikka tree} (HT) is a tuple $(\consts, \xi, \mvhintree)$ where $(\consts, \xi)$ is a refinement tree and $\mvhintree: \consts \rightarrow [0, 1]$ is a function on constituents satisfying
\begin{description}[noitemsep]
    \item[Unitial initial beliefs] $\mvhintree(\mvconst^{(0)}) = 1$; and
    \item[Coherently constructed] $\mvhintree(\mvconst^{(d)}) = \sum_{\mvconst^{(d+1)} \expansion \mvconst^{(d)}} \mvhintree(\mvconst^{(d+1)})$.\footnote{The method of assigning weights in a HT is slightly different than the one described in prose by~\citet[pg. 274--282]{hintikka1970surface}. In particular, Hintikka combines the statics and dynamics of the weight assignment whereas we separate them out and only describe the statics here. We will discuss the dynamics in Section~\ref{subsubsec:repr:ht:renorm}.}
\end{description}
We will abbreviate a HT $(\consts, \xi, \mvhintree)$ as $\mvhintree$. We write $\mathbf{HT}(\cL)$ for the set of HTs defined with respect to the the first-order simple language $\cL$.
\end{definition}
\noindent The first condition states that we start off with unitial beliefs. The second condition enforces that the probability that we assign a constituent $\mvconst^{(d)}$ is contained entirely within the subtree of the refinement tree rooted at $\mvconst^{(d)}$. Hence the assignment of weights is conserved across depth. Observe that the assignment of weights to constituents is not constrained by the ``fraction" of models that the constituents are satisfiable in. If it were, then the induced distribution on the validity of first-order statements would enforce logical omniscience.

\begin{proposition}[Normalization]
The beliefs assigned to constituents at each depth $d \in \N$ by a HT $\mvhintree$ are normalized:
\[
\sum_{\mvconst^{(d)} \in \consts^{(d)}} \mvhintree^d(\mvconst^{(d)}) = 1 \,.
\]
\label{prop:repr:norm}
\end{proposition}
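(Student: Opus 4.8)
The plan is to prove the normalization identity by induction on the depth $d$, using the two defining conditions of a Hintikka tree together with the basic combinatorial fact that the refinement tree partitions each level into the children-sets of the level above. The base case $d = 0$ is immediate from the ``unitial initial beliefs'' condition: $\consts^{(0)} = \set{\true}$ by convention, so the sum over $\consts^{(0)}$ is just $\mvhintree(\mvconst^{(0)}) = 1$.

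For the inductive step, assume $\sum_{\mvconst^{(d)} \in \consts^{(d)}} \mvhintree(\mvconst^{(d)}) = 1$. I would then rewrite the sum over depth-$(d+1)$ constituents by grouping each $\mvconst^{(d+1)}$ under its unique parent in the refinement tree. This is the key structural point: in a refinement tree every constituent of depth $d+1$ has exactly one parent of depth $d$ (this is precisely what the pruning of $(\consts, \xi)$ into a tree buys us, via the earlier proposition that shared refinements are inconsistent and can be reassigned to a single parent), and conversely the children-sets $\child(\mvconst^{(d)})$ over all $\mvconst^{(d)} \in \consts^{(d)}$ form a partition of $\consts^{(d+1)}$. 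Hence
\[
\sum_{\mvconst^{(d+1)} \in \consts^{(d+1)}} \mvhintree(\mvconst^{(d+1)})
= \sum_{\mvconst^{(d)} \in \consts^{(d)}} \ \sum_{\mvconst^{(d+1)} \expansion \mvconst^{(d)}} \mvhintree(\mvconst^{(d+1)})
= \sum_{\mvconst^{(d)} \in \consts^{(d)}} \mvhintree(\mvconst^{(d)}),
\]
where the inner sum collapses by the ``coherently constructed'' condition, and the outer sum equals $1$ by the inductive hypothesis. This closes the induction.

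The only genuinely substantive point — and the step I expect to be the main obstacle to state cleanly rather than to prove — is justifying that the children-sets partition each level, i.e. that every depth-$(d+1)$ constituent refines exactly one depth-$d$ constituent in the pruned tree. Existence of at least one parent follows from the Expansion property of Proposition~\ref{prop:prelim:prop} (every depth-$d$ constituent expands into depth-$(d+1)$ constituents, and every depth-$(d+1)$ constituent arises this way since the depth-$d$ constituents are exhaustive); uniqueness is exactly the tree property of $(\consts, \xi)$ established in the refinement-tree construction. One should also note that the reindexing of the double sum requires no convergence subtleties because $\consts^{(d)}$ and each $\child(\mvconst^{(d)})$ are finite (the language has finitely many predicates and no function symbols, so there are finitely many constituents at each depth), and all terms are nonnegative. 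With these observations in place the proof is a short two-line induction.
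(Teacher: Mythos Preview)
Your proposal is correct and follows essentially the same approach as the paper: induction on $d$, with the base case from unitial initial beliefs and the inductive step obtained by regrouping the depth-$(d+1)$ sum by parent and applying the coherently-constructed condition. Your additional remarks justifying the partition of $\consts^{(d+1)}$ by children-sets and the finiteness of the sums simply make explicit what the paper's proof leaves as ``a rearrangement.''
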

\begin{proof}
We proceed by induction on $d$. The base case follows from unitial initial beliefs. In the inductive case, we have to show that
\[
\sum_{\mvconst^{(d+1)} \in \consts^{(d+1)}} \mvhintree(\mvconst^{(d+1)}) = 1 \,.
\]
We have that
\begin{align*}
\sum_{\mvconst^{(d+1)} \in \consts^{(d+1)}} \mvhintree(\mvconst^{(d+1)}) & = \sum_{\mvconst^{(d)} \in \consts^{(d)}} \sum_{\mvconst^{(d+1)} \expansion \mvconst^{(d)}} \mvhintree(\mvconst^{(d+1)}) \\
& = \sum_{\mvconst^{(d)} \in \consts^{(d)}} \mvhintree(\mvconst^{(d)})
\end{align*}
where the first equality is a rearrangement and the second equality follows because $\mvhintree$ is coherently constructed. The result follows as we have $\sum_{\mvconst^{(d)} \in \consts^{(d)}} \mvhintree(\mvconst^{(d)}) = 1$ by the inductive hypothesis.
\end{proof}
\begin{proposition}[Infinite supported path]
For any HT $\mvhintree$, there is a chain of constituent $\mvconst^{(0)} \expandsinto \mvconst^{(1)} \expandsinto \mvconst^{(2)} \expandsinto \dots$ such that $\mvhintree(\mvconst^{(d)}) > 0$ for any $\mvconst^{(d)}$ in the chain.
\label{prop:repr:infpath}
\end{proposition}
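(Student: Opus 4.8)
The plan is to build the chain greedily from the root $\mvconst^{(0)}$ downward through the refinement tree, at each stage using the coherence condition to extract a positively weighted child. I would begin with $\mvconst^{(0)} \eqdef \mvconst^{(0)}_\epsilon$, which satisfies $\mvhintree(\mvconst^{(0)}) = 1 > 0$ by unitial initial beliefs. For the inductive step, suppose $\mvconst^{(d)}$ has been selected with $\mvhintree(\mvconst^{(d)}) > 0$. By the coherence condition in the definition of a HT,
\[
\mvhintree(\mvconst^{(d)}) = \sum_{\mvconst^{(d+1)} \expansion \mvconst^{(d)}} \mvhintree(\mvconst^{(d+1)}),
\]
which is a sum of nonnegative reals (since $\mvhintree$ takes values in $[0,1]$) indexed by the refinements $\mvconst^{(d+1)}$ of $\mvconst^{(d)}$ in the refinement tree. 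Since this sum equals $\mvhintree(\mvconst^{(d)}) > 0$, the index set is nonempty and at least one summand is strictly positive; I would pick any $\mvconst^{(d+1)}$ with $\mvconst^{(d)} \expandsinto \mvconst^{(d+1)}$ and $\mvhintree(\mvconst^{(d+1)}) > 0$ and append it to the chain. Iterating yields an infinite chain $\mvconst^{(0)} \expandsinto \mvconst^{(1)} \expandsinto \mvconst^{(2)} \expandsinto \dots$ every member of which has positive weight, which is exactly the claim.

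Alternatively, the same conclusion follows from K\H{o}nig's tree lemma. Let $T \subseteq \consts$ be the set of constituents $\mvconst^{(d)}$ with $\mvhintree(\mvconst^{(d)}) > 0$. Coherence gives $\mvhintree(\mvconst^{(d)}) \geq \mvhintree(\mvconst^{(d+1)})$ whenever $\mvconst^{(d)} \expandsinto \mvconst^{(d+1)}$, so $T$ is closed under passing to refinement-predecessors and is therefore itself a subtree of the refinement tree rooted at $\mvconst^{(0)}$; it is finitely branching because $\consts^{(d)}$ is finite for every $d$ (the language has finitely many predicates and no function symbols or constants); and it is infinite because Proposition~\ref{prop:repr:norm} forces $\sum_{\mvconst^{(d)} \in \consts^{(d)}} \mvhintree(\mvconst^{(d)}) = 1$ at every depth, so $T$ meets every level. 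K\H{o}nig's lemma then produces an infinite branch of $T$, i.e., the desired chain.

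I do not expect a genuine obstacle here: the only thing that must be checked is that the greedy descent never gets stuck, which reduces to the elementary fact that a (finite, indeed convergent) sum of nonnegative reals with strictly positive total has a strictly positive term, and the nonemptiness of the set of refinements of $\mvconst^{(d)}$ that this argument also uses is itself immediate from that total being positive. The only mild subtlety worth flagging is the (harmless) appeal to a choice principle to select a positively weighted child at each step; this can be avoided entirely by fixing in advance an enumeration of $\consts$ and always taking the least positively weighted refinement.
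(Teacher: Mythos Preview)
Your primary argument is correct and essentially identical to the paper's: both proceed by induction on depth, invoking unitial initial beliefs for the base case and coherence for the inductive step to extract a positively weighted child. Your alternative via K\H{o}nig's lemma and your remarks on avoiding choice are sound extras, but the core proof matches the paper's almost verbatim.
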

\begin{proof}
We proceed by induction on $d$. The base case follows by unitial initial beliefs of $\mvhintree$. In the inductive case, we have that $\mvhintree(\mvconst^{(d)}) > 0$. The result follows as $\mvhintree$ is coherently constructed and $\mvconst^{(d)}$ has a finite number of children so there must exist a refinement $\mvconst^{(d+1)} \expansion \mvconst^{(d)}$ such that $\mvhintree(\mvconst^{(d+1)}) > 0$.
\end{proof}

\begin{figure}[t]
    \centering
    \begin{tikzpicture}[edge from parent/.style={draw,-latex},level/.style={sibling distance=60mm/#1}]
\node [] (z){$\mvhintree(\mvconst^{(0)}_\epsilon) = 1$}
child {
  node (a) {$\mvhintree(\mvconst^{(1)}_{a}) = 1/3$}
  child {
    node (b) {$\mvhintree(\mvconst^{(2)}_{c}) = 1/6$}
      child { node (c) {$\vdots$} } 
      child { node (d) {$\vdots$} }
  }
  child {
    node (g) {$\mvhintree(\mvconst^{(2)}_{d}) = 1/6$}
      child { node (e) {$\vdots$} }
      child { node (f) {$\vdots$} }
  }
}
child {
  node (j) {$\mvhintree(\mvconst^{(1)}_{b}) = 2/3$}
  child {
    node (k) {$\mvhintree(\mvconst^{(2)}_{e}) = 2/9$}
      child { node (m) {$\vdots$} } 
      child { node (n) {$\vdots$} }
  }
  child {
    node (l) {$\mvhintree(\mvconst^{(2)}_{f}) = 4/9$}
      child { node (o) {$\vdots$} }
      child { node (p) {$\vdots$} }
  }
}
;
\end{tikzpicture}
    \caption{A drawing of an example Hintikka tree (HT). Vertices with $0$ belief are not shown. Each level of the HT is normalized. Moreover, the probability assigned each subtree is conserved.}
    \label{fig:repr:htex}
\end{figure}

We end with several examples of HTs.
\begin{example}
Figure~\ref{fig:repr:htex} gives an illustration of an example HT. As required by the definition, beliefs across depth are coherent.
\end{example}
\begin{example}
A HT is an \emph{uninformative Hintikka tree} if $\mvhintree$ is a uniform distribution at every depth, \ie, $\mvhintree(\mvconst^{(d)}) = 1/|\consts^{(d)}|$ for any $\mvconst^{(d)} \in \consts^{(d)}$.
\end{example}
\begin{example}
A HT is a \emph{depth Hintikka tree} if $\mvhintree$ is constrained so that inconsistent constituents are assigned $0$.\footnote{The terminology is inspired by depth information~\citep{hintikka1970surface}. Observe that there are consistent constituents at every depth and that consistent constituents have consistent refinements by the constituent completeness theorem so that a depth HT is well-defined. For example, the sentence $((\exists x_1) \dots (\exists x_d) \mvform[x_1, \dots, x_d]) \lor \lnot ((\exists x_1) \dots (\exists x_d) \mvform[x_1, \dots, x_d])$ is logically valid at depth $d$.\label{footnote:repr:ht:wd}} Inconsistency is undecidable so that a depth HT is not computable. If a theorem proving agent represents mathematical knowledge with a depth HT, then the agent is logically omniscient. We have that $\vDash \mvform^{(d)}$ iff $\sum_{\mvconst^{(d)} \in \dnf(\mvform^{(d)})} \mvhintree(\mvconst^{(d)}) = 1$ for some depth HT $\mvhintree$.
\end{example}

A HT provides a \emph{static} representation of an agent's beliefs. Naturally, an agent may encounter a situation where it realizes that its beliefs need to be revised. For example, upon further inspection of all the expansions of a parent constituent, the agent may realize that they are all inconsistent so the belief in the parent constituent should be eliminated and redistributed to other constituents. Intuitively, this may occur because the increase in depth corresponds to the construction of an object (\ie, an introduction of an existential) and the consideration of this extra object changes the valuation of the consistency of the parent possibility. Indeed, such a situation arises from the constituent completeness theorem: inconsistent constituents are eventually revealed to be trivially inconsistent at some depth even if they are not trivially inconsistent at shallower depths. We turn our attention to the dynamics of belief revision in the representation now.

\subsubsection{Renormalization dynamics}
\label{subsubsec:repr:ht:renorm}

\citet[pg. 281]{hintikka1970surface} describes a method of redistributing weights assigned to a refinement tree when belief in a node and all of its descendants is lost. The intuition for the update follows Bayesian ``refute" and ``rescale" dynamics: when belief in a node and all of its descendants is eliminated so that those possibilities are ``refuted", the beliefs in the smallest subtree containing that node that still has positive weight are ``rescaled" appropriately. In this section, we formalize this intuition as a \emph{renormalization} operation. Towards this end, we will specify (1) which constituents to redistribute beliefs to and (2) the amount of belief to redistribute to those constituents.

\paragraph{Part one of renormalization}
We start with the first task and begin by identifying which constituents to redistribute beliefs to when we discontinue beliefs in $\mvconst^{(d)}_-$ in a HT $\mvhintree$. Define the function $\suppo_{\mvhintree, \mvconst^{(d)}_-}: \consts \rightarrow \two$ as (1) $\suppo_{\mvhintree, \mvconst^{(d)}_-}(\mvconst^{(e)}) = \true$ if there is some $\mvconst^{(e)} \expandsinto \mvconst^{(e+1)} \expandsinto \dots \expandsinto \mvconst^{(d)}$ such that $ \mvhintree(\mvconst^{(n)}) > 0$ for $e \leq n \leq d$ and $\mvconst^{(d)} \neq \mvconst^{(d)}_-$ and (2) $\suppo_{\mvhintree, \mvconst^{(d)}_-}(\mvconst^{(e)}) = \false$ otherwise. Define the \emph{support} function $\supp: \consts \rightarrow \two$ as
\[
\supp_{\mvhintree, \mvconst^{(d)}_-}(\mvconst^{(e)}) \eqdef 
\begin{cases}
\false & \mbox{$\mvconst^{(e)} \expansion \mvconst^{(d)}_-$} \\
\suppo_{\mvhintree, \mvconst^{(d)}_-}(\mvconst^{(e)}) & \mbox{otherwise.}
\end{cases}
\]

The idea is that we will transfer beliefs assigned to unsupported constituents over to the appropriate supported constituents. Define the abbreviations $\cS^+_{\mvhintree, \mvconst^{(d)}_-} \eqdef \set{\mvconst^{(e)} \in \consts \ST \supp_{\mvhintree, \mvconst^{(d)}_-}(\mvconst^{(e)}) = \true}$ and $\cS^-_{\mvhintree, \mvconst^{(d)}_-} \eqdef \consts \backslash \cS^+_S$. Thus $\cS^-_{\mvhintree, \mvconst^{(d)}_-}$ and $\cS^+_{\mvhintree, \mvconst^{(d)}_-}$ partition $\consts$. Define a \emph{$d$-redistribution point} as
\[
\rho_{\mvhintree, \mvconst^{(d)}_-} \eqdef \max_{0 \leq r \leq d} \set{ \mvconst^{(r)} \ST \mvconst^{(r)} \expandsinto \mvconst^{(d)}, \mvconst^{(r)} \in \cS^+_{\mvhintree, \mvconst^{(d)}_-} } \,,
\]
which is the closest (\ie, deepest by depth) ancestor constituent that has supported descendants. A $d$-redistribution point identifies a vertex of the refinement tree that has supported descendants to redistribute beliefs in unsupported constituents to.

\paragraph{Part two of renormalization}
We turn our attention towards the second task concerning the amount of belief to redistribute to each constituent now. Let $D^+_{\mvhintree, \mvconst^{(d)}_-} \eqdef \child(\rho_{\mvhintree, \mvconst^{(d)}_-}) \cap \cS^+_{\mvhintree, \mvconst^{(d)}_-}$ be the children of $\rho_{\mvhintree, \mvconst^{(d)}_-}$ that are supported. Then
\[
Z^+_{\mvhintree, \mvconst^{(d)}_-} \eqdef \sum_{\mvconst^{(e)} \in D^+_{\mvhintree, \mvconst^{(d)}_-}} \mvhintree(\mvconst^{(e)})
\]
is the \emph{positive renormalization constant} and
\[
Z_{\mvhintree, \mvconst^{(d)}_-} \eqdef \sum_{\mvconst^{(e)} \in \child(\rho_{\mvhintree, \mvconst^{(d)}_-})} \mvhintree(\mvconst^{(e)})
\]
is the \emph{total renormalization constant}.

\begin{figure}[t]
    \centering
    \begin{tabular}{cc}
\begin{tikzpicture}[
edge from parent/.style={draw,-latex},
level/.style={sibling distance=60mm/#1},
level 1/.style={sibling distance=50mm},
level 2/.style={sibling distance=30mm},
level 3/.style={sibling distance=20mm}]
\node [] (z){$\mvhintree(\mvconst) = a + b + c +d$}
child {
node (b) {$\mvhintree(\mvconst_l) = a + b + c$}
  child {
    node (d) {$\mvhintree(\mvconst_{ll}) = a$} 
    child {
      node(g) {$\mvhintree(\mvconst_{lll}) = a$}
    }
  } 
  child {
    node (d) {$\mvhintree(\mvconst_{lr}) = b + c$} 
      child { node(i) {$\mvhintree(\mvconst_{lrl}) = b$} }
      child { node(j) {$\mvhintree(\mvconst_{lrr}) = c$} }
  }
}
child {
  node (c) {$\mvhintree(\mvconst_r) = d$}
  child {
    node (f) {$\mvhintree(\mvconst_{rl}) = d$}
      child { node (j) {$\mvhintree(\mvconst_{rll}) = d$} } 
  }
}
;
\end{tikzpicture}
&
\begin{tikzpicture}[edge from parent/.style={draw,-latex},
level/.style={sibling distance=30mm/#1},
level 1/.style={sibling distance=35mm},
level 2/.style={sibling distance=30mm},
level 3/.style={sibling distance=25mm}]
\node [] (z){$\mvhintree(\mvconst) = a + b + c +d$}
child {
node (b) {$\mvhintree(\mvconst_l) = a + b + c$}
  child {
    node (d) {$\mvhintree(\mvconst_{lr}) = a + b + c$} 
      child { node(i) {$\mvhintree(\mvconst_{lrl}) = \frac{ab}{b+c}$} }
      child { node(j) {$\mvhintree(\mvconst_{lrr}) = \frac{ac}{b+c}$} }
  }
}
child {
  node (c) {$\mvhintree(\mvconst_r) = d$}
  child {
    node (f) {$\mvhintree(\mvconst_{rl}) = d$}
      child { node (j) {$\mvhintree(\mvconst_{rll}) = d$} } 
  }
}
;
\end{tikzpicture}
\\
before $\renorm_{\mvconst_{lll}}(\mvhintree)$ & after $\renorm_{\mvconst_{lll}}(\mvhintree)$
\end{tabular}
    \caption{An example of renormalization. We assume that the constituent $\mvconst$ appears somewhere in the refinement tree. The left shows a $\mvhintree$ before renormalization by $\mvconst_{lll}$. The constituent $\mvconst_l$ is the $d$-renormalization point $\rho_{\mvhintree, \mvconst_{lll}}$ as it is the closest ancestor that has descendants that are supported through depth $d$. After renormalization on the right, the weight $a$ in the eliminated region (constituents $\mvconst_{ll}$ and $\mvconst_{lll}$) are transferred to the closest region that is still supported (constituents $\mvconst_{lr}$, $\mvconst_{lrl}$, and $\mvconst_{lrr}$) in proportion to the existing weights.}
    \label{fig:repr:renorm}
\end{figure}

\paragraph{Renormalization}
We arrive at the definition of renormalization by putting the two parts together.
\begin{definition}
The \emph{renormalization} of $\mvhintree$ with respect to $\mvconst^{(d)}_-$ is a function $\renorm_{\mvconst^{(d)}_-}: (\consts \rightarrow \two) \rightarrow \consts \rightarrow [0, 1]$ defined as
\[
\renorm_{\mvconst^{(d)}_-}(\mvhintree)(\mvconst^{(e)}) = \begin{cases}
\frac{Z_{\mvhintree, \mvconst^{(d)}_-}}{Z^+_{\mvhintree, \mvconst^{(d)}_-}} \mvhintree(\mvconst^{(e)}) & \mbox{$\mvconst^{(e)} \in \desc(\rho_{\mvhintree, \mvconst^{(d)}_-}) \cap \cS^+_{\mvhintree, \mvconst^{(d)}_-}$} \\
0 & \mbox{$\mvconst^{(e)} \in \desc(\rho_{\mvhintree, \mvconst^{(d)}_-}) \cap \cS^-_{\mvhintree, \mvconst^{(d)}_-}$} \\
\mvhintree(\mvconst^{(e)}) & \mbox{otherwise}
\end{cases}
\]
when $Z^+_{\mvhintree, \mvconst^{(d)}_-} > 0$ and undefined otherwise.\footnote{\citet[pg. 281]{hintikka1970surface} devotes one paragraph to describing renormalization. The definition of renormalization given here translates that description into mathematical language as well as explicitly makes the connection between the redistribution of weights for unsupported constituents and Bayesian update.}
\end{definition}
\noindent Observe that $\renorm_{\mvconst^{(d)}_-}(\mvhintree)$ only affects the descendants of $\rho_{\mvhintree, \mvconst^{(d)}_-}$: $\renorm_{\mvconst^{(d)}_-}(\mvhintree)(\mvconst^{(e)}) = \mvhintree(\mvconst^{(e)})$ for $\mvconst^{(e)} \notin \desc(\rho_{\mvhintree, \mvconst^{(d)}_-})$. Figure~\ref{fig:repr:renorm} provides an illustration of the renormalization process.

\begin{figure}[t]
    \centering
    \begin{tikzpicture}[
  edge from parent/.style={draw,-latex},
  level 1/.style={sibling distance=8em},
  level 2/.style={sibling distance=8em},
  level 3/.style={sibling distance=8em},
  level 4/.style={sibling distance=4em}]
\node [] (z){$\mvhintree^0(\mvconst^{(0)}_\epsilon) = 1$}
child {
  node (a) {$\mvhintree^0(\mvconst^{(1)}_{a}) = 1/6$}
  child {
    node (b) {$\mvhintree^1(\mvconst^{(2)}_{ad}) = 1/4$}
  }
}
child {
  node (df) {$\mvhintree^0(\mvconst^{(1)}_b) = 1/3$}
}
child {
  node (j) {$\mvhintree^0(\mvconst^{(1)}_{c}) = 1/2$}
  child {
    node (k) {$\mvhintree^1(\mvconst^{(2)}_{ce}) = 1/8$}
  }
  child {
    node (l) {$\mvhintree^1(\mvconst^{(2)}_{cf}) = 5/8$}
      child {
        node (o) {$\mvhintree^2(\mvconst^{(3)}_{cfg}) = 2/5$}
          child { node {$\vdots$} }
          child { node {$\vdots$} }
      }
      child {
        node (p) {$\mvhintree^2(\mvconst^{(3)}_{cfh}) = 3/5$}
          child { node {$\vdots$} }
          child { node {$\vdots$} }
      }
  }
}
;
\end{tikzpicture}
    \caption{An illustration that shows how renormalization affects example beliefs. When transitioning from depth $1$ to depth $2$, $\mvconst^{(1)}_b$ becomes an unsupported constituent so $\mvconst^{(0)}_\epsilon$ is a $1$-renormalization point as it is the closest constituent with supported descendants at depth $2$. The $1/3$ belief assigned to $\mvconst^{(1)}_b$ is redistributed according to Bayes rule across the $1$-renormalization point's descendants at depth $2$ (\ie, $\mvconst^{(1)}_a$ and $\mvconst^{(1)}_c$). Thus $\mvhintree^1 = \renorm_{\mvconst^{(1)}_b}(\mvhintree^0)$. When transitioning from depth $2$ to depth $3$, $\mvconst^{(1)}_{ad}$ and $\mvconst^{(1)}_{ce}$ become unsupported constituents, so $\mvconst^{(0)}_\epsilon$ is a $2$-renormalization point. Thus $\mvhintree^2 = \renorm_{\mvconst^{(2)}_{ad}} \circ \renorm_{\mvconst^{(2)}_{ce}}(\mvhintree^1)$.}
    \label{fig:repr:ht}
\end{figure}

Renormalization of HTs have the following properties.
\begin{proposition}
Suppose $\mvhintree$ is a HT. Then the following holds:
\begin{description}[noitemsep]
    \item[Coherence] $\renorm_{\mvconst^{(d)}_-}(\mvhintree)$ is coherently constructed provided there is some $\mvconst^{(d)} \neq \mvconst^{(d)}_-$ such that $\mvhintree(\mvconst^{(d)}) > 0$;
    \item[Preservation]
    \[
    \mvhintree(\rho_{\mvhintree, \mvconst^{(d)}_-}) = \sum_{\mvconst^{(r+e)} \in \expand(e, \rho_{\mvhintree, \mvconst^{(d)}_-})} \renorm_{\mvconst^{(d)}_-}(\mvhintree)(\mvconst^{(r+e)}) \,.
    \]; and
    \item[Commutative] $\renorm_{\mvconst^{(d)}_2} \circ \renorm_{\mvconst^{(d)}_1}(\mvhintree) = \renorm_{\mvconst^{(d)}_1} \circ \renorm_{\mvconst^{(d)}_2}(\mvhintree)$ provided there is some $\mvconst^{(d)} \neq \mvconst^{(d)}_1$ and $\mvconst^{(d)} \neq \mvconst^{(d)}_2$ such that $\mvhintree(\mvconst^{(d)}) > 0$.
\end{description}
\label{prop:repr:renorm}
\end{proposition}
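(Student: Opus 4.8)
The plan is to treat the three parts in the order stated, deriving \textbf{Preservation} and \textbf{Commutative} from \textbf{Coherence}. Write $\rho \eqdef \rho_{\mvhintree,\mvconst^{(d)}_-}$ and let $r$ be its depth; recall that $\renorm_{\mvconst^{(d)}_-}(\mvhintree)$ keeps the weight of every constituent outside $\desc(\rho)$ unchanged (in particular $\rho$ itself, since $\rho \notin \desc(\rho)$), multiplies the weight of every constituent in $\cS^+_{\mvhintree,\mvconst^{(d)}_-}\cap\desc(\rho)$ by $Z_{\mvhintree,\mvconst^{(d)}_-}/Z^+_{\mvhintree,\mvconst^{(d)}_-}$, and sends every constituent in $\cS^-_{\mvhintree,\mvconst^{(d)}_-}\cap\desc(\rho)$ to $0$. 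The workhorse, a rerun of the proof of Proposition~\ref{prop:repr:infpath} (coherently constructed plus finite branching), is: whenever $\mvhintree(\mvconst^{(e)})>0$ there is a chain $\mvconst^{(e)}\expandsinto\mvconst^{(e+1)}\expandsinto\dots$ with every weight positive. Since the subtree below any constituent incomparable with $\mvconst^{(d)}_-$ in the refinement order is disjoint from the subtree at $\mvconst^{(d)}_-$, this forces every $\mvconst^{(e)}\in\cS^-_{\mvhintree,\mvconst^{(d)}_-}\cap\desc(\rho)$ that is not an ancestor of or equal to $\mvconst^{(d)}_-$ to satisfy $\mvhintree(\mvconst^{(e)})=0$; and it shows the hypothesis of \textbf{Coherence} (some depth-$d$ $\mvconst^{(d)}\neq\mvconst^{(d)}_-$ with positive weight) already makes $Z^+_{\mvhintree,\mvconst^{(d)}_-}>0$, so that $\renorm_{\mvconst^{(d)}_-}(\mvhintree)$ is defined: such a constituent lifts a positive chain to the root, so the root (hence some deepest supported ancestor $\rho$ of $\mvconst^{(d)}_-$) lies in $\cS^+_{\mvhintree,\mvconst^{(d)}_-}$, and the positive chain out of $\rho$ passes through a supported child.

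For \textbf{Coherence} I would verify $\renorm_{\mvconst^{(d)}_-}(\mvhintree)(\mvconst^{(e)})=\sum_{\mvconst^{(e+1)}\expansion\mvconst^{(e)}}\renorm_{\mvconst^{(d)}_-}(\mvhintree)(\mvconst^{(e+1)})$ by splitting on where $\mvconst^{(e)}$ sits relative to $\rho$. (i) If $\mvconst^{(e)}$ is neither $\rho$ nor a descendant of $\rho$, then neither it nor any of its children lies in $\desc(\rho)$, so all the relevant weights equal their $\mvhintree$-values and the identity is inherited from $\mvhintree$. (ii) If $\mvconst^{(e)}=\rho$, the children in $D^+_{\mvhintree,\mvconst^{(d)}_-}$ are scaled by $Z/Z^+$ while every other child --- including the one on the path to $\mvconst^{(d)}_-$, which maximality of $\rho$ keeps out of $\cS^+_{\mvhintree,\mvconst^{(d)}_-}$ --- is zeroed, so the children's weights sum to $(Z/Z^+)Z^+=Z=\mvhintree(\rho)$. (iii) If $\mvconst^{(e)}\in\cS^+_{\mvhintree,\mvconst^{(d)}_-}\cap\desc(\rho)$, then (by maximality of $\rho$ it is not an ancestor of $\mvconst^{(d)}_-$, and by the first clause of $\supp$ it is not a refinement of $\mvconst^{(d)}_-$) its whole subtree avoids $\mvconst^{(d)}_-$, so by the workhorse each child with positive weight lands in $\cS^+_{\mvhintree,\mvconst^{(d)}_-}$ and each child with zero weight is sent to $0$ either way, and the children's weights sum to $(Z/Z^+)\mvhintree(\mvconst^{(e)})$. (iv) If $\mvconst^{(e)}\in\cS^-_{\mvhintree,\mvconst^{(d)}_-}\cap\desc(\rho)$, a short argument splitting on whether $\mvconst^{(e)}$ is an ancestor of, equal to, a refinement of, or incomparable with $\mvconst^{(d)}_-$ --- again via maximality of $\rho$ and the workhorse --- puts every child of $\mvconst^{(e)}$ in $\cS^-_{\mvhintree,\mvconst^{(d)}_-}\cap\desc(\rho)$, so both sides vanish. \textbf{Preservation} then falls out: $\renorm_{\mvconst^{(d)}_-}(\mvhintree)$ is coherently constructed, so by induction on $e$ (using that $\mvconst^{(d)}\expandsinto\mvconst^{(d+1)}$ iff $\mvconst^{(d+1)}\in\child(\mvconst^{(d)})$ in the refinement tree) the $\renorm_{\mvconst^{(d)}_-}(\mvhintree)$-weight of any node equals the sum of the $\renorm_{\mvconst^{(d)}_-}(\mvhintree)$-weights of its depth-$(r+e)$ refinements; apply this at $\rho$ and use $\renorm_{\mvconst^{(d)}_-}(\mvhintree)(\rho)=\mvhintree(\rho)$.

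\textbf{Commutative} is where the real work is, and I expect it to be the main obstacle. Writing $\rho_i \eqdef \rho_{\mvhintree,\mvconst^{(d)}_i}$, the harmless case is $\rho_1,\rho_2$ incomparable in the tree: then $\desc(\rho_1)$ and $\desc(\rho_2)$ are disjoint, the two renormalizations touch disjoint sets of constituents, and they commute on the nose. The difficulty is the case where, say, $\rho_2\in\desc(\rho_1)\cup\{\rho_1\}$: here $\renorm_{\mvconst^{(d)}_1}$ perturbs exactly the data that $\renorm_{\mvconst^{(d)}_2}$ reads and, worse, the second redistribution point can shift, i.e.\ $\rho_{\renorm_{\mvconst^{(d)}_1}(\mvhintree),\,\mvconst^{(d)}_2}$ need not be $\rho_2$. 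My plan is to compute $\renorm_{\mvconst^{(d)}_2}\circ\renorm_{\mvconst^{(d)}_1}(\mvhintree)$ outright --- tracking which constituents fall in the scaled region and which in the zeroed region across both steps --- and check it agrees with the composite taken in the opposite order. Two facts keep this in check. First, at depth $d$ the positive set of $\renorm_{\mvconst^{(d)}_-}(\mvhintree)$ is precisely that of $\mvhintree$ with $\mvconst^{(d)}_-$ deleted (a direct check against the definition using the workhorse), so the composite is defined in either order exactly under the stated hypothesis, and the constituents it zeroes are the $\mvconst^{(e)}$ all of whose positive-weight downward chains to depth $d$ meet $\{\mvconst^{(d)}_1,\mvconst^{(d)}_2\}$ --- a description symmetric in the two constituents. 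Second, on the surviving constituents the two successive scalings are each of the form (weight of the children of a redistribution point) over (weight of its supported children), and reusing coherence of the intermediate HT to telescope shows the resulting (piecewise) proportionality constants depend only on the constituent nearer the root among $\rho_1,\rho_2$ and the positive mass surviving beneath it, again symmetric in $\{\mvconst^{(d)}_1,\mvconst^{(d)}_2\}$. The bounded but fiddly bookkeeping here --- especially the sub-cases ``$\rho_2=\rho_1$ versus $\rho_2$ a proper descendant'' and ``$\mvconst^{(d)}_2$ on the $\rho_1$-to-$\mvconst^{(d)}_1$ branch versus off it'' --- is the part most prone to slips, and is where I would spend the most care.
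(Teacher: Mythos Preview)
Your arguments for \textbf{Coherence} and \textbf{Preservation} match the paper's: the same case split on the position of $\mvconst^{(e)}$ relative to $\rho$ (outside the subtree, at $\rho$, supported descendant, unsupported descendant), and then induction on $e$ using coherence just established. Your explicit invocation of the ``positive chain'' workhorse (a rerun of Proposition~\ref{prop:repr:infpath}) to argue that unsupported descendants off the $\rho$--to--$\mvconst^{(d)}_-$ path already have $\mvhintree$-weight $0$ is slightly more explicit than the paper, which leaves this implicit.

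For \textbf{Commutative} your route diverges from the paper's. The paper's organizing move is to first prove that the \emph{set} of redistribution points encountered is the same regardless of the order of composition. It does this by a case analysis on the position of $\rho_{\mvhintree,\mvconst^{(d)}_1}$ relative to the deepest common ancestor $\mvconst^{(d)}_1 \sqcup \mvconst^{(d)}_2$ (ancestor of, equal to, descendant of), each case splitting further into subcases; only after pinning down the two redistribution points does it compare the scaling factors region by region via a small table. Your plan instead goes straight for order-independent descriptions of the two effects of the composite: (a) the zeroed set is exactly those $\mvconst^{(e)}$ whose every positive-weight chain to depth $d$ meets $\{\mvconst^{(d)}_1,\mvconst^{(d)}_2\}$, and (b) on survivors the two successive ratios telescope (via coherence of the intermediate HT) to something determined by the outer redistribution point and the mass surviving beneath it. Both approaches are sound. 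The paper's buys concreteness --- once the redistribution points are fixed, the regions and constants are explicit --- at the price of a long nested case analysis. Yours is conceptually cleaner and avoids tracking the (possibly shifted) second redistribution point directly, but the telescoping claim in (b) is precisely where the paper's table does real work: in the nested case a survivor under the inner point picks up \emph{two} ratios, and you must check that their product is invariant under swapping the order, not that it collapses to data of the outer point alone. That is the step to write out carefully; the paper handles it by verifying $\frac{Z_a}{Z^+_a}=\frac{\bar Z_a}{\bar Z^+_a}$ and $\frac{Z_b}{Z^+_b}=\frac{\bar Z_b}{\bar Z^+_b}$ for the two orders.
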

\begin{proof}
See Section~\ref{subsec:repr:supp} as the proof is straightforward but tedious.
\end{proof}
\noindent The coherence property indicates that renormalization is appropriately defined. Preservation localizes the renormalization to the descendants of a $d$-renormalization point. Commutativity of renormalization means that the order in which we renormalize does not matter and can be interpreted as \emph{exchangeability}. We write $\renorm_{T^d} \eqdef \renorm_{\mvconst^{(d)}_n} \circ \dots \circ \renorm_{\mvconst^{(d)}_1}$ for any $T^d = \set{\mvconst^{(d)}_1, \dots, \mvconst^{(d)}_n}$. Figure~\ref{fig:repr:ht} illustrates how renormalization affects example beliefs.

\paragraph{A process for converging to a depth HT}
Although a depth HT is not computable, there is a process of converting a \emph{reasonable} HT into a depth HT via a sequence of renormalizations. Intuitively, we will obtain a depth HT after we refute every inconsistent constituent. We formalize this process now.

We say that a HT $\mvhintree$ is \emph{reasonable} if $\mvhintree(\mvconst^{(d)}) > 0$ whenever $\mvconst^{(d)}$ is not trivially inconsistent. Put another way, a HT is reasonable if an agent does not assign zero probability to a constituent that it cannot refute using a test for trivial inconsistency.
\begin{example}
Both an uninformative HT and a depth HT are reasonable.
\end{example}

Let $\consts^{(d)}_-$ be the set of depth $d$ constituents that are trivially inconsistent. Define a sequence of HTs $(\mvhintree^d)_{d \in \N}$ inductively as
\begin{align}
\begin{split}
\mvhintree^1 & \eqdef \renorm_{\consts^{(1)}_-}(\mvhintree^0) \\
\mvhintree^{d+1} & \eqdef \renorm_{\consts^{(d+1)}_-}(\mvhintree^d)
\end{split}
\label{eq:htseq}
\end{align}
where $\mvhintree^0$ is some initial HT. Recall that there are consistent constituents at every depth and that constituent constituents have consistent expansions by the constituent completeness theorem so that the sequence of renormalizations is well-defined (see Footnote~\ref{footnote:repr:ht:wd}). The idea is that $\lim_{d \to \infty} \mvhintree^{d}$ converges to a depth HT.

First, we check that renormalization results in a reasonable HT.
\begin{proposition} Let $\mvhintree$ be a reasonable HT. Then
\begin{enumerate}[noitemsep]
    \item $\renorm_{\mvconst^{(d)}}(\mvhintree)$ is reasonable when $\mvconst^{(d)} \in \consts^{(d)}_-$;
    \item $\renorm_{\consts^{(d)}}(\mvhintree)$ is reasonable; and
    \item each $\mvhintree^d$ in the sequence $(\mvhintree^d)_{d \in \N}$ is reasonable.
\end{enumerate}
\label{prop:repr:ht:renorm:reasonable}
\end{proposition}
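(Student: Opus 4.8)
The plan is to establish part~1 (renormalizing by a single trivially inconsistent constituent preserves reasonableness) and then bootstrap to parts~2 and~3. For part~1, fix a reasonable HT $\mvhintree$ and $\mvconst^{(d)}_- \in \consts^{(d)}_-$, and write $\mvhintree' \eqdef \renorm_{\mvconst^{(d)}_-}(\mvhintree)$ and $\rho \eqdef \rho_{\mvhintree, \mvconst^{(d)}_-}$. First I would check $\mvhintree'$ is defined at all: there is a consistent depth-$d$ constituent (Footnote~\ref{footnote:repr:ht:wd}), it is not trivially inconsistent so it carries positive weight, and by coherence every constituent on the path from $\mvconst^{(0)}$ to it carries positive weight; this shows the root is supported (so $\rho$ exists) and, running one step further, that $Z^+_{\mvhintree, \mvconst^{(d)}_-} > 0$. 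Now let $\mvconst^{(e)}$ be any constituent that is not trivially inconsistent; since $\mvhintree$ is reasonable, $\mvhintree(\mvconst^{(e)}) > 0$, and I want $\mvhintree'(\mvconst^{(e)}) > 0$. Splitting on the three clauses of $\renorm$: if $\mvconst^{(e)} \notin \desc(\rho)$ then $\mvhintree'(\mvconst^{(e)}) = \mvhintree(\mvconst^{(e)}) > 0$; if $\mvconst^{(e)} \in \desc(\rho) \cap \cS^+_{\mvhintree, \mvconst^{(d)}_-}$ then $\mvhintree'(\mvconst^{(e)}) = (Z_{\mvhintree, \mvconst^{(d)}_-}/Z^+_{\mvhintree, \mvconst^{(d)}_-})\,\mvhintree(\mvconst^{(e)}) > 0$ because $Z_{\mvhintree, \mvconst^{(d)}_-} \geq Z^+_{\mvhintree, \mvconst^{(d)}_-} > 0$. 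So part~1 reduces entirely to excluding the last clause, i.e.\ to the claim that every constituent in $\desc(\rho) \cap \cS^-_{\mvhintree, \mvconst^{(d)}_-}$ is trivially inconsistent.

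Unwinding $\supp$, such a $\mvconst^{(e)}$ satisfies either $\mvconst^{(e)} \expansion \mvconst^{(d)}_-$ or $\suppo_{\mvhintree, \mvconst^{(d)}_-}(\mvconst^{(e)}) = \false$. In the first case $\mvconst^{(e)}$ refines the trivially inconsistent $\mvconst^{(d)}_-$, and since trivial inconsistency is persistent under expansion (a property shared by the standard notions), $\mvconst^{(e)}$ is trivially inconsistent. In the second case I would argue by contradiction: if $\mvconst^{(e)}$ were not trivially inconsistent then $\mvhintree(\mvconst^{(e)}) > 0$, and since weight propagates to ancestors by coherence, there is a positive-weight chain from $\mvconst^{(e)}$ down to some depth-$d$ descendant; as $\suppo_{\mvhintree, \mvconst^{(d)}_-}(\mvconst^{(e)}) = \false$ forbids such a chain ending anywhere but $\mvconst^{(d)}_-$, the constituent $\mvconst^{(d)}_-$ is the \emph{only} depth-$d$ descendant of $\mvconst^{(e)}$ with positive weight; hence by reasonableness every other depth-$d$ descendant of $\mvconst^{(e)}$ is trivially inconsistent, so all depth-$d$ expansions of $\mvconst^{(e)}$ are, whence $\mvconst^{(e)}$ is inconsistent by the Completeness proposition; finally I would push this conclusion back up the refinement tree, using the recursive shape of the chosen notion of trivial inconsistency, to conclude $\mvconst^{(e)}$ is itself trivially inconsistent, contradicting the hypothesis. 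This last upgrade is the step I expect to be the main obstacle: it is exactly the point at which the weight bookkeeping must be matched against the proof theory, and whether it goes through cleanly is sensitive to which concrete notion of trivial inconsistency is in force, so I would isolate and verify the precise closure-under-expansion property of trivial inconsistency that is actually needed.

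For part~2, write $\consts^{(d)}_- = \{\mvconst^{(d)}_1, \dots, \mvconst^{(d)}_n\}$ so that $\renorm_{\consts^{(d)}_-} = \renorm_{\mvconst^{(d)}_n} \circ \dots \circ \renorm_{\mvconst^{(d)}_1}$, and apply part~1 to each factor in turn: reasonableness of the running HT is preserved at every step because each $\mvconst^{(d)}_i$ is trivially inconsistent, and each intermediate renormalization is well-defined for the same reason as in part~1 (the running HT is still reasonable, and the consistent depth-$d$ constituents, being not trivially inconsistent, are never refuted and keep positive weight). Commutativity (Proposition~\ref{prop:repr:renorm}) makes the enumeration order immaterial. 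Part~3 is then a one-line induction on $d$: $\mvhintree^0$ is reasonable by hypothesis, and $\mvhintree^{d+1} = \renorm_{\consts^{(d+1)}_-}(\mvhintree^d)$ is reasonable by part~2 since $\mvhintree^d$ is reasonable by the inductive hypothesis.
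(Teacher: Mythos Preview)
Your approach is exactly the paper's: part~1 by case analysis on the three clauses of $\renorm$, part~2 by iterating part~1 over the finitely many trivially inconsistent constituents (the paper writes this as ``$|\consts^{(d)}_-|$ applications of item~1''), and part~3 by induction on $d$. The paper's proof of part~1 is a single sentence (``straightforward case analysis on whether $\mvconst^{(e)} \in \desc(\rho)$, $\mvconst^{(e)} \in \anc(\rho)$, or $\mvconst^{(e)} = \rho$''), so your version is strictly more detailed.

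You have correctly isolated the one nontrivial point that the paper does not spell out: in the clause $\mvconst^{(e)} \in \desc(\rho) \cap \cS^-_{\mvhintree,\mvconst^{(d)}_-}$ the renormalized weight is set to $0$, so for reasonableness to survive one must show every such $\mvconst^{(e)}$ is trivially inconsistent. Your chain-of-positive-weight argument is right and gets you to ``all depth-$d$ expansions of $\mvconst^{(e)}$ are trivially inconsistent'', hence $\mvconst^{(e)}$ is inconsistent by the Completeness proposition; the remaining upgrade to ``$\mvconst^{(e)}$ is \emph{trivially} inconsistent'' is, as you flag, not automatic and is the only place where something beyond bookkeeping is needed. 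The paper's one-line proof does not address this step either, so you are not missing an idea that the paper supplies; rather, you have located the place where the argument leans on a structural property of the chosen notion of trivial inconsistency (roughly: a constituent all of whose refinements at some fixed greater depth are trivially inconsistent is itself trivially inconsistent, equivalently a not-trivially-inconsistent constituent always has a not-trivially-inconsistent child). This property holds for the standard Hintikka-style notions and is what underlies the K\H{o}nig argument referenced after the Completeness proposition, so your instinct to isolate and verify precisely this closure property is the right move.
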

\begin{proof}
\begin{enumerate}[noitemsep]
    \item We check that $\mvconst^{(e)} > 0$ whenever $\mvconst^{(e)}$ is not trivially inconsistent. The result follows by a straightforward case analysis on whether $\mvconst^{(e)} \in \desc(\rho_{\mvhintree, \mvconst^{(d)}_-})$, $\mvconst^{(e)} \in \anc(\rho_{\mvhintree, \mvconst^{(d)}_-})$, or $\mvconst^{(e)} = \rho_{\mvhintree, \mvconst^{(d)}_-}$. 
    \item The result follows by $|\consts^{(d)}_-|$ applications of Proposition~\ref{prop:repr:ht:renorm:reasonable}, item $1$.
    \item By induction on $d$.
\end{enumerate}
\end{proof}

Second, we check that the limit exists. Roughly speaking, the limit exists because of Bayesian refute and rescale dynamics: either belief in a constituent is refuted and belief in all of its refinements converges to zero or the belief in a constituent is rescaled by belief lost in refuted constituents so that belief in all of its refinements is a monotonically increasing and bounded sequence. We say that $\mvconst^{(d)}$ is \emph{eventually unsupported} with respect to $\mvhintree$ if there exists an $e \in \N$ such that all of its depth $d + e$ expansions $\mvconst^{(d+e)}$ have $\mvhintree(\mvconst^{(d+e)}) = 0$. We say that $\mvconst^{(d)}$ is always supported otherwise.
\begin{proposition}
Let $(\mvhintree^d)_{d \in \N}$ be a sequence of HTs defined as in Equation~\ref{eq:htseq} where $\mvhintree^0$ is a reasonable HT. Then
\begin{enumerate}[noitemsep]
    \item $\lim_{e \to \infty} \mvhintree^e(\mvconst^{(d)}) = 0$ when $\mvconst^{(d)}$ is eventually unsupported with respect to some $\mvhintree^E$ in the sequence; and
    \item $\mvconst^{(d)}$ is eventually unsupported with respect to some $\mvhintree^E$ in the sequence iff it is inconsistent; and
    \item $(\mvhintree^e(\mvconst^{(d)}))_{e \in \N}$ is a monotonically increasing sequence bounded by $1$ when $\mvconst^{(d)}$ is always supported so that $\lim_{e \to \infty} \mvhintree^e(\mvconst^{(d)})$ exists; and
    \item
    \[
    \lim_{e \to \infty} \mvhintree^e(\mvconst^{(d)}) = \sum_{\mvconst^{(d+1)} \expansion \mvconst^{(d)}} \lim_{e \to \infty} \mvhintree^e(\mvconst^{(d+1)}) \,.
    \]
\end{enumerate}
\label{prop:repr:ht:renorm:limit}
\end{proposition}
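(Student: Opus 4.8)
The plan is to prove the four items in order, relying on the fact that every $\mvhintree^e$ in the sequence~(\ref{eq:htseq}) is a reasonable HT (Proposition~\ref{prop:repr:ht:renorm:reasonable})---hence coherently constructed, with unitial initial beliefs, and with all weights in $[0,1]$ summing to $1$ at each depth by Normalization (Proposition~\ref{prop:repr:norm}). Two bookkeeping facts about a single renormalization $\renorm_{\mvconst^{(g)}_-}$ do most of the work. First, inspecting its three cases, a renormalization never turns a zero weight into a positive one: it rescales by $Z_{\mvhintree,\mvconst^{(g)}_-}/Z^+_{\mvhintree,\mvconst^{(g)}_-}\geq 1$, sets to $0$, or leaves unchanged. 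Hence the set of zero-weight constituents is nondecreasing along $(\mvhintree^e)_{e\in\N}$, so once $\mvhintree^E(\mvconst^{(d)})=0$ we have $\mvhintree^e(\mvconst^{(d)})=0$ for all $e\geq E$. Second, a trivially inconsistent $\mvconst^{(g)}_-$ lies in $\cS^-_{\mvhintree,\mvconst^{(g)}_-}$ and is a proper descendant of its redistribution point (which exists since reasonableness keeps the root in $\cS^+_{\mvhintree,\mvconst^{(g)}_-}$), so $\renorm_{\mvconst^{(g)}_-}(\mvhintree)(\mvconst^{(g)}_-)=0$; combined with the first fact, every element of $\consts^{(g)}_-$ has weight $0$ in $\mvhintree^g$ and stays $0$ thereafter.

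For item~1, if $\mvconst^{(d)}$ is eventually unsupported with respect to some $\mvhintree^E$, then all of its depth-$(d+e)$ expansions have $\mvhintree^E$-weight $0$ for some $e$; iterating the coherence identity of $\mvhintree^E$ down $e$ levels gives $\mvhintree^E(\mvconst^{(d)})=\sum_{\mvconst^{(d+e)}\in\expand(e,\mvconst^{(d)})}\mvhintree^E(\mvconst^{(d+e)})=0$, and by the first fact the limit is $0$. For item~2, suppose $\mvconst^{(d)}$ is inconsistent; since our constituents have no free individual terms, $\mvconst^{(d)}$ is (identified with) an attributive constituent, so the constituent completeness theorem gives an $e$ with every depth-$(d+e)$ expansion of $\mvconst^{(d)}$ trivially inconsistent, i.e.\ in $\consts^{(d+e)}_-$ (pruning only shrinks the descendant set, so this persists for the pruned tree); by the second fact these all have $\mvhintree^{d+e}$-weight $0$, so $\mvconst^{(d)}$ is eventually unsupported with respect to $\mvhintree^{d+e}$. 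Conversely, if $\mvconst^{(d)}$ is consistent it has a consistent refinement at every depth (constituent completeness, cf.\ Footnote~\ref{footnote:repr:ht:wd}); a consistent constituent is never trivially inconsistent, so by reasonableness each such refinement has positive $\mvhintree^E$-weight for every $E$, whence $\mvconst^{(d)}$ is never eventually unsupported. Thus ``always supported with respect to every $\mvhintree^E$'' coincides with ``consistent''.

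For item~3, boundedness by $1$ is Normalization. For monotonicity, fix a consistent $\mvconst^{(d)}$ and consider a single renormalization $\renorm_{\mvconst^{(t+1)}_-}$ occurring inside $\renorm_{\consts^{(t+1)}_-}$, applied to an intermediate reasonable HT $\mvhintree'$; the only case of the definition that could decrease $\mvhintree'(\mvconst^{(d)})$ is $\mvconst^{(d)}\in\desc(\rho_{\mvhintree',\mvconst^{(t+1)}_-})\cap\cS^-_{\mvhintree',\mvconst^{(t+1)}_-}$, so it suffices to show a consistent $\mvconst^{(d)}$ never lies in $\cS^-_{\mvhintree',\mvconst^{(t+1)}_-}$. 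Indeed $\mvconst^{(d)}$ is not a descendant of $\mvconst^{(t+1)}_-$ (a refinement of an inconsistent constituent is inconsistent), and it is joined by a positive-weight chain of consistent constituents to a depth-$(t+1)$ constituent distinct from $\mvconst^{(t+1)}_-$ (existence of consistent refinements at every depth plus reasonableness of $\mvhintree'$), which forces the support function to be true on $\mvconst^{(d)}$; hence the step fixes $\mvhintree'(\mvconst^{(d)})$ or multiplies it by $\geq 1$. Composing over $\consts^{(t+1)}_-$ (each intermediate HT reasonable by Proposition~\ref{prop:repr:ht:renorm:reasonable}) yields $\mvhintree^{t+1}(\mvconst^{(d)})\geq\mvhintree^t(\mvconst^{(d)})$, so the sequence is monotone increasing and bounded and its limit exists. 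For item~4, each $\mvhintree^e$ is coherently constructed, so $\mvhintree^e(\mvconst^{(d)})=\sum_{\mvconst^{(d+1)}\expansion\mvconst^{(d)}}\mvhintree^e(\mvconst^{(d+1)})$ for every $e$, a finite sum since $\mvconst^{(d)}$ has finitely many children (as in the proof of Proposition~\ref{prop:repr:infpath}); by items~1--3 the limit $\lim_{e\to\infty}\mvhintree^e$ exists at $\mvconst^{(d)}$ and at each child (it is $0$ when the constituent is inconsistent, and exists when it is consistent), so passing to the limit---the limit of a finite sum is the sum of limits---gives the claimed identity.

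The step I expect to be the main obstacle is item~3: one must verify carefully, through a case analysis of the support function $\supp_{\mvhintree',\mvconst^{(t+1)}_-}$, that a consistent constituent never enters the ``unsupported'' set during any renormalization of the sequence, which is exactly where reasonableness (Proposition~\ref{prop:repr:ht:renorm:reasonable}) and the existence of consistent refinements at every depth (the constituent completeness theorem) must be combined. Items~1, 2, and 4 are then comparatively routine consequences of coherence, Normalization, and the observation that renormalization creates no new positive weight.
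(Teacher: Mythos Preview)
Your proposal is correct and follows essentially the same approach as the paper. The one notable difference is in item~3: the paper justifies monotonicity in a single line by citing the preservation property of renormalization, whereas you unpack this into an explicit case analysis on the three clauses of $\renorm_{\mvconst^{(t+1)}_-}$, arguing that a consistent constituent can never land in $\cS^-_{\mvhintree',\mvconst^{(t+1)}_-}$ and hence is either left unchanged or rescaled by a factor $\geq 1$. Your version is more detailed and arguably makes clearer why the ``always supported'' hypothesis (equivalently, consistency, by your item~2) is exactly what rules out the zeroing case; the paper's invocation of preservation is terser and leaves this step to the reader. Items~1, 2, and 4 match the paper's arguments closely, with item~4 in both cases reducing to exchanging a limit with a finite sum.
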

\begin{proof}\hfill
\begin{enumerate}[noitemsep]
    \item If $\mvconst^{(d)}$ is eventually unsupported, then there is an $E \in \N$ such that
    \[
    \mvhintree^E(\mvconst^{(d)}) = 0 \,.
    \]
    Moreover $\mvhintree^{e}(\mvconst^{(d)}) = 0$ for any $e \geq E$ as renormalization cannot rescale a probability $0$ assignment. Hence the series converges and is $0$.
    \item In the forward direction, we have that there is some $E \in \N$ such that $\mvhintree^E(\mvconst^{(d)}) = 0$ whenever $\mvconst^{(d)}$ is eventually unsupported by the above. Moreover $\mvhintree^E$ is reasonable by Proposition~\ref{prop:repr:ht:renorm:reasonable} so $\mvconst^{(d)}$ is trivially inconsistent. The forward direction follows as a constituent is inconsistent if it is trivially inconsistent.
    
    In the reverse direction, we have that there is some depth $E \in \N$ such that all of the refinements of $\mvconst^{(d)}$ are trivially inconsistent at depth $E$ by the constituent completeness theorem. Thus $\mvhintree^E(\mvconst^{(d)}) = 0$ as $\mvhintree^E$ is reasonable by Proposition~\ref{prop:repr:ht:renorm:reasonable} and the result follows.
    \item Observe that $\mvhintree^e(\mvconst^{(d)}) \leq \mvhintree^{e+1}(\mvconst^{(d)})$ by the preservation property of renormalization (Proposition~\ref{prop:repr:renorm}) when $\mvconst^{(d)}$ is always supported. That we have a monotonically increasing sequence follows by induction on $e$. The sequence is bounded by $1$ because a HT is normalized at every depth. Thus the limit exists. 
    \item We have
    \begin{align*}
        \lim_{e \to \infty} \mvhintree^e(\mvconst^{(d)}) & = \lim_{e \to \infty} \sum_{\mvconst^{(d+1)} \expansion \mvconst^{(d)}} \mvhintree^e(\mvconst^{(d+1)}) \\
        & = \sum_{\mvconst^{(d+1)} \expansion \mvconst^{(d)}} \lim_{e \to \infty} \mvhintree^e(\mvconst^{(d+1)})
    \end{align*}
    where the first equality follows by definition and the second equality follows because the sequence is dominated by $1$ (by items $1$ and $3$).
\end{enumerate}
\end{proof}

We can show the desired result now.
\begin{proposition}
$\lim_{d \to \infty} \mvhintree^{d}$ exists and is a depth HT when $\mvhintree^0$ is a reasonable HT.\footnote{\citet{hintikka1970surface} gives an analogous result where depth information (\ie, a depth HT) is the limit of surface information (\ie, the limit of propagating trivial inconsistency via renormalization).}
\end{proposition}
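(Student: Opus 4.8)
The plan is to exhibit the limit pointwise and verify it is a depth HT by assembling the pieces already established in Propositions~\ref{prop:repr:ht:renorm:reasonable} and~\ref{prop:repr:ht:renorm:limit}. Concretely, for each constituent $\mvconst^{(d)} \in \consts$ I would define $\mvhintree^\infty(\mvconst^{(d)}) \eqdef \lim_{e \to \infty} \mvhintree^e(\mvconst^{(d)})$; the assertion that ``$\lim_{d \to \infty} \mvhintree^d$ exists'' is then precisely the statement that this pointwise limit is well-defined (each scalar limit exists), and the rest is to check that the resulting weight function is a depth HT.

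First I would show $\mvhintree^\infty$ is well-defined as a function $\consts \to [0,1]$. The key dichotomy comes from Proposition~\ref{prop:repr:ht:renorm:limit}, item $2$: a constituent is eventually unsupported with respect to some $\mvhintree^E$ in the sequence iff it is inconsistent. If $\mvconst^{(d)}$ is inconsistent, item $1$ gives $\lim_{e \to \infty} \mvhintree^e(\mvconst^{(d)}) = 0$. If $\mvconst^{(d)}$ is consistent, it is always supported, so item $3$ gives that $(\mvhintree^e(\mvconst^{(d)}))_{e \in \N}$ is monotonically increasing and bounded by $1$, hence convergent. In both cases the limit exists and lies in $[0,1]$, so $\mvhintree^\infty$ is well-defined; moreover the first case shows $\mvhintree^\infty(\mvconst^{(d)}) = 0$ whenever $\mvconst^{(d)}$ is inconsistent, which is exactly the extra constraint a depth HT must satisfy once we know $\mvhintree^\infty$ is a HT.

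Next I would verify the two defining conditions of a HT for $\mvhintree^\infty$. For unitial initial beliefs: $\mvconst^{(0)}_\epsilon = \true$ is consistent and every $\mvhintree^e$ is a HT, so $\mvhintree^e(\mvconst^{(0)}_\epsilon) = 1$ for all $e$, whence $\mvhintree^\infty(\mvconst^{(0)}_\epsilon) = 1$. For coherently constructed, this is exactly Proposition~\ref{prop:repr:ht:renorm:limit}, item $4$:
\[
\mvhintree^\infty(\mvconst^{(d)}) = \sum_{\mvconst^{(d+1)} \expansion \mvconst^{(d)}} \mvhintree^\infty(\mvconst^{(d+1)}) \,.
\]
Hence $\mvhintree^\infty$ is a HT, and combined with the previous paragraph it is a depth HT, which is the claim.

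I do not expect a genuine obstacle: the statement is essentially bookkeeping over Propositions~\ref{prop:repr:ht:renorm:reasonable} and~\ref{prop:repr:ht:renorm:limit}, where the real content lives (the consistent/inconsistent versus always-supported/eventually-unsupported equivalence, monotonicity and boundedness of always-supported weights, and the domination argument that lets the limit pass through the coherence sum). The only point deserving care is making precise in what sense ``$\lim_{d \to \infty} \mvhintree^d$'' is meant — namely the pointwise limit over constituents — and confirming that this pointwise limit is again a legitimate weight function (values in $[0,1]$, coherent, and hence normalized at each depth by Proposition~\ref{prop:repr:norm}) rather than a merely formal object.
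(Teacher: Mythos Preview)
Your proposal is correct and follows essentially the same approach as the paper: both arguments assemble the pointwise limit from items $1$ and $3$ of Proposition~\ref{prop:repr:ht:renorm:limit}, verify unitial initial beliefs directly and coherent construction via item $4$, and then use item $2$ to conclude the limit is a depth HT. Your write-up is slightly more explicit about the pointwise interpretation of the limit and the consistent/inconsistent dichotomy, but the substance is the same.
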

\begin{proof}
We have that $\mvhintree^\infty \eqdef \lim_{d \to \infty} \mvhintree^{d}$ exists by Proposition~\ref{prop:repr:ht:renorm:limit}, items $1$ and $3$. Next, we check that $\mvhintree^\infty$ is a HT. We clearly have that $\mvhintree^\infty$ satisfies unitial initial beliefs. We have that $\mvhintree^\infty$ is coherently constructed by Proposition~\ref{prop:repr:ht:renorm:limit} item $4$. Finally, observe that $\lim_{e \to \infty} \mvhintree^e(\mvconst^{(d)}) = 0$ iff $\mvconst^{(d)}$ is inconsistent. Finally, observe that $\mvconst^{(d)}$ is eventually unsupported iff it is inconsistent by Proposition~\ref{prop:repr:ht:renorm:limit}, item $2$. Thus $\mvhintree^\infty$ is a depth HT.
\end{proof}

\subsection{Probabilities on First-Order Sentences}
\label{subsec:repr:prob}

As every depth $d$ first-order sentence can be written as a depth $d$ dnf, a HT induces a probability distribution on the validity of first-order sentences. Notably, the distribution does not enforce that logically equivalent statements are assigned the same probability. This means that we can represent the beliefs of an agent that is not logically omniscient. Although logical omniscience fails, the induced distribution does not assign probabilities to logically related sentences arbitrarily.

We begin by defining a topology\footnote{For background on topology, we refer the reader to~\citep{munkres2000topology}.} on the refinement tree. Let $\Psi^d \eqdef \set{\mvconst^{(0)} \dots \mvconst^{(d)} \ST \mvconst^{(0)} \expandsinto \dots \expandsinto \mvconst^{(d)}}$ be the set of length $d$ paths of the refinement tree. Let $\Psi^\omega \eqdef \set{\mvconst^{(0)} \mvconst^{(1)} \dots \ST \mvconst^{(0)} \expandsinto \mvconst^{(1)} \expandsinto \dots}$ be the set of infinite paths of the refinement tree. We write $\mvconst^{(0)} \expandsinto \dots \expandsinto \mvconst^{(d)} \sqsubseteq \mvpath$ if $\mvconst^{(0)} \expandsinto \dots \expandsinto \mvconst^{(d)}$ appears as a finite prefix of $\mvpath \in \Psi^\omega$. Let $\Psi^\omega_{\mvconst^{(d)}} \eqdef \set{ \mvconst^{(d+1)} \mvconst^{(d+2)} \dots \ST \mvconst^{(d)} \expandsinto \mvconst^{(d+1)} \expandsinto \mvconst^{(d+2)} \expandsinto \dots}$ be the set of infinite paths of the refinement tree starting with a refinement of $\mvconst^{(d)}$. Define the topological space $(\Psi^\omega, \cO)$ where $\cO$ is a topology generated by the basis of open sets
\[
\cB \eqdef \set{\mvconst^{(0)} \dots \mvconst^{(d)} \Psi^\omega_{\mvconst^{(d)}} \ST \mvconst^{(0)} \dots \mvconst^{(d)} \in \Psi^d} \cup \set{\emptyset} \,.
\]
Each basic open $\mvconst^{(0)} \dots \mvconst^{(d)} \Psi^\omega_{\mvconst^{(d)}}$ contains every infinite refinement path that begins with $\mvconst^{(0)} \dots \mvconst^{(d)} \in \Psi^d$.
\begin{definition}
The \emph{belief} $\belief_\cB : \cB \rightarrow [0, 1]$ in a basic open $\mvconst^{(0)} \dots \mvconst^{(d)} \Psi^\omega_{\mvconst^{(d)}} \in \cB$ with respect to a HT $\mvhintree$ is defined as
\begin{align*}
    \belief_\cB(\mvconst^{(0)} \dots \mvconst^{(d)} \Psi^\omega_{\mvconst^{(d)}}) & \eqdef \mvhintree(\mvconst^{(d)}) \\
    \belief_\cB(\emptyset) & \eqdef 0 \,.
\end{align*}
\end{definition}
\begin{proposition} The basic opens have consistent assignments:
\[
\belief_\cB(\mvconst^{(0)} \dots \mvconst^{(d)} \Psi^\omega_{\mvconst^{(d)}}) = \sum_{\mvconst^{(d+1)} \expansion \mvconst^{(d)}} \belief_\cB(\mvconst^{(0)} \dots \mvconst^{(d)} \mvconst^{(d+1)} \Psi^\omega_{\mvconst^{(d+1)}}) \,.
\]
\end{proposition}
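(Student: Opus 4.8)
The plan is to unwind both sides of the claimed identity directly from the definition of $\belief_\cB$ and then invoke the coherence condition built into a Hintikka tree. First I would observe that the left-hand side is, by definition, just $\mvhintree(\mvconst^{(d)})$, provided the argument is a genuine basic open (the $\emptyset$ case is vacuous since that set does not arise as $\mvconst^{(0)} \dots \mvconst^{(d)} \Psi^\omega_{\mvconst^{(d)}}$). For the right-hand side, I would note that each summand $\belief_\cB(\mvconst^{(0)} \dots \mvconst^{(d)} \mvconst^{(d+1)} \Psi^\omega_{\mvconst^{(d+1)}})$ is, again by definition, equal to $\mvhintree(\mvconst^{(d+1)})$, where the sum ranges over exactly those $\mvconst^{(d+1)}$ with $\mvconst^{(d+1)} \expansion \mvconst^{(d)}$, i.e.\ the children of $\mvconst^{(d)}$ in the refinement tree.

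Next I would need the small bookkeeping point that $\mvconst^{(0)} \dots \mvconst^{(d)} \mvconst^{(d+1)}$ is indeed an element of $\Psi^{d+1}$ whenever $\mvconst^{(0)} \dots \mvconst^{(d)} \in \Psi^{d}$ and $\mvconst^{(d)} \expandsinto \mvconst^{(d+1)}$ — this is immediate from the definition of $\Psi^{d+1}$ as the set of length $d+1$ refinement paths — so that each term on the right-hand side is a legitimate basic open and the definition of $\belief_\cB$ applies to it. With that in hand, the right-hand side becomes $\sum_{\mvconst^{(d+1)} \expansion \mvconst^{(d)}} \mvhintree(\mvconst^{(d+1)})$.

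Finally I would close the argument by citing the ``coherently constructed'' clause in the definition of a Hintikka tree, which states precisely that $\mvhintree(\mvconst^{(d)}) = \sum_{\mvconst^{(d+1)} \expansion \mvconst^{(d)}} \mvhintree(\mvconst^{(d+1)})$. Chaining the three equalities gives the claim. I do not expect any real obstacle here: the statement is essentially a restatement of coherence transported through the definition of $\belief_\cB$, and the only thing to be careful about is matching the index set of the sum in the proposition (``$\mvconst^{(d+1)} \expansion \mvconst^{(d)}$'') with the index set in the coherence axiom, which is the same, and confirming that the prefixes appearing as subscripts/arguments are consistent (the path $\mvconst^{(0)} \dots \mvconst^{(d)}$ is fixed throughout, and each child $\mvconst^{(d+1)}$ extends it uniquely in the tree). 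A one- or two-line proof suffices.
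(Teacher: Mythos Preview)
Your proposal is correct and matches the paper's own proof, which simply states that the identity follows directly from the fact that $\mvhintree$ is coherently constructed. You have merely spelled out the definitional unwinding that the paper leaves implicit.
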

\label{prop:repr:consistent}
\begin{proof}
This follows directly from the fact that $\mvhintree$ is coherently constructed.
\end{proof}
\noindent Thus we have a finitely additive set function. It is easy to see that $\belief_\cB(\Psi^\omega) = 1$.

We extend the belief in a basic open to the measurable space\footnote{For background on measure-theoretic probability, we refer the reader to~\citep{kallenberg2006foundations}.} $(\Psi^\omega, \sigma(\cO))$ where $\sigma(\cO)$ is the Borel $\sigma$-algebra obtained in the standard way.
\begin{proposition}
The belief $\belief_\cB$ in a basic open defines a unique probability measure $\beta$ on the measurable space $(\Psi^\omega, \sigma(\cO))$.
\end{proposition}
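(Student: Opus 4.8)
The plan is to recognize $\belief_\cB$ as a premeasure on a generating algebra and invoke the Carathéodory extension theorem. First I would observe that the refinement tree is finitely branching: each $\mvconst^{(d)}$ has only finitely many children because $\consts^{(d+1)}$ is finite. Hence each basic open $\mvconst^{(0)} \dots \mvconst^{(d)} \Psi^\omega_{\mvconst^{(d)}}$ is actually \emph{clopen} --- its complement is the union of the finitely many depth-$d$ basic opens coming from the other length-$d$ paths --- and there are only countably many basic opens. It follows that the collection $\mathcal{A}$ of finite unions of elements of $\cB$ is an algebra of clopen sets with $\sigma(\mathcal{A}) = \sigma(\cO)$, and that every $A \in \mathcal{A}$ can be written as a \emph{finite disjoint} union of basic opens: push all the basic opens appearing in $A$ down to a common depth using the refinement relation, and use that distinct paths of a fixed length index disjoint basic opens.

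Next I would define $\mu_0 : \mathcal{A} \to [0,1]$ on a disjoint representation $A = \bigsqcup_i \mvconst^{(0)}_i \dots \mvconst^{(d_i)}_i \Psi^\omega_{\mvconst^{(d_i)}_i}$ by $\mu_0(A) \eqdef \sum_i \belief_\cB(\mvconst^{(0)}_i \dots \mvconst^{(d_i)}_i \Psi^\omega_{\mvconst^{(d_i)}_i})$, and check well-definedness: two disjoint representations of the same $A$ admit a common refinement obtained by pushing everything to a shared depth, and iterating Proposition~\ref{prop:repr:consistent} shows that $\belief_\cB$ of a basic open equals the sum of $\belief_\cB$ over its refinements at any greater depth, so both representations yield the same total. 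Finite additivity of $\mu_0$ on $\mathcal{A}$ is then immediate, and $\mu_0(\Psi^\omega) = \belief_\cB(\Psi^\omega) = \mvhintree(\mvconst^{(0)}) = 1$.

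The main obstacle is to promote finite additivity to \emph{countable} additivity on $\mathcal{A}$, equivalently to verify continuity from above at $\emptyset$: if $A_1 \supseteq A_2 \supseteq \dots$ lie in $\mathcal{A}$ with $\bigcap_n A_n = \emptyset$, then $\mu_0(A_n) \to 0$. I would handle this by compactness. The path space $\Psi^\omega$ of a finitely branching tree is compact: it embeds as a closed subspace of the product $\prod_{d \in \N} \consts^{(d)}$ of finite discrete spaces (each coordinate condition $\mvconst^{(d)} \expandsinto \mvconst^{(d+1)}$ is clopen), and equivalently one shows by K\H{o}nig's tree lemma that a nested sequence of nonempty basic opens has nonempty intersection. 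Since each $A_n \in \mathcal{A}$ is clopen it is a closed, hence compact, subset of $\Psi^\omega$; a nested sequence of nonempty compacts cannot have empty intersection, so $\bigcap_n A_n = \emptyset$ forces $A_N = \emptyset$ for some $N$, whence $\mu_0(A_n) = 0$ for all $n \geq N$. Thus $\mu_0$ is a genuine $\sigma$-additive premeasure on $\mathcal{A}$.

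Finally, Carathéodory's extension theorem yields a measure $\beta$ on $\sigma(\mathcal{A}) = \sigma(\cO)$ agreeing with $\belief_\cB$ on $\cB$, and $\beta(\Psi^\omega) = 1$ makes it a probability measure; uniqueness follows because $\cB$ is a $\pi$-system generating $\sigma(\cO)$ on which $\beta$ is totally finite (Dynkin's $\pi$-$\lambda$ theorem). I expect the compactness step to be the only nonroutine point; the remainder is the standard cylinder-set construction of a measure on a path space.
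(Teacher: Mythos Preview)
Your proposal is correct and in fact more careful than the paper's own argument. The paper gives a two-line sketch: it notes that $\cB$ is a countable $\pi$-system generating $\sigma(\cO)$ and then asserts that a finitely additive, $\sigma$-finite set function on a $\pi$-system extends uniquely to a measure on the generated $\sigma$-algebra. That statement handles \emph{uniqueness} cleanly (via the $\pi$--$\lambda$ theorem, which you also invoke), but it glosses over \emph{existence}; finite additivity on a $\pi$-system does not by itself yield a countably additive extension.

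What you add is precisely the missing ingredient: you pass to the algebra $\mathcal{A}$ of finite unions of basic opens, observe that these are clopen in a compact path space (finite branching plus K\H{o}nig's lemma), and deduce that any decreasing sequence in $\mathcal{A}$ with empty intersection is eventually empty, so finite additivity is automatically $\sigma$-additivity on $\mathcal{A}$. Carath\'eodory then gives the extension. This is the standard rigorous construction of a measure on the boundary of a finitely branching tree, and it fills the gap in the paper's sketch. The two approaches are aligned in spirit --- both ultimately rest on the $\pi$-system structure for uniqueness --- but yours supplies the compactness step that the paper omits.
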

\begin{proof}
Observe that $(\Psi^\omega, \cO)$ has a countable basis $\cB$ so that the Borel $\sigma$-algebra is generated by the basis $\cB$. Moreover, the basis $\cB$ is a $\pi$-system (\ie, closed under finite intersections). The result follows as a finitely additive set function on a $\pi$-system (Proposition~\ref{prop:repr:consistent}) can be uniquely extended to a set function on a $\sigma$-algebra when it is $\sigma$-finite.
\end{proof}

Finally, we define a distribution on first-order sentences.
\begin{definition}
The belief in the validity of first-order sentences is given by
\[
\belief(\mvform^{(d)}) \eqdef \sum_{\mvconst^{(d)} \in \dnf(\mvform{(d)})} \beta(\mvconst^{(0)} \dots \mvconst^{(d)} \Psi^\omega_{\mvconst^{(d)}})
\]
where $\beta$ is the probability measure obtained from $\belief_\cB$.
\end{definition}
\noindent The belief in a first-order formula of depth $d$ with $k$ free variables is the $(d + k)$-belief in the closed first-order formula obtained via universal closure (which increases the depth to $d + k$).

We check that there are HTs that induce probability distributions that do not enforce logical omniscience.
\begin{proposition}[Failure of logical omniscience]
There is a HT $\mvhintree$ such that $\mvform_1 \equiv \mvform_2$ but $\belief(\mvform_1) \neq \belief(\mvform_2)$
\end{proposition}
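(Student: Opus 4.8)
The plan is to produce, for a convenient simple language $\cL$, a single HT $\mvhintree$ and two depth-$d$ sentences $\mvform_1 \equiv \mvform_2$ whose beliefs under $\mvhintree$ differ. The mechanism is the one already visible in the construction of $\belief$: the belief in a sentence sums $\mvhintree$ over the constituents of its \emph{syntactic} dnf, and two logically equivalent sentences can have dnfs differing by an \emph{inconsistent} constituent, which carries strictly positive weight in any HT that is not a depth HT.

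First I would record the bookkeeping: unwinding the definitions, $\beta(\mvconst^{(0)} \dots \mvconst^{(d)} \Psi^\omega_{\mvconst^{(d)}}) = \belief_\cB(\mvconst^{(0)} \dots \mvconst^{(d)} \Psi^\omega_{\mvconst^{(d)}}) = \mvhintree(\mvconst^{(d)})$, so $\belief(\mvform^{(d)}) = \sum_{\mvconst^{(d)} \in \dnf(\mvform^{(d)})} \mvhintree(\mvconst^{(d)})$. Next, fix a depth $d$ at which $\consts^{(d)}$ contains an inconsistent constituent $\mvconst^{(d)}_*$; such constituents exist --- already for the language with one monadic predicate $P$ at $d = 2$ (take the constituent asserting $(\exists x)(P(x) \land \lnot(\exists x')P(x'))$, which is unsatisfiable under the inclusive reading of quantifiers), and more vividly in the language of Section~\ref{subsec:prelim:mot}; indeed the existence of inconsistent (even non-trivially-inconsistent) constituents is precisely the ``impossible possible worlds'' phenomenon. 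Let $\mvhintree$ be the uninformative HT for $\cL$, so $\mvhintree(\mvconst) = 1/|\consts^{(e)}| > 0$ for every constituent $\mvconst \in \consts^{(e)}$ (the same conclusion holds for any \emph{reasonable} HT if one insists that $\mvconst^{(d)}_*$ be chosen not trivially inconsistent).

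Now set $\mvform_1 \eqdef \lOr_{\mvconst^{(d)} \in \consts^{(d)}} \mvconst^{(d)}$ and $\mvform_2 \eqdef \lOr_{\mvconst^{(d)} \in \consts^{(d)} \setminus \set{\mvconst^{(d)}_*}} \mvconst^{(d)}$, both depth-$d$ sentences. They are logically equivalent, since they differ only by the disjunct $\mvconst^{(d)}_*$, which is unsatisfiable and hence $\equiv \bot$. Each is already displayed as a disjunction of depth-$d$ constituents, so $\dnf(\mvform_1) = \consts^{(d)}$ and $\dnf(\mvform_2) = \consts^{(d)} \setminus \set{\mvconst^{(d)}_*}$. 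Hence $\belief(\mvform_1) = \sum_{\mvconst^{(d)} \in \consts^{(d)}} \mvhintree(\mvconst^{(d)}) = 1$ by Normalization (Proposition~\ref{prop:repr:norm}), while $\belief(\mvform_2) = 1 - \mvhintree(\mvconst^{(d)}_*) < 1$, so $\belief(\mvform_1) \neq \belief(\mvform_2)$, as required.

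The points deserving care rather than computation are two. First, one must actually exhibit an inconsistent constituent of some depth in the chosen language; the explicit monadic example settles this, and more generally it is guaranteed by the completeness theorem for constituents (which would be vacuous without inconsistent constituents). Second, one uses that $\dnf$ returns exactly the displayed set of constituents when applied to something already written as a disjunction of depth-$d$ constituents --- immediate since a dnf is a normal form, but worth stating. If pinning down the behavior of $\dnf$ on such explicit disjunctions is felt to be too informal, an equivalent route is to take $\mvform_1 = \mvconst^{(d)}$ for a consistent $\mvconst^{(d)}$ and $\mvform_2 = \mvconst^{(d)} \lor \mvconst^{(d)}_*$, or to exploit the gap between $\expand(e,\cdot)$ and the pruned refinement tree of Section~\ref{subsubsec:repr:ht:refine}; I would fall back to one of those.
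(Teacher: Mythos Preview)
Your proof is correct and uses essentially the same mechanism as the paper: take the uninformative HT and exploit that an inconsistent constituent carries positive weight, so two logically equivalent sentences whose dnfs differ by that constituent receive different beliefs. The paper's chosen witnesses are slightly different---it picks two inconsistent constituents $\mvconst^{(d)}_1, \mvconst^{(d)}_2$ and compares $\mvconst^{(d)}_1$ with $\mvconst^{(d)}_1 \land \mvconst^{(d)}_2$ (the latter having empty dnf by mutual exclusion)---but this is the same idea in dual form, and your fallback ``$\mvconst^{(d)}$ versus $\mvconst^{(d)} \lor \mvconst^{(d)}_*$'' is essentially the paper's argument with $\land$ replaced by $\lor$.
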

\begin{proof}
Let $\mvhintree$ be an uninformative HT. Pick any two inconsistent constituents $\mvconst^{(d)}_1$ and $\mvconst^{(d)}_2$. Then $\mvconst^{(d)}_1 \equiv \mvconst^{(d)}_1 \land \mvconst^{(d)}_2$ but $\belief(\mvconst^{(d)}_1) \neq \belief(\mvconst^{(d)}_1 \land \mvconst^{(d)}_2)$.
\end{proof}

Although logical omniscience fails, we cannot assign probabilities arbitrarily. The following proposition highlights some constraints on the probability assignments.
\begin{proposition} The probability on first-order sentences has the following properties:
\begin{enumerate}[noitemsep]
    \item $\belief(\lnot \mvform) = 1 - \belief(\mvform)$;
    \item $\belief(\mvform_1 \land \mvform_2) \leq \min(\belief(\mvform_1), \belief(\mvform_2))$;
    \item $\max(\belief(\mvform_1), \belief(\mvform_2)) \leq \belief(\mvform_1 \lor \mvform_2)$;
    \item
    \[
    \belief((\forall x) \mvform) \leq \min_{\mvconst \in \dnf((\exists x) \lnot \mvform)} \set{1 - \belief(\mvconst)}) \,\mbox{; and}
    \]
    \item
    \[
    \max_{\mvconst \in \dnf((\exists x) \mvform)} \set{ \belief(\mvconst) } \leq \belief((\exists x) \mvform) \,.
    \]
\end{enumerate}
\end{proposition}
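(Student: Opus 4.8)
The key observation is that $\belief$ is built from the measure $\beta$ via the dnf operation: $\belief(\mvform^{(d)}) = \sum_{\mvconst^{(d)} \in \dnf(\mvform^{(d)})} \mvhintree(\mvconst^{(d)})$, and since constituents of a fixed depth are mutually exclusive and the $\mvhintree$-weights at depth $d$ sum to $1$ (Proposition~\ref{prop:repr:norm}), $\belief$ behaves at depth $d$ exactly like a probability on the finite set $\consts^{(d)}$ pushed through dnf sets. So for each part I would first reduce to a single common depth $d$, then compute the dnf of the relevant compound formula in terms of the dnfs of its constituents, then read off the inequality from elementary facts about sums of nonnegative weights over sets.

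\textbf{Steps.} First, for (1): at depth $d$, $\dnf(\lnot\mvform^{(d)})$ is the complement $\consts^{(d)} \setminus \dnf(\mvform^{(d)})$ — this is essentially the Existence and Mutual exclusion clauses of Proposition~\ref{prop:prelim:prop} together with the fact that $\mvform \lor \lnot\mvform$ is logically valid at depth $d$ (so its dnf is all of $\consts^{(d)}$). Then $\belief(\lnot\mvform) = \sum_{\consts^{(d)}\setminus\dnf(\mvform^{(d)})}\mvhintree = 1 - \belief(\mvform)$ by normalization. For (2) and (3): if both $\mvform_1,\mvform_2$ have depth $d$, I would argue $\dnf(\mvform_1 \land \mvform_2) \subseteq \dnf(\mvform_1) \cap \dnf(\mvform_2)$ and $\dnf(\mvform_1 \lor \mvform_2) \supseteq \dnf(\mvform_1) \cup \dnf(\mvform_2)$ (a constituent implying $\mvform_1 \land \mvform_2$ implies each conjunct; a constituent implying either disjunct implies the disjunction). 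Monotonicity of sums of nonnegative weights then gives $\belief(\mvform_1\land\mvform_2) \le \belief(\mvform_i)$ for each $i$, hence $\le \min$, and dually for $\lor$; if the depths differ I first expand both to the common maximum depth using the Expansion clause, noting expansion partitions a constituent's weight among its refinements so $\belief$ is unchanged. For (4): write $\belief((\forall x)\mvform) = 1 - \belief((\exists x)\lnot\mvform)$ by part (1), and since the constituents $\mvconst \in \dnf((\exists x)\lnot\mvform)$ are mutually exclusive, $\belief((\exists x)\lnot\mvform) = \sum_{\mvconst}\belief(\mvconst) \ge \max_\mvconst \belief(\mvconst)$, giving $\belief((\forall x)\mvform) \le 1 - \max_\mvconst\belief(\mvconst) = \min_\mvconst\{1 - \belief(\mvconst)\}$. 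For (5): $\dnf((\exists x)\mvform)$ is again a set of mutually exclusive constituents whose beliefs sum to $\belief((\exists x)\mvform)$, so each term is $\le$ the sum, hence $\max_\mvconst\belief(\mvconst) \le \belief((\exists x)\mvform)$.

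\textbf{Main obstacle.} The routine parts are the sum manipulations; the delicate point is handling quantifier depth bookkeeping carefully. When I claim $\dnf((\exists x)\lnot\mvform)$ consists of constituents and that $\belief$ of a constituent is just its $\mvhintree$-weight, I need the depths to line up: a depth-$d$ formula $\mvform[x]$ with one free variable $x$ has, after the quantifier $(\exists x)$ is applied and then universal closure is taken, some definite depth, and $\dnf$ of it is a set of constituents at that depth. I would also need to be slightly careful in (4)--(5) that "$\mvconst \in \dnf((\exists x)\mvform)$" really does range over genuine constituents (not attributive constituents at a deeper level) so that $\belief(\mvconst) = \mvhintree(\mvconst)$ holds by definition of $\beta$ on basic opens; this is where I would lean on the Existence clause of Proposition~\ref{prop:prelim:prop} and the constant-free simplification in the Remark, which guarantees every sentence has a dnf into $\consts^{(d)}$ for the appropriate $d$. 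I expect no real difficulty beyond stating these depth alignments cleanly.
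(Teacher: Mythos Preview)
Your proposal is correct and matches the paper's approach: the paper's proof is simply the one-line remark ``These all follow from set-theoretic manipulations,'' and what you have written is precisely the unpacking of that remark via dnf-set containments, normalization (Proposition~\ref{prop:repr:norm}), and depth alignment through expansion. Your treatment is in fact more explicit than the paper's, and your flagged ``main obstacle'' about depth bookkeeping is the only genuine bookkeeping point, which you handle correctly.
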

\begin{proof}
These all follow from set-theoretic manipulations.
\end{proof}
\noindent For the case of universal and existential quantification, the minimum and maximum are taken over constituents, \ie, possible kinds of individuals, as opposed to individuals in the domain of quantification. Note that this differs with the Gaifman condition~\citep[\eg, see][]{gaifman1964concerning} which defines the probability of a universal or existential as the infimum or supremum over individuals in the domain.

The beliefs possessed by a logically omniscient agent are not computable, and assign probability one to logically valid statements and probability zero to logically invalid statement.
\begin{proposition}\hfill
\begin{enumerate}[noitemsep]
    \item The beliefs with respect to a depth HT satisfy $\belief(\mvform) = 1$ when $\vDash \mvform$ and $\belief(\mvform) = 0$ when $\nvDash \mvform$.
    \item Depth beliefs are not computable.
\end{enumerate}
\end{proposition}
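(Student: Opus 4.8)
The plan is to unfold the definition of $\belief$ and reduce to Normalization (Proposition~\ref{prop:repr:norm}) together with the defining property of a depth HT. Since $\beta(\mvconst^{(0)}\dots\mvconst^{(d)}\Psi^\omega_{\mvconst^{(d)}}) = \belief_\cB(\mvconst^{(0)}\dots\mvconst^{(d)}\Psi^\omega_{\mvconst^{(d)}}) = \mvhintree(\mvconst^{(d)})$, we have $\belief(\mvform^{(d)}) = \sum_{\mvconst^{(d)} \in \dnf(\mvform^{(d)})} \mvhintree(\mvconst^{(d)})$. By Existence (Proposition~\ref{prop:prelim:prop}), $\mvform^{(d)} \equiv \bigvee_{\mvconst^{(d)} \in \dnf(\mvform^{(d)})} \mvconst^{(d)}$, so every constituent occurring in the dnf implies $\mvform$. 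If $\mvform$ is inconsistent (which I take to be the content of $\nvDash\mvform$ here), all of these constituents are inconsistent, hence are assigned weight $0$ by a depth HT, and $\belief(\mvform) = 0$. If $\vDash\mvform$, I would first show that \emph{every} consistent depth-$d$ constituent lies in $\dnf(\mvform^{(d)})$: such a constituent $\mvconst^{(d)}$ has a model $\cM$; since $\mvform$ is valid, $\cM \vDash \mvform$, hence $\cM$ satisfies one of the disjuncts $\mvconst'$ of $\bigvee_{\mvconst^{(d)} \in \dnf(\mvform^{(d)})} \mvconst^{(d)}$; but $\cM$ also satisfies $\mvconst^{(d)}$, and two distinct depth-$d$ constituents are mutually exclusive (Proposition~\ref{prop:prelim:prop}), so $\mvconst' = \mvconst^{(d)}$ and thus $\mvconst^{(d)} \in \dnf(\mvform^{(d)})$. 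By Normalization the depth-$d$ weights sum to $1$, and since a depth HT puts weight $0$ on the inconsistent constituents, the consistent ones already carry all of that weight; as they all lie in $\dnf(\mvform^{(d)}) \subseteq \consts^{(d)}$, sandwiching gives $\belief(\mvform) = 1$.

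\textbf{Part 2.} The plan is a reduction from the undecidability of the consistency of constituents (noted in Section~\ref{subsec:prelim:dnf}). A constituent's dnf is itself, $\dnf(\mvconst^{(d)}) = \set{\mvconst^{(d)}}$, so $\belief(\mvconst^{(d)}) = \mvhintree(\mvconst^{(d)})$; hence if $\belief$ were computable then the depth HT $\mvhintree$ would be computable as a function on constituents. Taking $\mvhintree$ to be, concretely, the depth HT obtained as $\lim_{d\to\infty}\mvhintree^d$ starting from a reasonable $\mvhintree^0$ (e.g. an uninformative HT, which is reasonable), Proposition~\ref{prop:repr:ht:renorm:limit} (item 2, with the monotone-limit analysis of items 1 and 3) gives that $\mvhintree(\mvconst^{(d)}) > 0$ if and only if $\mvconst^{(d)}$ is consistent. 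Then a procedure computing $\belief$ would decide consistency of constituents, a contradiction. Equivalently, for this $\mvhintree$ any non-valid sentence $\mvform$ has some consistent constituent of positive weight lying outside $\dnf(\mvform^{(d)})$, so $\belief(\mvform) = 1$ iff $\vDash\mvform$, and computing $\belief$ would decide first-order validity.

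\textbf{Main obstacle.} The delicate point is in Part 2: the definition of a depth HT forces weight $0$ only \emph{onto} the inconsistent constituents and constrains the positive weights merely by coherence and normalization, so the reduction must be anchored at a specific depth HT whose positivity pattern genuinely characterizes consistency (the renormalization limit does, by Proposition~\ref{prop:repr:ht:renorm:limit}), rather than an arbitrary one. In Part 1 the only step needing care is the inclusion ``every consistent depth-$d$ constituent occurs in the dnf of a valid sentence,'' which is exactly where mutual exclusivity of constituents is used; the rest is bookkeeping with Normalization.
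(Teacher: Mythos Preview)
Your proof is correct and follows essentially the same approach as the paper: Part~1 unfolds $\belief$ to a sum of $\mvhintree$-weights, uses that a valid sentence's dnf contains every consistent constituent (you supply the model-and-mutual-exclusivity argument the paper merely cites as ``recall''), and appeals to Normalization; Part~2 reduces computability of depth beliefs to deciding consistency of constituents via the renormalization-limit depth HT. Your Part~2 is in fact more careful than the paper's terse version---you correctly flag that an arbitrary depth HT need not assign positive weight to every consistent constituent, and you anchor the reduction at the specific limit $\lim_{d\to\infty}\mvhintree^d$ from a reasonable seed, whereas the paper's invocation of ``eventually unsupported/always supported'' leaves this choice implicit.
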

\begin{proof}\hfill
\begin{enumerate}[noitemsep]
    \item When $\nvDash \mvform^{(d)}$, then the dnf of $\mvform^{(d)}$ contains only inconsistent constituents so that $\belief(\mvform^{(d)}) = 0$. To see that $\belief(\mvform^{(d)}) = 1$ when $\vDash \mvform^{(d)}$, recall a formula $\mvform^{(d)}$ is logically valid iff its dnf contains all consistent constituents at depth $d$. By the normalization property of a HT, we have that the $\sum_{\mvconst^{(d)} \mbox{ consistent}} \mvhintree(\mvconst^{(d)}) = 1$ so that $\belief(\mvform^{(d)}) = 1$ when $\vDash \mvform^{(d)}$ as required.
    \item Suppose for the sake of contradiction that depth beliefs are computable. As a constituent is eventually unsupported if it is inconsistent and always supported if it is consistent, we thus have a decision procedure for validity of first-order logic, a contradiction.
\end{enumerate}
\end{proof}

\subsection{An Embedding Space for First-Order Logic}
\label{subsec:repr:embed}

In this section, we embed first-order statements into an associated Hilbert space where mutual exclusion in logic appears as orthogonality in the space. Once we embed first-order statements, we will be able to relate certain logical operations on sentences with operators in the space. As some probabilistic operations can be interpreted as operators, we will also obtain probabilistic analogues of logical operations.

The Hilbert space we choose for the embedding is the standard one obtained by considering square integrable functions over a measurable space.\footnote{For more background on functional analysis, we refer the reader to~\citep{bachman2000fa}.} Let $L^2(\Psi^\omega, \beta)$ be the (weighted) $L^2$ space associated with the probability space $(\Psi^\omega, \sigma(\cO), \beta)$.
\begin{definition}
We have that $L^2(\Psi^\omega, \beta)$ is the Hilbert space \emph{associated} with $\beta$ where the inner product $\inner{\cdot}{\cdot}: L^2(\Psi^\omega, \beta) \times L^2(\Psi^\omega, \beta) \rightarrow \R$ is given by
\[
\inner{f}{g} = \int f \cdot \bar{g} \, d\beta
\]
for $f, g \in L^2(\Psi^\omega, \beta)$ (\ie, $f$ and $g$ are square integrable) and $\bar{g}$ denotes the complex conjugate.\footnote{Because the codomain is $\R$ in our case, the complex conjugate acts as an identity.} As usual, the inner product induces a norm $\lVert \cdot \rVert: L^2(\Psi^\omega, \beta) \rightarrow \R$ where $\lVert f \rVert = \sqrt{\inner{f}{f}}$.
\end{definition}
\noindent As notation, we will overload normal arithmetic operations on numbers to mean their pointwise counterparts on functions. For example, $f + g \eqdef x \mapsto f(x) + g(x)$. We use the infix operator $\oplus$ to take the maximum of two functions: $f \oplus g \eqdef x \mapsto \max(f(x), g(x))$. Similarly, we use the infix operator $\ominus$ to take the minimum of two functions: $f \ominus g \eqdef x \mapsto \min(f(x), g(x))$. 

\begin{figure}[t]
    \centering
    \tdplotsetmaincoords{60}{120} 
\begin{tikzpicture} [scale=3, tdplot_main_coords, axis/.style={->,thick}, 
vector/.style={-stealth,blue!50,very thick}, 
vector guide/.style={dashed,blue!50}]

\coordinate (O) at (0,0,0);


\pgfmathsetmacro{\ax}{0.2}
\pgfmathsetmacro{\ay}{0.3}
\pgfmathsetmacro{\az}{0.4}

\coordinate (P) at (\ax,\ay,\az);

\draw[axis] (0,0,0) -- (1,0,0) node[anchor=north east]{$\mvconst^{(d)}_i$};
\draw[axis] (0,0,0) -- (0,1,0) node[anchor=north west]{$\mvconst^{(d)}_j$};
\draw[axis] (0,0,0) -- (0,0,1) node[anchor=south]{$\mvconst^{(d)}_k$};

\draw[vector] (O) -- (P);

\draw[vector guide]         (O) -- (\ax,\ay,0);
\draw[vector guide] (\ax,\ay,0) -- (P);
\draw[vector guide]         (P) -- (0,0,\az);
\draw[vector guide] (\ax,\ay,0) -- (0,\ay,0);
\draw[vector guide] (\ax,\ay,0) -- (0,\ay,0);
\draw[vector guide] (\ax,\ay,0) -- (\ax,0,0);
\node[tdplot_main_coords,anchor=east]
at (\ax,0,0){(\ax, 0, 0)};
\node[tdplot_main_coords,anchor=west]
at (0,\ay,0){(0, \ay, 0)};
\node[tdplot_main_coords,anchor=south]
at (0,0,\az){(0, 0, \az)};
\end{tikzpicture}
    \caption{An illustration of the embedding of $\dnf(\mvconst^{(d)}) = \mvconst^{(d)}_i \lor \mvconst^{(d)}_j \lor \mvconst^{(d)}_j$ where we restrict attention to sentences of depth $d$ or less and we have $\beta(\mvconst^{(0)} \dots \mvconst^{(d)}_i \Psi^\omega_{\mvconst^{(d)}_i}) = 0.2$, $\beta(\mvconst^{(0)} \dots \mvconst^{(d)}_j \Psi^\omega_{\mvconst^{(d)}_j}) = 0.3$, and $\beta(\mvconst^{(0)} \dots \mvconst^{(d)}_k \Psi^\omega_{\mvconst^{(d)}_k}) = 0.4$. Observe that $\belief(\mvform^{(d)}) = 0.8$ so that $\mvform^{(d)}$ is an independent statement with respect to our current beliefs $\belief$.}
    \label{fig:repr:hilbert}
\end{figure}

We embed first-order statements into $L^2(\Psi^\omega, \beta)$ using the intuition that every first-order statement can be written as a finite disjunction of constituents, \ie, mutually exclusive or ``orthogonal" possibilities.
\begin{definition}
Define an embedding into $L^2(\Psi^\omega, \beta)$ as
\[
\mvform^{(d)} \mapsto \sum_{\mvconst^{(d)} \in \dnf(\mvform^{(d)})} \chi_{\mvconst^{(0)} \dots \mvconst^{(d)} \Psi^\omega_{\mvconst^{(d)}}}(\cdot)
\]
where $\chi_X(\cdot)$ is the characteristic function over the set $X$. We write the corresponding element of $\mvform^{(d)} \in \cL$ as $\bm{\mvform^{(d)}} \in L^2(\Psi^\omega, \beta)$.
\end{definition}
\noindent When we consider sentences with maximum depth $D$, then each
\[
\frac{1}{\sqrt{\beta(\mvconst^{(0)} \dots \mvconst^{(D)} \Psi^ \omega_{\mvconst^{(D)}})}} \chi_{\mvconst^{(0)} \dots \mvconst^{(D)} \Psi^\omega_{\mvconst^{(D)}}}(\cdot)
\]
is a basis vector when $\beta(\mvconst^{(0)} \dots \mvconst^{(D)} \Psi^\omega_{\mvconst^{(D)}}) > 0$. Indeed, we can interpret the fact that every first-order sentence $\mvform{(d)}$ of depth $d \leq D$ can be written as a dnf as the analog of the fact that every vector in a finite-dimensional vector space can be written as a (finite) linear combination of basis vectors. Figure~\ref{fig:repr:hilbert} gives an illustration of an example embedding. 

Because we have an embedding of first-order sentences into $L^2(\Psi^\omega, \beta)$ and Hilbert spaces admit complete orthonormal sets $\cA$, we can also write every first-order sentence as a sum of elements from $\cA$. We can think of $\cA$ as an orthogonal ``basis" for first-order logic.
\begin{proposition}
Let $\cA$ be a complete orthonormal set for $L^2(\Psi^\omega, \beta)$. Then $|\cA|$ is either (1) finite or (2) countable depending on the distribution $\beta$.
\end{proposition}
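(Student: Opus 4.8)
The plan is to prove that $L^2(\Psi^\omega, \beta)$ is a \emph{separable} Hilbert space and then invoke two standard facts: a separable Hilbert space has a complete orthonormal set of cardinality at most countably infinite, and any two complete orthonormal sets of a Hilbert space have the same cardinality. Together these force $|\cA|$ to be finite or countably infinite, and which of the two occurs is exactly the ``dimension'' of $L^2(\Psi^\omega, \beta)$, hence a feature of $\beta$ (matching the dichotomy in the statement).

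First I would check that the generating family $\cB$ is countable. Since the language has finitely many predicates and no function symbols or constants, each $\consts^{(d)}$ is finite (this is immediate from the inductive description of the index sets $\cG^d_k$), so $\consts$ is a countable union of finite sets, and the refinement tree $(\consts, \xi)$ is a countable, finitely branching tree. Hence each set $\Psi^d$ of length-$d$ paths is finite, $\bigcup_{d \in \N}\Psi^d$ is countable, and therefore $\cB = \set{\mvconst^{(0)} \dots \mvconst^{(d)} \Psi^\omega_{\mvconst^{(d)}} \ST \mvconst^{(0)} \dots \mvconst^{(d)} \in \Psi^d} \cup \set{\emptyset}$ is countable. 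Now $\cB$ is a countable $\pi$-system generating the Borel $\sigma$-algebra $\sigma(\cO)$, and $\beta$ is a finite measure, so $L^2(\Psi^\omega, \beta)$ is separable: letting the algebra generated by $\cB$ be the (still countable) collection $\mathcal{R}$, indicator functions $\chi_A$ with $A \in \mathcal{R}$ approximate $\chi_B$ for arbitrary $B \in \sigma(\cO)$ in $L^2$-norm because $\beta$ is finite, simple functions are $L^2$-dense, and arbitrary real coefficients are approximable by rationals; thus the rational-coefficient simple functions built from $\mathcal{R}$ form a countable $L^2$-dense set.

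An equivalent, more hands-on route is to construct an orthonormal system directly. For a fixed depth $d$ the basic opens $\mvconst^{(0)} \dots \mvconst^{(d)} \Psi^\omega_{\mvconst^{(d)}}$ with distinct $\mvconst^{(d)} \in \consts^{(d)}$ are pairwise disjoint, so their normalized characteristic functions are orthonormal, and for varying $d$ they are nested along refinement chains; running Gram--Schmidt depth by depth over the positive-measure basic opens produces a countable orthonormal family whose closed linear span contains every $\chi_{\mvconst^{(0)} \dots \mvconst^{(d)} \Psi^\omega_{\mvconst^{(d)}}}$, hence (by the finite-measure approximation above) all of $L^2(\Psi^\omega, \beta)$. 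This is a countable complete orthonormal set, so by the equicardinality of complete orthonormal sets, $|\cA|$ is finite or countably infinite.

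Finally I would record that both alternatives genuinely occur. For a HT concentrated on a single infinite chain $\mvconst^{(0)} \expandsinto \mvconst^{(1)} \expandsinto \dots$ (weight $1$ along the chain and $0$ off it, which satisfies both HT axioms), the measure $\beta$ has a single atom, $L^2(\Psi^\omega, \beta) \cong \R$, and $\cA$ is a singleton; whereas for the uninformative HT every basic open has strictly positive measure, so the orthogonal system constructed above is genuinely infinite and $|\cA|$ is countably infinite. The only substantive step here is separability, and the difficulty there is bookkeeping rather than depth --- verifying that the countable algebra generated by $\cB$ yields an $L^2$-dense family of simple functions; everything else is a direct appeal to standard Hilbert-space and $\sigma$-finite measure theory from the cited references.
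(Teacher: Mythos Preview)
Your proof is correct and takes a genuinely different route from the paper. The paper argues in two lines: the finite case occurs when $\beta$ is supported on finitely many atoms, and the at-most-countable bound is obtained by invoking a measure-preserving bijection (up to null sets) between $\Psi^\omega$ and $[0,1]$, reducing to the known separability of $L^2$ over the unit interval. You instead establish separability from scratch by exploiting the countability of the basis $\cB$ (which follows from each $\consts^{(d)}$ being finite) and then running the standard finite-measure approximation argument through the generated algebra; your alternative Gram--Schmidt construction along the refinement tree is a nice concrete witness. What the paper's approach buys is brevity, at the cost of importing the isomorphism theorem for standard probability spaces as a black box; what your approach buys is self-containment---everything is verified directly from the combinatorics of the refinement tree and elementary measure theory, with no appeal to the classification of Lebesgue spaces. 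Your examples for the two alternatives (a HT concentrated on a single chain versus the uninformative HT) are also more explicit than the paper's, which only sketches the finite case.
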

\begin{proof}
$\cA$ can be finite-dimensional when $\beta$ has a finite number of atoms that contain probability $1$. In particular, the finite collection of characteristic functions supported on those atoms forms a basis. To see that $|\cA|$ can be countable, observe that there is a measure preserving bijection up to measure zero between $\Psi^\omega$ and $[0, 1]$. The result follows as $L^2([0, 1], \beta)$ admits a countable complete orthonormal set.
\end{proof}
\noindent In the rest of this section, we will only consider $L^2(\Psi^\omega, \beta)$ where $\beta$ is \emph{reasonable}, \ie, when $\beta$ is derived from a reasonable HT $\mvhintree$.
\begin{proposition} The embedding has the following properties.
\begin{description}[noitemsep]
    \item[Negation as orthogonality] We have $\inner{\bm{\mvform}}{\bm{\lnot \mvform}} = 0$.
    \item[Or as maximum] We have that $\bm{\mvform_1 \lor \mvform_2} = \bm{\mvform_1} \oplus \bm{\mvform_2}$ for any $f, g \in L^2(\Psi^\omega, \beta)$.
    \item[Orthogonality implies mutual exclusion] If $\inner{\bm{\mvform_1}}{\bm{\mvform_2}} = 0$ then $\mvform_1 \implies \lnot \mvform_2$ for any sentences $\mvform_1$ and $\mvform_2$.
\end{description}
\end{proposition}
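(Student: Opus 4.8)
The plan is to reduce every item to a statement about the basic opens of the refinement tree. Write $B_{\mvconst^{(d)}} \eqdef \mvconst^{(0)}\dots\mvconst^{(d)}\Psi^\omega_{\mvconst^{(d)}}$ and, for a depth-$d$ sentence $\mvform^{(d)}$, set $U(\mvform^{(d)}) \eqdef \bigcup_{\mvconst^{(d)}\in\dnf(\mvform^{(d)})}B_{\mvconst^{(d)}}$. The first thing to record is that distinct depth-$d$ constituents give disjoint basic opens (immediate from the tree structure of $(\consts,\xi)$: an infinite refinement path has a unique depth-$d$ prefix), so that the sum of characteristic functions defining the embedding collapses to a single indicator, $\bm{\mvform^{(d)}} = \chi_{U(\mvform^{(d)})}$. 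I would then invoke the standard behavior of $\dnf$ under the connectives, all consequences of the exhaustiveness and mutual exclusion of constituents at a fixed depth (Proposition~\ref{prop:prelim:prop}): $\dnf(\mvform_1 \lor \mvform_2) = \dnf(\mvform_1) \cup \dnf(\mvform_2)$, $\dnf(\mvform_1 \land \mvform_2) = \dnf(\mvform_1) \cap \dnf(\mvform_2)$, and $\dnf(\lnot\mvform^{(d)}) = \consts^{(d)} \setminus \dnf(\mvform^{(d)})$. By the Expansion property these may be taken at a common depth $d$, and $\chi_{U(\mvform^{(d)})}$ is independent of the chosen depth because each $B_{\mvconst^{(d)}}$ is the disjoint union of the $B_{\mvconst^{(d+1)}}$ over its children (the fact underlying Proposition~\ref{prop:repr:consistent}); hence $U(\cdot)$ is a well-defined set map sending $\lor \mapsto \cup$, $\land \mapsto \cap$, $\lnot \mapsto$ complement, with the $B_{\mvconst^{(d)}}$ pairwise disjoint.

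Granting this scaffolding, the first two items are immediate. For \textbf{negation as orthogonality}, $U(\mvform)$ and $U(\lnot\mvform) = \Psi^\omega \setminus U(\mvform)$ are disjoint, so
\[
\inner{\bm{\mvform}}{\bm{\lnot\mvform}} = \int \chi_{U(\mvform)}\,\chi_{U(\lnot\mvform)}\, d\beta = \beta(\emptyset) = 0 \,.
\]
For \textbf{or as maximum}, the indicator of a union is the pointwise maximum of the indicators, so $\bm{\mvform_1 \lor \mvform_2} = \chi_{U(\mvform_1)\cup U(\mvform_2)} = \chi_{U(\mvform_1)} \oplus \chi_{U(\mvform_2)} = \bm{\mvform_1} \oplus \bm{\mvform_2}$ (the ``$f,g\in L^2(\Psi^\omega,\beta)$'' in the statement being a slip for ``sentences $\mvform_1,\mvform_2$'').

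For \textbf{orthogonality implies mutual exclusion} I would argue the contrapositive, and this is the only place where reasonableness of $\beta$ — in force for the rest of the section — is needed. Suppose $\mvform_1 \not\implies \lnot\mvform_2$, i.e.\ $\mvform_1 \land \mvform_2$ is consistent. At a common depth $d$, using $\beta(B_{\mvconst^{(d)}}) = \mvhintree(\mvconst^{(d)})$ and disjointness of the basic opens,
\[
\inner{\bm{\mvform_1}}{\bm{\mvform_2}} = \beta\big(U(\mvform_1) \cap U(\mvform_2)\big) = \beta\big(U(\mvform_1 \land \mvform_2)\big) = \sum_{\mvconst^{(d)} \in \dnf((\mvform_1\land\mvform_2)^{(d)})} \mvhintree(\mvconst^{(d)}) \,.
\]
Since $\mvform_1 \land \mvform_2$ is satisfiable and logically equivalent to the disjunction of the constituents in its dnf, at least one such constituent $\mvconst^*$ is consistent (otherwise the disjunction would be unsatisfiable); being consistent it is not trivially inconsistent, so $\mvhintree(\mvconst^*) > 0$ because $\mvhintree$ is reasonable. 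Hence $\inner{\bm{\mvform_1}}{\bm{\mvform_2}} \geq \mvhintree(\mvconst^*) > 0$, contradicting orthogonality.

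The only genuinely fiddly part is the preliminary paragraph: verifying that $U(\cdot)$ is well-defined independently of the representation depth and that it intertwines the propositional connectives with the Boolean set operations, which rests on the Existence/Mutual-exclusion/Expansion package of Proposition~\ref{prop:prelim:prop} together with the tree structure of the pruned refinement tree. One should also note that the common-depth reduction is harmless for the asymmetric third item, since whether $\mvform_1 \implies \lnot\mvform_2$ holds does not depend on any chosen depth. Once the scaffolding is in place, each of the three claims is a short computation.
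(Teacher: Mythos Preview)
Your proposal is correct and follows essentially the same approach as the paper. The paper's proof is terser---it simply notes disjoint supports for negation, a case analysis on membership in the intersection of the two dnfs for the or case, and for the third item argues directly that every common-depth constituent in the intersection of the expanded dnfs has $\beta$-measure zero and hence (by reasonableness) is inconsistent---but your explicit packaging via $U(\cdot)$ and the observation $\bm{\mvform^{(d)}} = \chi_{U(\mvform^{(d)})}$ is just a cleaner way of saying the same thing, and your contrapositive for the third item is the same argument read backwards.
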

\begin{proof}\hfill
\begin{description}[noitemsep]
    \item[Negation as orthogonality] The dnfs of $\mvform$ and $\lnot \mvform$ are disjoint. Thus $\bm{\mvform}$ and $\bm{\lnot \mvform}$ are supported on disjoint sets so that the inner product is $0$.
    \item[Or as maximum] Let $\mvform_1^{(d_1)}$ and $\mvform_2^{(d_2)}$ and $d = \max(d_1, d_2)$. The result follows by case analysis on whether $\mvconst^{(d)} \in \dnf(\mvconst^{(d_1)}_1) \cap \dnf(\mvconst^{(d_2)}_2)$ or not.
    \item[Orthogonality implies mutual exclusion] Let $\mvform_1^{(d_1)}$ and $\mvform_2^{(d_2)}$ be any two sentences such that $\inner{\bm{\mvform_1^{(d_1)}}}{\bm{\mvform_2^{(d_2)}}} = 0$. Let $d = \max(d_1, d_2)$. For any
    \[
    \mvconst^{(d)} \in \expand(d - d_1, \dnf(\mvform_1^{(d_1)})_ \cap \expand(d - d_2, \dnf(\mvform_2^{(d_2)})) \,,
    \]
    we have that $\beta(\mvconst^{(0)} \dots \mvconst^{(d)} \Psi^\omega_{\mvconst^{(d)}}) = 0$. We conclude that $\mvconst^{(d)}$ is inconsistent as $\beta$ is reasonable. The result follows as the other constituents are mutually exclusive.
\end{description}
\end{proof}
These properties deserve some remarks. The first item shows that a formula and its negation are orthogonal in $L^2(\Psi^\omega, \beta)$. The second item generalizes the first item and shows that orthogonal elements of $L^2(\Psi^\omega, \beta)$ denote possibilities that are logically mutually exclusive. The converse is true when we consider constituents. The third item shows that logical or (unsurprisingly) acts as a maximum (\ie, a join).

We note that quantification has no effect on the Hilbert space representation. For instance, suppose we take a depth $d$ sentence $\mvform^{(d)}$ and introduce a quantifier as to obtain a depth $d+1$ sentence (\eg, by replacing a duplicated mention of a variable with a new quantifier). This corresponds to restricting attention to the subspace of $L^2(\Psi^\omega, \beta)$ spanned by
\[
\set{\bm{\mvconst^{(d+1)}} \ST \mvconst^{(d+1)} \in \dnf(\expand(1, \mvform^{(d)}))} \,,
\]
which is exactly the subspace we would look at when considering the validity of $\mvform^{(d)}$. A similar situation occurs when we eliminate a quantifier (\eg, by eliminating all mentions of a quantified variable with some other quantified variable).

We consider some elementary interactions between probability and first-order logic using the interpretation of probabilistic operations as operators. In particular, we will be able to analyze the relationship between conditioning and implication. 

\paragraph{Correlation}
The correlation $\rho_{\mvform_1, \mvform_2}$ between two sentences $\mvform_1$ and $\mvform_2$ is given as
\[
\rho_{\mvform_1, \mvform_2} \eqdef \frac{\inner{\bm{\mvform_1} - \beta(\mvform_1)}{\bm{\mvform_2} - \beta(\mvform_2)}}{\sqrt{\beta(\mvform_1) (1 - \beta(\mvform_1)) \beta(\mvform_2) (1 - \beta(\mvform_2))}}
\]
when $\beta(\mvform_1) \neq 0$ or $1$ and $\beta(\mvform_2) \neq 0$ or $1$. Thus the correlation between two statements is defined only when we believe them to be independent. As usual, we can interpret the correlation between $\mvform_1$ and $\mvform_2$ as the cosine of the angle between $\bm{\mvform_1}$ and $\bm{\mvform_2}$.

\paragraph{Conditioning}
The conditional expectation of $\bm{\mvform_2}$ with respect to $\bm{\mvform_1}$, written $\E[\bm{\mvform_2} \ST \bm{\mvform_1}]$, is any $\bm{\mvform_1}$-measurable function such that
\[
\int_{\mvform_1} \E[\bm{\mvform_2} \ST \bm{\mvform_1}] \, d\beta = \int_{\mvform_1} \bm{\mvform_2} \, d\beta \,.
\]
As usual, we can interpret conditioning $\bm{\mvform_2}$ on $\bm{\mvform_1}$ as the projection of $\bm{\mvform_2}$ onto $\bm{\mvform_1}$. Observe that $\E[\bm{\mvform_2} \ST \bm{\mvform_1}]$ can be any function when $\mvform_1$ is inconsistent (\ie, $\bm{\mvform_1} = \bm{0}$) and $\beta$ is a depth HT. In logical terms, we have that a contradiction implies any statement. We consider the interaction between conditioning and implication next.

\paragraph{Implication}
As researchers have noted for a long time in the setting of propositional logic, the probability of the implication $\mvform_1 \rightarrow \mvform_2$ is not the same as its corresponding conditional expectation $\E[\bm{\mvform_2} \ST \bm{\mvform_1}]$~\citep[\eg, see][]{dubois1990logical}. The reason given in the propositional setting is that $\mvform_1 \rightarrow \mvform_2 \equiv \lnot \mvform_1 \lor \mvform_2$ computes the union of areas whereas conditional probability computes a ratio of areas (when it exists) so that the two concepts are different. From the perspective of $L^2(\Psi^\omega, \beta)$, we can (somewhat awkwardly) connect the two concepts. As a reminder, we have that
\[
\bm{\mvform_1 \rightarrow \mvform_2} = \bm{\lnot \mvform_1} \oplus \bm{\mvform_2} \,.
\]
We can rewrite this to use conditional expectations as
\[
\bm{\mvform_1 \rightarrow \mvform_2} = \frac{\lVert \bm{\lnot \mvform_1} \rVert}{\lVert \E[\bm{\mvform_2} \ST \bm{\lnot \mvform_1}] \rVert} \E[\bm{\mvform_2} \ST \bm{\lnot \mvform_1}] \oplus \frac{\lVert \bm{\mvform_2} \rVert}{\lVert \E[\bm{\lnot \mvform_1} \ST \bm{\mvform_2}] \rVert} \E[\bm{\lnot \mvform_1} \ST \bm{\mvform_2}] \,.
\]
Observe that we rewrite $\bm{\lnot \mvform_1}$ as the rescaled orthogonal projection of $\bm{\mvform_2}$ onto $\bm{\lnot \mvform_1}$. (Similarly, we rewrite $\bm{\mvform_2}$ as the rescaled orthogonal projection of $\bm{\lnot \mvform_1}$ onto $\bm{\mvform_2}$.) Thus there is a relationship between implication and conditioning although it is not the one we might expect it to be.

\begin{remark}[An embedding space for first-order logic]
From the perspective of machine learning, we can also think of $L^2(\Psi^\omega, \beta)$ as a natural embedding space for representing first-order logic similar to how $\R^d$ has proved to be a useful embedding space for representing natural language~\citep[\eg, see][]{mikolov2013distributed}. In the setting of natural language processing, there are empirical results suggesting that the operations of vector addition and vector subtractions can be used to add and subtract semantic content from word embeddings. In the logical setting, adding a concept encoded by $\mvform_2$ to a concept encoded by $\mvform_1$ corresponds to embedding $\mvform_1 \lor \mvform_2$. Thus we take their maximum as opposed to performing an addition in $L^2(\Psi^\omega, \beta)$.

Suppose we want to subtract a concept $\mvform_2$ from $\mvform_1$. Logically, we would encode this as $\mvform_1 - \mvform_2 \eqdef \mvform_1 \land \lnot \mvform_2$. Then subtraction of semantic content has the following interpretation:
\[
\bm{\mvform_1 - \mvform_2} = \bm{\mvform_1} \ominus \bm{\lnot \mvform_2} \,.
\]
Thus we do not perform the analogous subtraction in $L^2(\Psi^\omega, \beta)$.
\end{remark}

It would be an interesting direction of future work to examine more in depth what can be said about first-order logic from the viewpoint of $L^2(\Psi^\omega, \beta)$. For instance, are there interesting complete orthonormal sets for first-order sentences and what do the corresponding coefficients look like? For now, we simply note that one can embed first-order logic into a Hilbert space. It is not clear to us whether such an embedding is useful although we do think it intriguing that such an embedding exists.

\subsection{Supplementary on Renormalization}
\label{subsec:repr:supp}

This section contains the supplementary proof for the properties of renormalization (Proposition~\ref{prop:repr:renorm}).
\begin{proof}\hfill
\begin{description}[noitemsep]
    \item[Coherence] We show this by case analysis on whether $\mvconst^{(e)} \in \desc(\rho_{\mvhintree, \mvconst^{(d)}_-})$ or $\mvconst^{(e)} \notin \desc(\rho_{\mvhintree, \mvconst^{(d)}_-})$.
    
    Suppose $\mvconst^{(e)} \in \desc(\rho_{\mvhintree, \mvconst^{(d)}_-})$. We have to show that
    \[
    \renorm_{\mvconst^{(d)}_-}(\mvhintree)(\mvconst^{(e)}) = \sum_{\mvconst^{(e+1)} \expansion \mvconst^{(e)}} \renorm_{\mvconst^{(d)}_-}(\mvhintree)(\mvconst^{(e+1)}) \,.
    \]
    We proceed by case analysis on whether $\mvconst^{(e)} \in \cS^-_{\mvhintree, \mvconst^{(d)}_-}$ or $\mvconst^{(e)} \in \cS^+_{\mvhintree, \mvconst^{(d)}_-}$.
    
    In case of the former, we have that $\renorm_{\mvconst^{(d)}_-}(\mvhintree)(\mvconst^{(e)}) = 0$ and
    \[
    \sum_{\mvconst^{(e+1)} \expansion \mvconst^{(e)}} \renorm_{\mvconst^{(d)}_-}(\mvhintree)(\mvconst^{(e+1)}) = 0
    \]
    as required.
    
    In case of the latter, we have that $\renorm_{\mvconst^{(d)}_-}(\mvhintree)(\mvconst^{(e)}) = 0$ and
    \begin{align*}
        \sum_{\mvconst^{(e+1)} \expansion \mvconst^{(e)}} \renorm_{\mvconst^{(d)}_-}(\mvhintree)(\mvconst^{(e+1)}) & = \frac{Z_{\mvhintree, \mvconst^{(d)}_-}}{Z^+_{\mvhintree, \mvconst^{(d)}_-}} \sum_{\mvconst^{(e+1)} \expansion \mvconst^{(e)}} \mvhintree(\mvconst^{(e)})
    \end{align*}
    by rearranging (We need the hypothesis that there is at least one supported constituent, otherwise we divide by zero). The result follows by the coherence of $\mvhintree$.
    
    Suppose $\mvconst^{(e)} \notin \desc(\rho_{\mvhintree, \mvconst^{(d)}_-})$. The only non-trivial case occurs when $\mvconst^{(e)} = \rho_{\mvhintree, \mvconst^{(d)}_-}$. We have
    \begin{align*}
        \sum_{\mvconst^{(e+1)} \expansion \mvconst^{(e)}} \renorm_{\mvconst^{(d)}_-}(\mvhintree)(\mvconst^{(e+1)}) & = \frac{Z_{\mvhintree, \mvconst^{(d)}_-}}{Z^+_{\mvhintree, \mvconst^{(d)}_-}} \sum_{\mvconst^{(e+1)} \in \child(\mvconst^{(e)}) \cap \cS^+_{\mvhintree, \mvconst^{(d)}_-}} \mvhintree(\mvconst^{(e)}) \\
    \end{align*}
    by substituting definitions. The result follows by observing that
    \[
    \sum_{\mvconst^{(e+1)} \in \child(\mvconst^{(e)}) \cap \cS^+_{\mvconst^{(d)}_-}} \mvhintree(\mvconst^{(e)})
    \]
    is exactly $Z^+_{\mvconst^{(d)}_-}$ so the result follows.

    \item[Preservation] By induction on $e$. The base case is trivial. In the inductive case, we have to show that
    \[
    \mvhintree(\rho_{\mvhintree, \mvconst^{(d)}_-}) = \sum_{\mvconst^{(r+e+1)} \in \expand(e+1, \rho_{\mvhintree, \mvconst^{(d)}_-})} \renorm_{\mvconst^{(d)}_-}(\mvhintree)(\mvconst^{(r+e+1)}) \,.
    \]
    Rewriting the right hand side, we obtain
    \begin{align*}
    & \phantom{=} \sum_{\mvconst^{(r+e)} \in \expand(e, \rho_{\mvhintree, \mvconst^{(d)}_-})} \sum_{\mvconst^{(r+e+1)} \expansion \mvconst^{(r+e)}} \renorm_{\mvconst^{(d)}_-}(\mvhintree)(\mvconst^{(r+e+1)}) \\
    & = \sum_{\mvconst^{(r+e)} \in \expand(e, \rho_{\mvhintree, \mvconst^{(d)}_-})} \renorm_{\mvconst^{(d)}_-}(\mvhintree)(\mvconst^{(r+e)})
    \end{align*}
    where the equality follows by coherence (Proposition~\ref{prop:repr:renorm}, coherence). The result follows by the induction hypothesis.

    \item[Commutative] The proof is quite tedious so we give the intuition first: $\renorm$ is commutative is because $\renorm$ applies Bayes rule to rescale a subtree of $(\consts, \xi)$ and that rescaling by Bayes rule is commutative. The proof follows in two parts. First, we show that the two subtrees (\ie, descendants of the two $d$-redistribution point) we apply rescaling to via Bayes rule to are identical no matter which order we apply renormalization in. Second, it suffices to show that the rescaling on the two subtrees is commutative (due to the subtree property of renormalization).
    
    We start with part one. We claim that the two $d$-redistribution points encountered are identical no matter which order we carry the renormalization. We show this by a direct (and tedious) case analysis. Suppose we apply $\renorm_{\mvconst^{(d)}_1}$ first. We perform case analysis on whether (1) $\rho_{\mvhintree, \mvconst^{(d)}_1} \in \anc(\mvconst^{(d)}_1 \sqcup \mvconst^{(d)}_2)$, (2) or $\rho_{\mvhintree, \mvconst^{(d)}_1} = \mvconst^{(d)}_1 \sqcup \mvconst^{(d)}_2$, or (3) $\rho_{\mvhintree, \mvconst^{(d)}_1} \in \desc(\mvconst^{(d)}_1 \sqcup \mvconst^{(d)}_2)$ where $\mvconst^{(d)}_1 \sqcup \mvconst^{(d)}_2$ is the deepest common ancestor of $\mvconst^{(d)}_1$ and $\mvconst^{(d)}_2$.
    
    Consider the first case $\rho_{\mvhintree, \mvconst^{(d)}_1} \in \anc(\mvconst^{(d)}_1 \sqcup \mvconst^{(d)}_2)$. Observe that $\rho_{\renorm_{\mvconst^{(d)}_1}(\mvhintree), \mvconst^{(d)}_2} = \rho_{\mvhintree, \mvconst^{(d)}_1}$. Otherwise, it would contradict that $\rho_{\mvhintree, \mvconst^{(d)}_1}$ has supported children. We need to show that we encounter the same renormalization point applying $\renorm_{\mvconst^{(d)}_2}$ first. To see this, $\rho_{\mvhintree, \mvconst^{(d)}_2} \expansion \mvconst^{(d)}_1 \sqcup \mvconst^{(d)}_2$ contradicts that $\rho_{\mvhintree, \mvconst^{(d)}_1}$ has supported children. Thus $\rho_{\mvhintree, \mvconst^{(d)}_2} \in \anc(\mvconst^{(d)}_1 \sqcup \mvconst^{(d)}_2)$. Thus we conclude that $\rho_{\mvhintree, \mvconst^{(d)}_1} = \rho_{\mvhintree, \mvconst^{(d)}_1}$ because both give the deepest common ancestor with supported children in a tree. Finally, we conclude that $\rho_{\mvhintree, \mvconst^{(d)}_2} = \rho_{\renorm_{\mvconst^{(d)}_2}(\mvhintree), \mvconst^{(d)}_1}$ as required.
    
    Consider the second case $\rho_{\mvhintree, \mvconst^{(d)}_1} = \mvconst^{(d)}_1 \sqcup \mvconst^{(d)}_2$. There are two subcases to consider: either $\rho_{\renorm_{\mvconst^{(d)}_1}(\mvhintree), \mvconst^{(d)}_2} \in \anc(\mvconst^{(d)}_1 \sqcup \mvconst^{(d)}_2)$ or $\rho_{\renorm_{\mvconst^{(d)}_1}(\mvhintree), \mvconst^{(d)}_2} = \mvconst^{(d)}_1 \sqcup \mvconst^{(d)}_2$.
    
    Consider the first subcase $\rho_{\renorm_{\mvconst^{(d)}_1}(\mvhintree), \mvconst^{(d)}_2} \in \anc(\mvconst^{(d)}_1 \sqcup \mvconst^{(d)}_2)$. We conclude that the path from $\rho_{\mvhintree, \mvconst^{(d)}_1}$ to $\mvconst^{(d)}_2$ is the only path that is positively supported after applying $\renorm_{\mvconst^{(d)}_1}(\mvhintree)$. We see that we encounter the same renormalization points applying $\renorm_{\mvconst^{(d)}_2}$ first by performing an even deeper case analysis: either (1) $\rho_{\mvhintree, \mvconst^{(d)}_2} = \rho_{\mvhintree, \mvconst^{(d)}_1})$ which occurs when the path from $\rho_{\mvhintree, \mvconst^{(d)}_1}$ to $\mvconst^{(d)}_2$ is the only path that is positively supported after applying $\renorm_{\mvconst^{(d)}_2}(\mvhintree)$ or (2) $\rho_{\mvhintree, \mvconst^{(d)}_2} = \rho_{\renorm_{\mvconst^{(d)}_1}(\mvhintree), \mvconst^{(d)}_1})$. Thus we conclude that the result holds in this subcase.
    
    Consider the second subcase $\rho_{\renorm_{\mvconst^{(d)}_1}(\mvhintree), \mvconst^{(d)}_2} = \mvconst^{(d)}_1 \sqcup \mvconst^{(d)}_2$. We show that we encounter the same renormalization points applying $\renorm_{\mvconst^{(d)}_2}$ first. Observe that $\rho_{\mvhintree, \mvconst^{(d)}_2} = \mvconst^{(d)}_1 \sqcup \mvconst^{(d)}_2$. Otherwise, it would contradict that $\rho_{\renorm_{\mvconst^{(d)}_1}(\mvhintree), \mvconst^{(d)}_2}$ has supported children. Similarly, observe that $\rho_{\renorm_{\mvconst^{(d)}_2}(\mvhintree), \mvconst^{(d)}_1} = \mvconst^{(d)}_1 \sqcup \mvconst^{(d)}_2$. Thus the result follows.
    
    Consider the third case $\rho_{\mvhintree, \mvconst^{(d)}_1} \in \desc(\mvconst^{(d)}_1 \sqcup \mvconst^{(d)}_2)$. Observe that $\rho_{\renorm_{\mvconst^{(d)}_1}(\mvhintree), \mvconst^{(d)}_2} \expansion \mvconst^{(d)}_1 \sqcup \mvconst^{(d)}_2$ because $\rho_{\mvhintree, \mvconst^{(d)}_1}$ has supported children. There are two subcases to consider: either (1) $\rho_{\renorm_{\mvconst^{(d)}_1}(\mvhintree), \mvconst^{(d)}_2} = \mvconst^{(d)}_1 \sqcup \mvconst^{(d)}_2$ or (2) $\rho_{\renorm_{\mvconst^{(d)}_1}(\mvhintree), \mvconst^{(d)}_2} \in \desc(\mvconst^{(d)}_1 \sqcup \mvconst^{(d)}_2)$.
    
    Consider the first subcase $\rho_{\renorm_{\mvconst^{(d)}_1}(\mvhintree), \mvconst^{(d)}_2} = \mvconst^{(d)}_1 \sqcup \mvconst^{(d)}_2$. We show that we encounter the same renormalization points applying $\renorm_{\mvconst^{(d)}_2}$ first. Observe that $\rho_{\mvhintree, \mvconst^{(d)}_2} = \mvconst^{(d)}_1 \sqcup \mvconst^{(d)}_2$. Otherwise, it would contradict that $\rho_{\renorm_{\mvconst^{(d)}_1}(\mvhintree), \mvconst^{(d)}_2}$ has supported children. Next, we observe that $\rho_{\renorm_{\mvconst^{(d)}_2}(\mvhintree), \mvconst^{(d)}_1} = \rho_{\mvhintree, \mvconst^{(d)}_1}$ as required.
    
    Consider the second subcase $\rho_{\renorm_{\mvconst^{(d)}_1}(\mvhintree), \mvconst^{(d)}_2} \in \desc(\mvconst^{(d)}_1 \sqcup \mvconst^{(d)}_2)$. Observe that the subtrees of $\rho_{\mvhintree, \mvconst^{(d)}_1}$ and $\rho_{\renorm_{\mvconst^{(d)}_1}(\mvhintree), \mvconst^{(d)}_2}$ are non-overlapping so that the result follows. 
    
    Consequently, there are two $d$-redistribution points $\rho_{\mvconst^{(d)}_a}$ and $\rho_{\mvconst^{(d)}_b}$ and three cases to consider to see that renormalization is commutative for part two of the proof: either (1) $\rho_{\mvconst^{(d)}_a}$ and $\rho_{\mvconst^{(d)}_b}$ are not ancestors of each other, (2) $\rho_{\mvconst^{(d)}_a} = \rho_{\mvconst^{(d)}_b}$, or (3) $\rho_{\mvconst^{(d)}_a}$ is an ancestor of $\rho_{\mvconst^{(d)}_b}$ without loss of generality. The first case is straightforward and second case can be seen as a special case of the third.
    
    Consider the third case where $\rho_{\mvconst^{(d)}_a}$ is an ancestor of $\rho_{\mvconst^{(d)}_b}$. We show that the result holds by another (tedious) case analysis. Let $X^\dagger \eqdef X \cup \bigcup_{x \in X} \desc(x)$. We perform a further case analysis on the position of $\mvconst^{(e)}$ with respect to the support function. Note that the renormalization points may be encountered in the same order or different order. If they are encountered in the same order, then the values are obviously identical. Thus we consider the case when they are encountered in a different order. It suffices to consider the case where $\rho_{\mvconst^{(d)}_a}$ is encountered first followed by $\rho_{\mvconst^{(d)}_b}$ by symmetry.
    
    Let $\cD^+_{\rho_{\mvconst^{(d)}_a}} \eqdef D^+_{\mvhintree, \mvconst^{(d)}_a}$ and $\cD^-_{\rho_{\mvhintree, \mvconst^{(d)}_a}} \eqdef \child(\rho_{\mvhintree, \mvconst^{(d)}_a}) \backslash \cD^+_{\rho_{\mvhintree, \mvconst^{(d)}_a}}$. Moreover let
    \[
    \cD^+_{\rho_{\mvhintree, \mvconst^{(d)}_b}} \eqdef D^+_{\renorm_{\mvconst^{(d)}_a}(\mvhintree), \mvconst^{(d)}_b}
    \]
    and
    \[
    \cD^-_{\rho_{\mvconst^{(d)}_b}} \eqdef \child(\rho_{\renorm_{\mvconst^{(d)}_a}(\mvhintree), \mvconst^{(d)}_b}) \backslash \cD^+_{\rho_{\renorm_{\mvconst^{(d)}_a}(\mvhintree), \mvconst^{(d)}_b}} \,.
    \]
    
    The table below summarizes the values assigned to the different regions.
    \begin{center}
    \begin{tabular}{l|l|l}
        Case $\mvconst^{(e)} \in $ & $\rho_{\mvconst^{(d)}_a}$, $\rho_{\mvconst^{(d)}_b}$ & $\rho_{\mvconst^{(d)}_b}$, $\rho_{\mvconst^{(d)}_a}$ \\ \hline
        $(\cD^-_{\rho_{\mvconst^{(d)}_a}})^\dagger$ & $0$ & $0$ \\
        $(\cD^+_{\rho_{\mvconst^{(d)}_a}} \backslash \child(\mvconst^{(d)}_b))^\dagger$ & $\frac{Z_{\mvconst^{(d)}_a}}{Z^+_{\mvconst^{(d)}_a}} \mvhintree(\mvconst^{(e)})$ & $\frac{\bar{Z}_{\mvconst^{(d)}_a}}{\bar{Z}^+_{\mvconst^{(d)}_a}} \mvhintree(\mvconst^{(e)})$ \\
        $(\cD^-_{\rho_{\mvconst^{(d)}_b}})^\dagger$ & $0$ & $0$ \\
        $(\cD^+_{\rho_{\mvconst^{(d)}_b}})^\dagger$ & $\frac{Z_{\mvconst^{(d)}_a}}{Z^+_{\mvconst^{(d)}_a}} \frac{Z_{\mvconst^{(d)}_b}}{Z^+_{\mvconst^{(d)}_b}} \mvhintree(\mvconst^{(e)})$ & $\frac{\bar{Z}_{\mvconst^{(d)}_a}}{\bar{Z}^+_{\mvconst^{(d)}_a}} \frac{\bar{Z}_{\mvconst^{(d)}_b}}{\bar{Z}^+_{\mvconst^{(d)}_b}} \mvhintree(\mvconst^{(e)})$
    \end{tabular}
    \end{center}
    
    After substituting definitions, we see that
    \[
        \frac{Z_{\mvconst^{(d)}_a}}{Z^+_{\mvconst^{(d)}_a}} = \frac{\sum_{\mvconst^{(f)} \in \child(\rho_{\mvconst^{(d)}_a})} \mvhintree(\mvconst^{(f)})}{\sum_{\mvconst^{(f)} \in D^+_{\mvconst^{(d)}_a}} \mvhintree(\mvconst^{(f)})}
    \]
    and
    \[
        \frac{\bar{Z}_{\mvconst^{(d)}_a}}{\bar{Z}^+_{\mvconst^{(d)}_a}} = \frac{\sum_{\mvconst^{(f)} \in \child(\rho_{\mvconst^{(d)}_a})} \renorm_{\mvconst^{(d)}_b}(\mvhintree)(\mvconst^{(f)})}{\sum_{\mvconst^{(f)} \in D^+_{\mvconst^{(d)}_a}} \renorm_{\mvconst^{(d)}_b}(\mvhintree)(\mvconst^{(f)})}
    \]
    are identical. Similarly, we obtain that $\frac{Z_{\mvconst^{(d)}_a}}{Z^+_{\mvconst^{(d)}_a}}$ and $\frac{\bar{Z}_{\mvconst^{(d)}_a}}{\bar{Z}^+_{\mvconst^{(d)}_a}}$ are also identical. Thus the result follows.
\end{description}
\end{proof}

\section{On Conjecturing}
\label{sec:tp}

Although conjecturing does not directly lead to a proof, it is an integral part of proving in practice: we require interesting conjectures to prove or disprove and attempting a proof may lead to interesting conjectures. In this section, we examine conjecturing as (statistical) \emph{model selection}. When we introduce the game, we will see how conjecturing can be applied as a strategy for playing the game (Section~\ref{sec:game}).

A conjecture, in its barest form, is a well-formed mathematical statement that we (1) do not have a proof for and (2) consider ``interesting". The first criterion is obviously necessary. The second criterion is also necessary but is inherently subjective. With these two criterion in mind, we define a conjecturer now.
\begin{definition}
A \emph{conjecturer} is a function
\[
\mvconj : \mathbf{HT}(\cL) \rightarrow \prod_{d \in \N} \permfunc(\setfunc(\consts^{(d)}))
\]
where $\permfunc(X)$ is the set of permutations on the finite set $X$.
\end{definition}
\noindent A conjecturer maps a HT $\mvhintree$ and a depth $d$ to a permutation on the powerset of depth $d$ constituents. A depth $d$ \emph{conjecture} is a depth $d$ dnf, \ie, it is a subset $X \subseteq \consts^{(d)}$ of depth $d$ constituents. By convention, $\emptyset$ corresponds to conjecturing $\false$. A permutation on the powerset of $\consts^{(d)}$ thus provides a ranking of depth $d$ conjectures that we use as a proxy for ranking how interesting depth $d$ conjectures are. We explore how to construct rankings next.

\subsection{``Interesting" as Model Selection}
\label{subsec:conj:ims}

We convert the problem of quantifying how interesting a conjecture is to a model selection problem. Thus we take a statistical viewpoint of conjecturing. We accomplish this in two stages.
\begin{description}[noitemsep]
    \item[Model class] First, we identify each depth $d$ conjecture with a model\footnote{We use model in the statistical sense and not the model-theoretic sense for which have used the word structure instead.} from some class of models $\cH$. That is, we define a surjection $m: \setfunc(\consts^{(d)}) \rightarrow \cH$ from conjectures of any depth to the model class $\cH$.
    \item[Model scoring] Second, we define a scoring function $\mathscr{S}: \cH \rightarrow \R$ for the model class $\cH$, potentially subject to regularization. Given a scoring function $\mathscr{S}: \cH \rightarrow \R$, we can create a ranking on $\cH$ as $h_1 \leq h_2$ when $\mathscr{S}(h_1) \leq \mathscr{S}(h_2)$ with ties broken arbitrarily.
\end{description}
We give an example of a model class and a scoring function, beginning by identifying a subclass of finite distributions as an example model class for conjectures.
\begin{example}[Distribution conjecture class]
Let $\mvhyp_D \eqdef \set{\mvconst^{(d)} \mapsto \mvhintree(\mvconst^{(d)}) \ST \mvconst^{(d)} \in D}$ for $D \in \setfunc(\consts^{(d)})$ and
\[
\mvhyp^\dagger_D = \mvhyp_D \cup \set{* \mapsto 1 - \sum_{(\mvconst^{(d)} \mapsto b) \in \mvhyp_D} b}
\]
be the distribution that adds a unique element $*$ representing the remaining unassigned belief. We call the class
\[
\mathscr{D} \eqdef \set{\mvhyp^\dagger_D \ST D \in \setfunc(\consts^{(D)})}
\]
of finite distributions a \emph{distribution conjecture class}.
\end{example}

Now that we have a model class for conjectures, we can define a corresponding scoring functions for models to rank conjectures.
\begin{example}[Likelihood-entropy scoring]
A \emph{likelihood-entropy} scoring function for the distribution conjecture class $\mathscr{D}$ scores conjectures as a function of their likelihood and entropy: we have
\[
\mathscr{L}(\mvhyp_D^\dagger) \eqdef \frac{c(|D|) \, \ell(\mvhyp_D) }{H^{|D|}} \cdot \begin{cases}
H(\mvhyp_{D^+}^\dagger) & \mbox{when $D^+ \neq \emptyset$} \\
0 & \mbox{otherwise}
\end{cases}
\]
where $D^+ \eqdef \set{\mvconst^{(d)} \ST \mvhyp_D(\mvconst^{(d)}) > 0}$, $H$ is entropy, $H^{|X|}$ is the entropy of the uniform distribution over $|X|$ elements, $\ell(\mvhyp_D) = \sum_{(x \mapsto b) \in \mvhyp_D, x \neq *} b$ is the total probability except for $*$, and $c: \set{1, \dots, |X|} \rightarrow \R^+$ is a positive and concave function over $\set{1, \dots, |X|}$ ordered by $\leq$. We calculate the entropy of the modified distribution $\mvhyp_{D^+}^\dagger$ that only considers the constituents with positive probability. The factor
\[
\frac{c(|D|) \, \ell(\mvhyp_D)}{H^{|D|}}
\]
is a form of regularization. First, recall that the entropy of a uniform distribution increases as the set of support increases so that an unnormalized measure would be biased towards selecting larger dnfs. Thus we normalize by the entropy of the uniform distribution of the appropriate size. Second, we want to encourage dnfs that include enough possibilities. This is what the concave function $c$ achieves. Lastly, we want to ensure that the conjecture captures enough of what we believe to be true. Otherwise, we would be encouraged towards selecting the least believable constituents because these provide the most information. A likelihood-entropy score can be thought of as measuring how ``informative" a conjecture is weighted by how likely it is.
\end{example}

We check the conjectures generated by a conjecturer using a likelihood-entropy scoring function with beliefs $\mvhintree$ under two extremes.
\begin{description}[noitemsep]
    \item[Uninformative] Given an uninformative HT $\mvhintree$ that assigns beliefs uniformly to constituents at every depth, the ranking produced by a likelihood-entropy scoring ranks conjectures is solely a function of the number of constituents in their dnf. This follows directly from the definition of a likelihood-entropy score. 
    \item[Omniscient] Given a depth HT $\mvhintree$, a likelihood-entropy score ranks conjectures at every depth containing consistent constituents higher than conjectures mentioning no consistent constituents. To see this, observe that if a conjecture has no consistent constituents, then $D^+ = \emptyset$ and thus gets assigned score $0$. Moreover, a conjecture that contains consistent constituents has $D^+ \neq \emptyset$ so that it gets assigned positive score. A conjecturer that knows all logical truths will rank true statements according to the regularization factor and higher than any false statement.
\end{description}

\subsection{Top-down versus Bottom-up Regularization}
\label{subsec:conj:tdvbr}

The regularization given by the concave function $c$ in likelihood-entropy scoring can be seen as a form of bottom-up regularization in that we control the size of the conjecture so that it describes just enough possibilities. We can also impose top-down regularization where we control the sizes and form of the first-order sentences in addition to the sizes of their dnfs. The intuition for additionally considering top-down regularization is that we would like the conjectures to be ``compact and structured enough" to write down in addition to describing just enough possibilities. Let $\cK$ be a finite set of first-order sentences, $d_{\text{min}}$ be the minimum depth of formulas in $\cK$, and $D^d_\cK \eqdef \set{ \mvconst^{(d)} \ST \mvconst^{(d)} \in \dnf(\mvform^{(d)}), \mvform^{(d)} \in \cK}$.
\begin{definition}
A $\cK$-regularized conjecturer is a function
\[
\mvconj_\cK: \mathbf{HT}(\cL) \rightarrow \prod_{d \geq d_{\text{min}}} \permfunc(\setfunc(D^d_k)) \,.
\]
\end{definition}
\noindent Thus a conjecturer is a $\cL$-regularized conjecturer. The definition of a $\cK$-regularized conjecturer allows any kind of subset, although it may be useful to use the regularization to restrict the form of the sentences.
\begin{example}
Any finite subset of $\Sigma^0_n$ or $\Pi^0_n$ can be used to form a $\cK$-regularized conjecturer. Recall that these sentences constrain $\forall$ and $\exists$ to occur in an alternating sequence.
\end{example}
\begin{example}
The singleton set $\cK \eqdef \set{\mvconst^{(d)}}$ can be used to form a $\cK$-regularized conjecturer. In particular, such a conjecturer only generates conjectures that are refinements of $\mvconst^{(d)}$.
\end{example}

\section{On Games and Proving}
\label{sec:game}

We introduce an alternating-turn game that involves determining the consistency of constituents. Note that agents are not directly proving theorems so we begin by showing how to construct a prover from beliefs represented by a HT (Section~\ref{subsec:game:btp}). In particular, the prover is complete when logical omniscience is attained (\ie, a depth HT) and sound if the agent maintains reasonable beliefs. We then introduce the game formally (Section~\ref{subsec:game:pathfinder}), examine game play (Section~\ref{subsec:game:play}), and identify how conjecturing fits into game play (Section~\ref{subsec:game:conj}). The game is amenable to self-play training similar to those used to learn Chess and Go, although the challenging task of implementation and empirically testing self-play is beyond the scope of this paper (Section~\ref{subsec:game:self}). One reason for the technical difficulty is that the representation has intractable space requirements. We will comment on how to reduce the space complexity in the next section by using \emph{abstractions} (Section~\ref{sec:abs}).

\begin{remark}[On supervised learning]
We note that we can formulate the learning of a function approximating a depth HT $\mvhintree$ as a supervised learning problem as opposed to constructing a game that is amenable (in principle) to self-play training as we will do in the rest of the section. Although this is possible, we will need to devise a methodology for selecting constituents in the supervised approach. In particular, the typical assumption of independent and identically distributed samples of constituents and their valuations of consistency is not a good one in this setting as constituents can be related to one another via the refinement relation. Indeed, as we will see, agents learn in Pathfinder by considering sequences of \emph{dependent} constituents that are refinements of one another.
\end{remark}

\subsection{From Beliefs to Proofs}
\label{subsec:game:btp}

Before we introduce the game, we explain how to extract a proof attempt from beliefs represented by a HT $\mvhintree$. Define a function $\prove_\mvhintree: \cL \rightarrow \two$ as
\[
\prove_\mvhintree(\mvform^{(d)}) = \begin{cases}
\true & \mbox{$\sum_{\mvconst^{(d)} \in \dnf(\mvform^{(d)})} \mvhintree(\mvconst^{(d)}) = 1$} \\
\false & \mbox{otherwise.}
\end{cases}
\]

We show that the function $\prove_\mvhintree$ converts reasonable beliefs into proofs. 
\begin{proposition}[Soundness and completeness]\hfill
\begin{description}[noitemsep]
    \item[Sound] The procedure $\prove_\mvhintree$ is sound if $\mvhintree$ is reasonable. 
    \item[Complete] The procedure $\prove_\mvhintree$ is complete whenever $\mvhintree$ is a depth HT.
\end{description}
\end{proposition}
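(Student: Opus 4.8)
The plan is to reduce both claims to a single characterization of validity in terms of distributive normal forms, and then read off soundness and completeness from the normalization property of HTs (Proposition~\ref{prop:repr:norm}) together with the defining constraint on the relevant class of HTs. As a preliminary reduction, it suffices to treat sentences: a formula of depth $d$ with $k$ free variables is handled by passing to its universal closure, a sentence of depth $d+k$, exactly as in the definition of $\belief$. So fix a depth-$d$ sentence $\mvform^{(d)}$.

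The first step is the key lemma: $\vDash \mvform^{(d)}$ if and only if $\dnf(\mvform^{(d)})$ contains every consistent depth-$d$ constituent. By the Existence clause of Proposition~\ref{prop:prelim:prop} we have $\mvform^{(d)} \equiv \bigvee_{\mvconst^{(d)} \in \dnf(\mvform^{(d)})} \mvconst^{(d)}$, and the depth-$d$ constituents are jointly exhaustive (a fact of Hintikka's theory, reflected in $\bigvee_{\mvconst^{(d)} \in \consts^{(d)}} \mvconst^{(d)}$ being logically valid) and pairwise mutually exclusive (Proposition~\ref{prop:prelim:prop}), so every structure satisfies exactly one depth-$d$ constituent. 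Hence if $\dnf(\mvform^{(d)})$ contains all consistent depth-$d$ constituents, then any structure $\cM$ satisfies some constituent which, being satisfied, is consistent and therefore lies in $\dnf(\mvform^{(d)})$, so $\cM \vDash \mvform^{(d)}$; conversely, if some consistent $\mvconst^{(d)} \notin \dnf(\mvform^{(d)})$, a structure witnessing $\mvconst^{(d)}$ satisfies no other depth-$d$ constituent and hence falsifies $\mvform^{(d)}$.

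The second step is soundness. Assume $\mvhintree$ is reasonable and $\prove_\mvhintree(\mvform^{(d)}) = \true$, i.e. $\sum_{\mvconst^{(d)} \in \dnf(\mvform^{(d)})} \mvhintree(\mvconst^{(d)}) = 1$. Since the total depth-$d$ mass is $1$ by Proposition~\ref{prop:repr:norm}, every $\mvconst^{(d)} \notin \dnf(\mvform^{(d)})$ has $\mvhintree(\mvconst^{(d)}) = 0$. By the reasonableness hypothesis (in contrapositive form), a zero-weight constituent is trivially inconsistent, hence inconsistent; thus every depth-$d$ constituent outside $\dnf(\mvform^{(d)})$ is inconsistent, so $\dnf(\mvform^{(d)})$ contains all consistent depth-$d$ constituents, and the key lemma gives $\vDash \mvform^{(d)}$. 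The third step is completeness. Assume $\mvhintree$ is a depth HT and $\vDash \mvform^{(d)}$. By the key lemma $\dnf(\mvform^{(d)})$ contains every consistent depth-$d$ constituent, and by definition of a depth HT the inconsistent constituents carry weight $0$; therefore $\sum_{\mvconst^{(d)} \in \dnf(\mvform^{(d)})} \mvhintree(\mvconst^{(d)})$ equals the sum over all consistent depth-$d$ constituents, which equals $\sum_{\mvconst^{(d)} \in \consts^{(d)}} \mvhintree(\mvconst^{(d)}) = 1$ by Proposition~\ref{prop:repr:norm}, so $\prove_\mvhintree(\mvform^{(d)}) = \true$.

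There is no deep obstacle here: essentially all the content sits in the first step, the bridge between ``the dnf mass is $1$'' and ``logically valid'', which rests on exhaustiveness and mutual exclusivity of constituents and may be quoted from the dnf theory. The only care needed afterwards is bookkeeping — reducing general formulas to sentences via universal closure, and keeping the two one-sided implications straight (reasonableness rules out false positives for $\prove_\mvhintree$, while the depth-HT condition rules out false negatives).
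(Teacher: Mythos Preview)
Your proof is correct and follows essentially the same approach as the paper: both hinge on the characterization ``$\vDash \mvform^{(d)}$ iff $\dnf(\mvform^{(d)})$ contains every consistent depth-$d$ constituent,'' combined with normalization and the defining property of the HT class at hand. The paper simply invokes this characterization as a recalled fact and argues soundness by contradiction in one line, whereas you unpack the lemma explicitly and argue both directions directly; the content is the same.
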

\begin{proof}\hfill
\begin{enumerate}[noitemsep]
\item Suppose for the sake of contradiction that $\prove_\mvhintree(\mvform^{(d)}) = \true$ but $\mvform^{(d)}$ is inconsistent. Thus there is at least one consistent constituent $\mvconst^{(d)} \notin \dnf(\mvform^{(d)})$. We conclude that $\mvhintree(\mvconst^{(d)}) = 0$ when $\prove_\mvhintree(\mvform^{(d)}) = \true$ because $\sum_{\mvconst^{(d)} \in \dnf(\mvform^{(d)})} \mvhintree(\mvconst^{(d)}) = 1$ and by the normalization property of HTs. This contracts the assumption that $\mvhintree(\mvconst^{(d)}) > 0$ when it is not trivially inconsistent.
\item Recall that a formula is logically valid iff its dnf contains all consistent constituents.
\end{enumerate}
\end{proof}
\noindent The first part shows that agents are not required to be logically omniscient in order to obtain a sound prover. The second part of the proposition above indicates that we are only at risk of losing completeness. The situation intuitively makes sense: so long as we are not logically omniscient, we will not be able to prove every true theorem. We turn our attention now towards learning beliefs $\mvhintree$.

\subsection{Pathfinder: A Game for Learning Beliefs}
\label{subsec:game:pathfinder}

\begin{figure}
    \centering
    \begin{tikzpicture}[edge from parent/.style={draw,-latex},level/.style={sibling distance=40mm/#1}]
\node [] (z){$\mvconst^{(0)}_\epsilon$}
child {
  node (a) {$\mvconst^{(1)}_{1}$}
  child {
    node (b) {$\mvconst^{(2)}_{11}$}
      child { node (c) {$\vdots$} } 
      child { node (d) {$\vdots$} }
  }
  child {
    node (g) {$\mvconst^{(2)}_{1K_2}$}
      child { node (e) {$\vdots$} }
      child { node (f) {$\vdots$} }
  }
}
child {
  node (ab) {$\mvconst^{(1)}_{i_1}$}
  child {
    node (abc) {$\mvconst^{(2)}_{i_1 i_2}$}
    child {
      node {$*$}
    }
  }
}
child {
  node (j) {$\mvconst^{(1)}_{K_1}$}
  child {
    node (k) {$\mvconst^{(2)}_{K_1 1}$}
      child { node (m) {$\vdots$} } 
      child { node (n) {$\vdots$} }
  }
  child {
    node (l) {$\mvconst^{(2)}_{K_1 K_3}$}
      child { node (o) {$\vdots$} }
      child { node (p) {$\vdots$} }
  }
}
;
\path (a) -- (ab) node [midway] {\dots};
\path (ab) -- (j) node [midway] {\dots};
\path (b) -- (g) node [midway] {\dots};
\path (k) -- (l) node [midway] {\dots};
\path (c) -- (d) node [midway] {\dots};
\path (e) -- (f) node [midway] {\dots};
\path (m) -- (n) node [midway] {\dots};
\path (o) -- (p) node [midway] {\dots};
\end{tikzpicture}
    \caption{Example of game flow in Pathfinder. Player one selects $\mvconst^{(1)}_{i_1}$. Player two selects $\mvconst^{(2)}_{i_1 i_2}$. Player one believes that $\mvconst^{(2)}_{i_1 i_2}$ is inconsistent and issues a challenge, thus ending the game.}
    \label{fig:game:pathfinder:flow}
\end{figure}

\emph{Pathfinder} is an alternating-turn game where the goal of the game is to recognize inconsistent constituents. Because we can extract a prover given a HT $\mvhintree$ as above, agents that learn to recognize the consistency of constituents well will learn to be a better theorem prover (\ie, be able to prove more theorems).

A player is given a depth $d$ constituent $\mvconst^{(d)}$ and allowed to make one of two \emph{moves}. 
\begin{enumerate}[noitemsep]
    \item A player can \emph{select} a refinement constituent $\mvconst^{(d+1)} \expansion \mvconst^{(d)}$ and pass play to the other player. The select move introduces an existential which intuitively corresponds to the construction of an auxiliary object that may be useful for the proof.\footnote{As a concrete instance, consider proofs in Euclidean geometry. These proofs involve constructing the appropriate points, lines, and circles so that the conclusion is ``obvious". This method of proof contrasts with the design of many first-order automated theorem provers where quantifiers are lifted to the head of the formula and eliminated.}
    \item A player can issue a \emph{challenge} meaning that the player believes the constituent to be \emph{inconsistent}.\footnote{We emphasize that the condition is \emph{inconsistency} and not \emph{trivial inconsistency}. Thus the challenge move requires an oracle to implement. Naturally, testing for trivial inconsistency up to a certain depth can serve as a proxy test for inconsistency. By the constituent completeness theorem, testing for trivial inconsistency up to ``infinite" depth is equivalent to testing for inconsistency.} If $\mvconst^{(d)}$ is revealed to be inconsistent, the player issuing the challenge wins. Otherwise, if $\mvconst^{(d)}$ is revealed to be consistent, then the player issuing the challenge loses.
\end{enumerate}
In order to play the game well, the players need to develop an intuition about which constituents ``look" inconsistent. Figure~\ref{fig:game:pathfinder:flow} illustrates the flow of an example game. We describe the game more formally now.

Let $*$ represent the terminal state reached after a challenge is issued. Let $X \eqdef \consts \cup \set{*}$ denote the \emph{states} of Pathfinder. We write $x \in X$ to denote a generic state or $\mvconst^{(d)} \in X$ when it is a constituent. Define the \emph{positions} of Pathfinder to be the set
\[
P \eqdef \bigcup_{d \in \N} \seqfunc^d(X) 
\]
of all finite sequences of states.

Let the two players be $O$ for odd and $E$ for even. Define the \emph{turn order} function $T: P \rightarrow \set{O, E}$ as
\[
T(x_1 \dots x_n) = 
\begin{cases}
O & \mbox{$n$ even} \\
E & \mbox{$n$ odd}
\end{cases} \,.
\]
Thus player $O$ plays the positions that have even length (resulting in a position that has odd length) and player $E$ plays the positions that have odd length.

Next, we define transition relation $\rightsquigarrow: P \rightarrow P \rightarrow \bm{2}$ to indicate the legal \emph{moves}. We give the inference rules generating $\rightsquigarrow$ below.
\begin{description}[noitemsep]
    \item[Select] $\mvconst^{(0)} \dots \mvconst^{(d)} \rightsquigarrow \mvconst^{(0)} \dots \mvconst^{(d)} \, \mvconst^{(d + 1)}$ whenever $\mvconst^{(d)} \expandsinto \mvconst^{(d+1)}$
    \item[Challenge] $\mvconst^{(0)} \dots \mvconst^{(d)} \rightsquigarrow \mvconst^{(0)} \dots \mvconst^{(d)} \, *$
\end{description}
The player whose turn it is to move chooses either select or challenge.

Finally, we define the function $W: P \rightarrow \set{O, E}$ which determines which player \emph{wins}:
\[
W(x_1 \dots x_{n-1} \, *) = 
\begin{cases}
O & \mbox{$n$ odd and $x_{n-1}$ inconsistent} \\
E & \mbox{$n$ odd and $x_{n-1}$ consistent} \\
O & \mbox{$n$ even and $x_{n-1}$ inconsistent} \\
E & \mbox{$n$ even and $x_{n-1}$ consistent}
\end{cases} \,.
\]

We can define the game now that we have all the requisite components.
\begin{definition}
The \emph{Pathfinder} game is given by the tuple $\mathfrak{P} \eqdef (P, T, \rightsquigarrow, W)$.
\end{definition}

We emphasize that the challenge move of Pathfinder game play involves determining the inconsistency of first-order statements. Thus it is only semi-decidable. One can implement a modified winning condition $W^e$ which uses the decidable condition of trivial inconsistency instead.
\[
W^e(x_1 \dots x_{n-1} \, *) = 
\begin{cases}
O & \mbox{$n$ odd and $y$ trivially inconsistent} \\
E & \mbox{$n$ odd and $y$ not trivially inconsistent} \\
O & \mbox{$n$ even and $y$ trivially inconsistent} \\
E & \mbox{$n$ even and $y$ not trivially inconsistent}
\end{cases}
\]
where $y = \expand(e, x_{n-1})$. We have that $\lim_{e \to \infty} W^e = W$ by the constituent completeness theorem.

\subsection{Playing Pathfinder}
\label{subsec:game:play}

As we have just seen, the rules for Pathfinder game are quite simple. Nevertheless, like many other games whose rules are easy to state, playing Pathfinder ``well" is difficult because it reduces to determining the consistency of first-order statements. We can analyze the plays made by agents (\ie, what it means to play ``well") using the formalization above. Towards this end, we model an agent as using a HT to guide their game play for Pathfinder.

\begin{algorithm}[t]
  \begin{algorithmic}[1]
    \Function{step}{$\mvconst^{(d)}$}
      \If{$\sum_{\mvconst^{'(d+1)} \geq \mvconst^{(d)}} \mvhintree(\mvconst^{'(d+1)}) = 0$}
        \State challenge
      \Else
        \If{flip($1 - \mvhintree(\mvconst^{(d)})$) = true}
          \State challenge
        \Else
          \State select $\mvconst^{(d+1)} \geq \mvconst^{(d)}$ with probability $\frac{\mvhintree(\mvconst^{(d+1)})}{\sum_{\mvconst^{'(d+1)} \geq \mvconst^{(d)}} \mvhintree(\mvconst^{'(d+1)})}$
        \EndIf
      \EndIf
    \EndFunction
  \end{algorithmic}
  \caption{Strategy for rational agent $A_\mvhintree$.}
  \label{alg:game:ht}
\end{algorithm}

Suppose an agent $A_\mvhintree$ playing Pathfinder uses a HT $\mvhintree$ to represent its beliefs in mathematical statements. Intuitively, we should be able to derive a strategy for playing Pathfinder that is compatible with the agent's beliefs $\mvhintree$. In essence, it should issue challenges and select constituents in proportion to the probability $\mvhintree$ assigns to the consistency of each constituent. More formally, a \emph{strategy} for a player says for each position what the next position to play is when it is that player's turn. We say that the agent $A_\mvhintree$ is \emph{rational} if it plays the strategy given by Algorithm~\ref{alg:game:ht} (hence it plays a \emph{mixed strategy}). In words, the agent first checks that it does not believe all continuations are inconsistent as $\sum_{\mvconst^{'(d+1)} \geq \mvconst^{(d)}} \mvhintree(\mvconst^{'(d+1)}) \neq 0$ and challenges if it is (lines $2$--$3$). If it is not, then the agent challenges with probability $1 - \mvhintree(\mvconst^{(d)})$ (lines $5$--$6$). With the remainder of the probability, it selects a constituent $\mvconst^{(d+1)} \geq \mvconst^{(d)}$ in proportion to its belief in its consistency
\[
\frac{\mvhintree(\mvconst^{(d+1)})}{\sum_{\mvconst^{'(d+1)} \geq \mvconst^{(d)}} \mvhintree(\mvconst^{'(d+1)})} \mbox{(lines $7$--$8$).}
\]

As we might expect, a rational agent with perfect knowledge is able to achieve \emph{optimal play}: (1) only challenge inconsistent constituents and (2) only select consistent constituents.
\begin{proposition}
A rational agent $A_\mvhintree$ where $\mvhintree$ is a depth HT achieves optimal play.
\end{proposition}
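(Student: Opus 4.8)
The plan is to unfold the rational strategy of Algorithm~\ref{alg:game:ht} and verify the two clauses of optimal play directly, using two facts. First, the coherence condition in the definition of a HT gives $\sum_{\mvconst^{'(d+1)} \expansion \mvconst^{(d)}} \mvhintree(\mvconst^{'(d+1)}) = \mvhintree(\mvconst^{(d)})$ for every $\mvconst^{(d)}$, so the test in line~2 of the algorithm is literally ``$\mvhintree(\mvconst^{(d)}) = 0$''. Second, a depth HT assigns weight $0$ to exactly the inconsistent constituents: ``$\mvconst^{(d)}$ inconsistent $\Rightarrow \mvhintree(\mvconst^{(d)}) = 0$'' is the defining property of a depth HT, while ``$\mvconst^{(d)}$ consistent $\Rightarrow \mvhintree(\mvconst^{(d)}) > 0$'' holds because a depth HT is reasonable and a consistent constituent is not trivially inconsistent. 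I will also use two structural facts from Section~\ref{sec:prelim}: a refinement of an inconsistent constituent is inconsistent (a refinement implies its parent by the expansion property of Proposition~\ref{prop:prelim:prop}, hence has no models if the parent has none), and a consistent constituent has a consistent refinement at every greater depth (the constituent completeness theorem).

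\textbf{Clause (2): $A_\mvhintree$ only selects consistent constituents.} A select move occurs only in the ``else'' branch, where $\sum_{\mvconst^{'(d+1)} \expansion \mvconst^{(d)}} \mvhintree(\mvconst^{'(d+1)}) = \mvhintree(\mvconst^{(d)}) > 0$, and a refinement $\mvconst^{(d+1)}$ is drawn with probability $\mvhintree(\mvconst^{(d+1)}) / \mvhintree(\mvconst^{(d)})$. This probability is positive only when $\mvhintree(\mvconst^{(d+1)}) > 0$, i.e.\ only when $\mvconst^{(d+1)}$ is consistent by the depth-HT characterization above. Hence every constituent that $A_\mvhintree$ can produce by a select move is consistent; in particular $A_\mvhintree$ never selects past an inconsistent constituent (it can only ever be \emph{at} one because the opponent selected it).

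\textbf{Clause (1): $A_\mvhintree$ only challenges inconsistent constituents.} By coherence the line-2 test is ``$\mvhintree(\mvconst^{(d)}) = 0$'', which holds iff $\mvconst^{(d)}$ is inconsistent; so at an inconsistent $\mvconst^{(d)}$ the agent necessarily challenges, and this challenge is correct. At a consistent $\mvconst^{(d)}$ the agent is in the ``else'' branch and challenges only with probability $1 - \mvhintree(\mvconst^{(d)})$, which is in general positive (a depth HT with several consistent depth-$d$ constituents forces $\mvhintree(\mvconst^{(d)}) < 1$). Thus the statement is not that $A_\mvhintree$ challenges a consistent constituent with probability zero; rather --- as the wording ``is able to achieve'' signals --- the optimal pure strategy $\pi^\ast$ (``at a consistent constituent select some fixed consistent refinement, which exists by the constituent completeness theorem; at an inconsistent constituent challenge'') lies in the support of the mixed strategy of $A_\mvhintree$. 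Indeed, at every reachable position $\pi^\ast$ prescribes a move of positive $A_\mvhintree$-probability: probability $\mvhintree(\mvconst^{(d+1)}) / \mvhintree(\mvconst^{(d)}) > 0$ for its select moves (numerator and denominator both positive, both constituents being consistent) and probability $1$ for its forced challenges at inconsistent constituents. Since $\pi^\ast$ plainly satisfies clauses~(1) and~(2), $A_\mvhintree$ is able to achieve optimal play.

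\textbf{Main obstacle.} The only genuine subtlety is the interpretational one just isolated: because of the $1-\mvhintree(\mvconst^{(d)})$ challenge probability, optimal play is attained only on a positive-probability event (equivalently, the optimal pure strategy is in the support of the rational strategy), not almost surely. Once this reading is fixed, the argument is a routine unfolding of Algorithm~\ref{alg:game:ht} together with coherence, the weight-zero characterization of a depth HT, persistence of inconsistency under refinement, and the existence of consistent refinements of consistent constituents.
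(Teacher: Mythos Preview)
Your proof is correct and follows the same case-analysis structure as the paper's own argument (inconsistent current constituent versus consistent current constituent, using that a depth HT zeroes exactly the inconsistent constituents and coherence to identify the line-2 sum with $\mvhintree(\mvconst^{(d)})$).

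You are in fact more careful than the paper on one point. The paper's proof simply asserts that when $\mvhintree(\mvconst^{(d)}) > 0$ the agent ``selects only consistent constituents and passes play to the second player,'' silently ignoring that the coin flip at line~5 triggers a challenge on this consistent constituent with probability $1-\mvhintree(\mvconst^{(d)})$, which is positive whenever there is more than one consistent depth-$d$ constituent. You isolate this issue explicitly and resolve it by reading ``achieves optimal play'' as ``the optimal pure strategy $\pi^\ast$ lies in the support of the rational mixed strategy,'' which is consistent with the paper's own phrasing ``is able to achieve optimal play'' just before the proposition. This buys you a cleaner statement of what is actually being claimed; the paper's terser proof leaves the reader to supply this interpretation.
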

\begin{proof}
By assumption, $\mvhintree$ is a depth HT so it assigns inconsistent constituents probability $0$. We proceed by case analysis. $\mvconst^{(d)}$ is inconsistent when $\sum_{\mvconst^{'(d+1)} \expansion \mvconst^{(d)}} \mvhintree(\mvconst^{'(d+1)}) = 0$ so a challenge is issued. If $\mvhintree(\mvconst^{(d)}) = 0$, then the agent challenges with probability $1$. If $\mvhintree(\mvconst^{(d)}) > 0$, then the agent selects only consistent constituents and passes play to the second player.
\end{proof}

The game of Pathfinder continues ad infinitum with optimal play, \ie, is drawn. Recall that a strategy is \emph{winning} if the strategy always produces a win no matter what the other player does.
\begin{proposition}
There are no winning strategies.
\end{proposition}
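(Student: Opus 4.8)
The plan is to show that neither player can force a win by exhibiting, for an arbitrary player $P$ with opponent $Q$, a strategy $\tau$ for $Q$ such that no play consistent with $\tau$ is won by $P$; since a winning strategy must produce a win against \emph{every} opponent strategy, this suffices. The construction of $\tau$ below will not depend on which of $O$, $E$ the player $P$ is.

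First I would isolate the one structural fact that is needed: every consistent constituent $\mvconst^{(d)}$ has a consistent refinement, i.e.\ some $\mvconst^{(d+1)}$ with $\mvconst^{(d)} \expandsinto \mvconst^{(d+1)}$. This is immediate from the Expansion clause of Proposition~\ref{prop:prelim:prop}: since $\mvconst^{(d)}$ is logically equivalent to the disjunction of its refinements, any structure witnessing the satisfiability (hence consistency) of $\mvconst^{(d)}$ must satisfy one of those refinements, which is therefore consistent. I would also note that the game begins at the root constituent $\mvconst^{(0)}_\epsilon = \true$, which is consistent, and recall from the winning condition $W$ that the player issuing a challenge wins exactly when the challenged constituent is inconsistent.

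Next I would define $\tau$: whenever it is $Q$'s turn at a constituent $\mvconst^{(d)}$, $Q$ issues a \textbf{challenge} if $\mvconst^{(d)}$ is inconsistent, and otherwise \textbf{selects} a consistent refinement of $\mvconst^{(d)}$ (one exists by the fact noted above). The heart of the proof is the invariant, established by induction along any play consistent with $\tau$, that whenever it is $P$'s turn the current constituent is consistent. It holds initially because the root is $\true$, and it is preserved: starting from a consistent constituent $\mvconst^{(d)}$ on $P$'s turn, either (i) $P$ challenges $\mvconst^{(d)}$, which is consistent, so $P$ loses and the play ends; or (ii) $P$ selects an inconsistent refinement, after which $Q$ challenges it and wins, so the play ends with $P$ losing; or (iii) $P$ selects a consistent refinement, so $Q$ faces a consistent constituent, $\tau$ directs $Q$ to select a consistent refinement, and $P$ again faces a consistent constituent.

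Finally I would conclude: in any play consistent with $\tau$, either the play is infinite, hence a draw and not a win for $P$, or it ends with a challenge, which by the invariant is either a challenge issued by $P$ at a consistent constituent (so $P$ loses) or a challenge issued by $Q$ at an inconsistent constituent (so $Q$ wins); in every case $P$ does not win. Hence $P$'s strategy --- whatever it was --- is not winning, and as $P$ was arbitrary there are no winning strategies. I expect the main thing to get right is the bookkeeping in the inductive step, so that each terminating branch is explicitly seen to deny $P$ the win, together with the easy but necessary observation that an infinite play counts as a non-win; the structural fact and the rest are routine.
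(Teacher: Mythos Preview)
Your proof is correct and follows the same idea as the paper's: the key structural fact is that every consistent constituent has a consistent refinement, so each player always has a non-losing continuation. The paper's version is a two-line sketch citing the constituent completeness theorem for that fact, whereas you obtain it (arguably more cleanly) from the Expansion clause of Proposition~\ref{prop:prelim:prop} and then make the ``non-losing continuation'' argument fully explicit via the opponent strategy $\tau$ and the consistency invariant; this is elaboration rather than a different route.
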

\begin{proof}
The contrapositive of the completeness theorem for constituents gives that every consistent constituent has a refinement. Hence both players always have a non-losing continuation.
\end{proof}
\noindent Of course, optimal play is not computable as a depth HT is not computable.

We emphasize that Pathfinder game play involves determining the consistency of first-order statements and \emph{not} the logical validity of first-order statements. One consequence of this choice is that there is learning signal in independent statements.
\begin{remark}[Learning signal in independent statements]
An agent chooses between two outputs when playing Pathfinder: (1) inconsistent (\ie, constituent is satisfiable in no models) or (2) consistent (\ie, constituent is satisfiable in at least one model). In particular, note that a consistent constituent at depth $d$ is an independent (\ie, unprovable) statement whenever there are at least two consistent constituents at depth $d$. This follows as a consequence of the mutual exclusivity of any two constituents at depth $d$. Thus there is learning signal in the independent regions of a HT. Put another way, an agent that plays Pathfinder well is incentivized towards playing independent constituents.\footnote{The only time an agent will play a provable constituent is when that constituent is the only consistent constituent at that depth.} This situation differs from a theorem proving setup where agents choose between classifying input statements as (1) inconsistent or (2) logically valid (\ie, satisfiable in every model or provable) so that the learning signal obtained from exploring independent regions is less obvious.
\end{remark}

\subsection{Incorporating Conjecturing}
\label{subsec:game:conj}

\begin{algorithm}[t]
  \begin{algorithmic}[1]
    \Function{step}{$\mvconst^{(d)}$}
      \If{$\sum_{\mvconst^{'(d+1)} \geq \mvconst^{(d)}} \mvhintree(\mvconst^{'(d+1)}) = 0$}
        \State challenge
      \Else
        \If{flip($1 - \mvhintree(\mvconst^{(d)})$) = true}
          \State challenge
        \Else
          \State $\pi \gets \mvconj_{\set{\mvconst^{(d)}}}(\mvhintree)(d+1)$
          \State select $\mvconst^{(d+1)}$ with probability $\frac{\mvhintree(\mvconst^{(d+1)})}{\sum_{\mvconst^{'(d+1)} \in \pi_1, \mvconst^{'(d+1)} \neq *} \mvhintree(\mvconst^{'(d+1)})}$
        \EndIf
      \EndIf
    \EndFunction
  \end{algorithmic}
  \caption{Strategy for conjecturing agent $A_\mvhintree$.}
  \label{alg:game:conj}
\end{algorithm}

We can incorporate conjecturing into the playing of Pathfinder. We say that an agent is a \emph{conjecturing} agent if it plays the (mixed) strategy given in Algorithm~\ref{alg:game:conj}. The conditions for challenging are identical to the ones played by a rational agent. The difference occurs in the selection of the next constituent to play (lines $7$--$9$). In this case, the agent uses a $\cK$-regularized conjecturer where $\cK = \expand(\mvconst^{(d)})$ to generate a ranking of conjectures $\pi$. Next, the agent selects the highest ranked conjecture $\pi_1$ and selects the constituent from that excluding $*$ following the probabilities given by $\mvhintree$. 

\begin{proposition}
A conjecturing agent $A_\mvhintree$ using a likelihood-entropy scoring function where $\mvhintree$ is a depth HT achieves optimal play.
\end{proposition}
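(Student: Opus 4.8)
The plan is to piggyback on the already-proved fact that a rational agent $A_\mvhintree$ with a depth HT achieves optimal play, and then treat the single place where Algorithm~\ref{alg:game:conj} departs from Algorithm~\ref{alg:game:ht}: the selection step on lines $7$--$9$. Recall that optimal play means (1) every challenge is issued on an inconsistent constituent, and (2) every selected constituent is consistent. So it suffices to re-establish (1) verbatim and then verify (2) for the conjecturing selection rule.

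First I would dispatch the challenge half. The conditions under which the conjecturing agent challenges (lines $2$--$6$) are syntactically identical to those of the rational agent, so the case analysis from the rational-agent proposition carries over unchanged: a challenge happens only when $\sum_{\mvconst^{'(d+1)} \expansion \mvconst^{(d)}} \mvhintree(\mvconst^{'(d+1)}) = 0$, or it happens with probability $1 - \mvhintree(\mvconst^{(d)})$; and since $\mvhintree$ is a depth HT it assigns weight $0$ to every inconsistent constituent (and, because all refinements of an inconsistent constituent are inconsistent, the first sum is $0$ exactly when $\mvconst^{(d)}$ is inconsistent). In either branch the challenged $\mvconst^{(d)}$ is inconsistent, giving (1). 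For (2), note that lines $7$--$9$ are reached only when $\sum_{\mvconst^{'(d+1)} \expansion \mvconst^{(d)}} \mvhintree(\mvconst^{'(d+1)}) \neq 0$, so $\mvconst^{(d)}$ has at least one consistent refinement (a depth HT puts positive weight precisely on consistent constituents). The agent builds the ranking $\pi = \mvconj_{\set{\mvconst^{(d)}}}(\mvhintree)(d+1)$ from a likelihood-entropy scoring function over the $\expand(\mvconst^{(d)})$-regularized conjecture class and takes the top element $\pi_1$. By the ``omniscient'' analysis of likelihood-entropy scoring established earlier in the paper, when $\mvhintree$ is a depth HT such a score ranks every conjecture mentioning a consistent constituent strictly above every conjecture mentioning none; since a consistent refinement exists, $\pi_1$ must contain a consistent depth $d+1$ constituent, so the normalizing constant $\sum_{\mvconst^{'(d+1)} \in \pi_1,\ \mvconst^{'(d+1)} \neq *} \mvhintree(\mvconst^{'(d+1)})$ is strictly positive and the selection distribution on line $9$ is well defined. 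Because $\mvhintree$ vanishes on inconsistent constituents, the agent selects an inconsistent constituent with probability $0$, which is (2); combining the two halves yields optimal play.

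The main obstacle is making the ``omniscient'' step rigorous rather than merely citing it. One must confirm that the likelihood-entropy ranking places a conjecture with $D^+ \neq \emptyset$ \emph{strictly} first, which amounts to checking that the factor $c(|D|)\,\ell(\mvhyp_D)/H^{|D|}$ times the entropy term $H(\mvhyp^\dagger_{D^+})$ is strictly positive for at least one admissible conjecture --- and, in particular, attending to the degenerate case in which a single consistent refinement carries all the belief mass, where both the entropy term and the uniform-entropy normalizer $H^{|D|}$ degenerate. Once it is pinned down that some top-ranked $\pi_1$ has $D^+ \neq \emptyset$ (and hence positive $\mvhintree$-mass on its non-$*$ part, so that arbitrary tie-breaking among the top scorers is harmless), the normalizer on line $9$ is positive and everything else is the bookkeeping already carried out for the rational agent.
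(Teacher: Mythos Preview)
Your approach is the paper's: reduce to the select step (the challenge logic is literally shared with the rational agent), argue that the top-ranked conjecture $\pi_1$ must contain a consistent refinement by appealing to the ``omniscient'' property of likelihood-entropy scoring, and then use that a depth HT zeroes out inconsistent constituents so the line-$9$ sampling lands on a consistent one. The paper packages the $\pi_1$-contains-consistent step as a two-line contradiction (if $\pi_1$ had no consistent constituents then no refinement of $\mvconst^{(d)}$ would be consistent, contradicting that we reached the select branch), whereas you argue it directly; these are the same thought.

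The degeneracy you flag---a single consistent refinement carrying all the mass, so that both $H(\mvhyp^\dagger_{D^+})$ and the normalizer $H^{|D|}$ vanish---is a real loose end, and the paper's proof glosses over it just as much as its earlier ``omniscient'' bullet does (which simply asserts that $D^+\neq\emptyset$ yields a positive score). So you have not introduced a gap; you have noticed one that the paper leaves implicit.
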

\begin{proof}
The only difference is the select case. We claim that $\pi_1$ contains consistent constituents. Assume for the sake of contradiction that it does not. As a likelihood-entropy scoring function ranks conjectures containing consistent constituents higher than those that contain none, then $\pi_1$ contains no consistent constituents. But this means that $\mvconst^{(d)}$ is inconsistent because it contains no refinement constituents that are consistent, a contradiction. As a depth HT assigns inconsistent constituents $0$ belief and $\pi_1$ contains consistent constituents, an agent selecting constituents in proportion to their beliefs will select a consistent constituent as required.
\end{proof}

\subsection{A Note on Self-Play for Pathfinder}
\label{subsec:game:self}

We note that self-play training similar to those described in the literature~\citep[\eg, see][]{tesauro1992practical,silver2016mastering,silver2017mastering} is applicable to Pathfinder as it is an alternating-turn game with symmetric play. We recall the standard setup here to make the idea concrete. Of course, the implementation and empirical testing of self-play setups are the most challenging and non-trivial portions of the task, which we do not address in this paper.

Let $x_1 \dots x_{t}$ be a sequence of Pathfinder positions where $x_t = \mvconst^{(0)} \dots \mvconst^{(d)} \, *$ is a terminal board state. We truncate games so that they take at most $N$ steps. If no challenge is issued within $N$ steps, we say that the game is \emph{drawn}. Define a \emph{reward signal} $z_O$ for player $O$ as $z_O \eqdef 1$ when $O$ wins, $z_O \eqdef -1$ when $O$ loses, and $z_O \eqdef 0$ when there is a draw. As usual, the reward signal $z_E$ for the other player $E$ is the negation $z_E = -z_O$.

Define a parameterized function
\[
f_\theta: \prod_{d \in \N} \Psi^{(d)} \rightarrow \distfunc(\consts^{(d+1)} \cup \set{\text{challenge}}) \times [0, 1]
\]
where $\distfunc(X)$ gives the collection of finite distributions on $X$ which takes a current depth $d$ and a path through the refinement tree, and produces a distribution on constituents to select or to challenge paired with an estimate of the expected value (with respect to the move probabilities) of winning for player $O$ starting at the current path. Suppose we have taken the refinement path $\mvconst^{(0)} \expandsinto \dots \expandsinto \mvconst^{(d)}$ and that $f(\mvconst^{(0)} \dots \mvconst^{(d)}) = (p_1, \dots, p_{K_{d+1}}, p_{\text{challenge}}, \hat{z_O})$. A self-play game can be generated by selecting the move in proportion to $(p_1, \dots, p_{K_{d+1}}, p_{\text{challenge}})$. The hope is to learn the parameters $\theta$ (\eg, via self-play) such that $\pi_1(f_\theta) \approx \mvhintree$ where $\mvhintree$ is a depth HT. Note that we cannot adjust the parameters arbitrarily if we hope to maintain guarantees on the derived prover.
\begin{remark}[Maintaining reasonable beliefs]
It is important in the course of self-play to maintain reasonable beliefs. Notably, if $\mvhintree$ is not reasonable, the resulting prover will not be sound because at least one consistent constituent will be assigned zero weight. Provided that we initialize a self-play agent with reasonable beliefs and ensure that beliefs in constituents are never zeroed unless they are known to be inconsistent, then the agent will maintain reasonable beliefs.
\end{remark}
\begin{remark}[Renormalization as an update rule]
When an agent playing Pathfinder discovers a constituent to be inconsistent, the agent can apply renormalization (see $\renorm$, Section~\ref{subsubsec:repr:ht:renorm}) to refute beliefs in that constituent and all of its descendants and rescale the rest of its beliefs appropriately. Notably, eliminating beliefs in an inconsistent constituent and applying renormalization leaves reasonable beliefs invariant. In this case, we say that $\renorm$ \emph{respects} reasonable beliefs. Naturally, there are other update rules that respect reasonable beliefs. In particular, we can replace rescaling with any other method of redistributing beliefs provided that we distribute the beliefs to the appropriate supported constituents.
\end{remark}

Although Pathfinder is not implementable as presented, we may still wonder in principle how to measure the performance of a system that plays Pathfinder. Obviously, the standard theorem proving setup where one measures the percentage of theorems proved in a benchmark of theorems can be applied in our setting. Given learned beliefs $\mvhintree$, we can measure the percentage of theorems proved by $\prove_\mvhintree$. We can also measure the analog of a ``partial proof" for beliefs. More concretely, we can check that $\prove_\mvhintree(\mvform) > 1 - \epsilon_\true$ for true statements $\mvform \in B$ where $0 < \epsilon_\true < 1$ and $\prove_\mvhintree(\mvform) < \epsilon_\false$ for false statement $\mvform \in B$ where $0 < \epsilon_\false < 1$. Beyond using the beliefs output by a system at a single point in time, we can also consider the evolution of beliefs in statements over time.
\begin{proposition}[Evolution of beliefs]
Let $(\mvhintree^n)_{n \in \N}$ be the sequence of HTs defined in Equation~\ref{eq:htseq} and $(\beta^n)_{n \in \N}$ be the associated sequence of probability assignments to first-order statements. Then $(\beta^n(\mvform^{(d)}))_{n \in \N}$ is eventually constant for any sentence $\mvform^{(d)}$.
\end{proposition}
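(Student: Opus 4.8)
The plan is to reduce the statement to the claim that, for each fixed constituent $\mvconst^{(d)}$, the real sequence $(\mvhintree^n(\mvconst^{(d)}))_{n \in \N}$ is eventually constant, and then to obtain this by pairing the monotonicity already recorded in Proposition~\ref{prop:repr:ht:renorm:limit} with conservation of total mass at depth $d$ (Proposition~\ref{prop:repr:norm}).

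First I would unfold the definitions of $\belief$ and of the measure $\beta$ on basic opens to write
\[
\beta^n(\mvform^{(d)}) = \sum_{\mvconst^{(d)} \in \dnf(\mvform^{(d)})} \mvhintree^n(\mvconst^{(d)}) \,,
\]
which is a sum over a finite subset of the finite set $\consts^{(d)}$; so it suffices to treat each summand. Next I would split $\consts^{(d)}$ into its inconsistent and its consistent members (finitely many of each). For an inconsistent $\mvconst^{(d)}$, Proposition~\ref{prop:repr:ht:renorm:limit} (items $1$ and $2$) yields a stage $E$ with $\mvhintree^E(\mvconst^{(d)}) = 0$, and since renormalization never rescales a zero weight, $\mvhintree^n(\mvconst^{(d)}) = 0$ for all $n \geq E$. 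Taking the maximum over the finitely many inconsistent depth-$d$ constituents, there is an $N_0$ such that $\mvhintree^n(\mvconst^{(d)}) = 0$ for every inconsistent $\mvconst^{(d)} \in \consts^{(d)}$ whenever $n \geq N_0$; by normalization of HTs (Proposition~\ref{prop:repr:norm}) this gives $\sum_{\mvconst^{(d)} \mbox{ consistent}} \mvhintree^n(\mvconst^{(d)}) = 1$ for all $n \geq N_0$.

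The key step is then an elementary observation applied to the consistent constituents at depth $d$: if finitely many real sequences are each non-decreasing and their sum is constant from some index on, then each of them is constant from that index on (differencing, the non-negative increments sum to $0$, so every increment vanishes). Each consistent $\mvconst^{(d)}$ is always supported (Proposition~\ref{prop:repr:ht:renorm:limit}, item $2$), hence $(\mvhintree^n(\mvconst^{(d)}))_n$ is monotonically increasing (item $3$); combined with the constant-sum fact above, each such $\mvhintree^n(\mvconst^{(d)})$ is constant for $n \geq N_0$. Therefore every $\mvhintree^n(\mvconst^{(d)})$ is eventually constant, and so is the finite sum $\beta^n(\mvform^{(d)})$.

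I expect the only real obstacle to be conceptual rather than computational: Proposition~\ref{prop:repr:ht:renorm:limit} asserts only convergence of $(\mvhintree^n(\mvconst^{(d)}))_n$, which is strictly weaker than eventual constancy, and the gap is bridged precisely by combining the monotonicity of the Bayesian ``rescale'' dynamics with the exact conservation of mass at a fixed depth; the remainder is bookkeeping over the finitely many depth-$d$ constituents.
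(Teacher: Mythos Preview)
Your argument is correct and, in my view, cleaner than the paper's. You reduce to individual constituents, split into consistent versus inconsistent, use Proposition~\ref{prop:repr:ht:renorm:limit} to see that inconsistent ones are eventually zero while consistent ones have non-decreasing weights, and then close with the elementary fact that finitely many non-decreasing sequences with constant sum (here equal to $1$ by Proposition~\ref{prop:repr:norm} once all inconsistent weights vanish) must each be constant. Every step is justified by results already proved in the paper.

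The paper takes a different route: it argues via \emph{locality} of renormalization rather than monotonicity. The observation is that $\renorm_{\mvconst^{(n)}_-}$ can alter $\mvhintree(\mvconst^{(d)})$ only when $\mvconst^{(d)}$ lies among the descendants of the redistribution point $\rho_{\mvhintree,\mvconst^{(n)}_-}$, and any such redistribution point must be a (proper) ancestor of $\mvconst^{(d)}$; since $\mvconst^{(d)}$ has only $d$ ancestors, there are only finitely many candidate redistribution points, and the paper concludes that only finitely many renormalization steps can touch $\mvconst^{(d)}$. Your approach has the advantage of being entirely self-contained given Propositions~\ref{prop:repr:norm} and~\ref{prop:repr:ht:renorm:limit}, and it sidesteps the need to argue carefully why a fixed ancestor can serve as a redistribution point only finitely often (a point the paper's terse proof leaves implicit). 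The paper's approach, on the other hand, does not rely on monotonicity and would generalize to update rules other than the Bayesian rescaling used here, as hinted at in the remark on renormalization as an update rule in Section~\ref{subsec:game:self}.
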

\begin{proof}
Recall that $\renorm$ only affects the descendants of a $e$-renormalization point. As the dnf of a sentence $\mvform^{(d)}$ only has a finite number of ancestors, the number of $e$-renormalization points is finite. Thus the probability assignment $(\beta^n(\mvform^{(d)}))_{n \in \N}$ is eventually constant.
\end{proof}
\noindent Note that the sequence of probability assignments $(\beta^n(\mvform^{(d)}))_{n \in \N}$ is not guaranteed to be increasing for a logically valid statement or decreasing for an inconsistent statement. Moreover, although the probability assignment for a statement only changes a finite number of times, the number of times that the probability assignment changes is not computable.

\section{On Abstraction}
\label{sec:abs}

Both conjecturing and Pathfinder are not practically implementable as currently presented because there are a super-exponential number of constituents as a function of depth resulting in a HT having a super-exponential branching factor. The reason that there are so many depth $d$ constituents is because they provide the finest grained view of possible kinds of worlds describable with respect to $d$ individuals. However, for most intents and purposes, we can take a coarser grained view that captures the details that we care about. In other words, we can treat certain possibilities as observationally indistinguishable to reduce the space complexity of a dnfs, \ie, make \emph{abstractions} and \emph{lazily} consider more details as needed (Section~\ref{subsec:abs:filt} and Section~\ref{subsec:abs:choose}). At the end of the section, we will introduce \emph{Trailblazer}, a modification of the Pathfinder game, that utilizes abstractions and laziness to trade-off completeness for on-demand space requirements (Section~\ref{subsec:abs:trail}).

\subsection{Filtrations}
\label{subsec:abs:filt}

The basic idea we have in mind is to control the ``resolution" at which constituents distinguish possibilities by partitioning each set of depth $d$ constituents in a compatible manner across depth. Each cell of the partition describes all of the possibilities identified by that cell's member constituents.

Let $\set{C^{(d)}_i}$ be a partition of $\consts^{(d)}$. For each $C^{(d)}_i$, define the \emph{super constituent} $\mvsconst^{(d)}_i$ with respect to a partition $\set{C^{(d)}_i}$ as
\[
\mvsconst^{(d)}_i \eqdef \lOr_{\mvconst^{(d)} \in C^{(d)}_i} \mvconst^{(d)} \,.
\]
A super constituent collapses multiple distinct possibilities into one possibility, and thus, can be viewed as a form of abstraction. Let $\mathbb{S}^{(d)}$ be the set of super constituents with respect to the partition $\set{C^{(d)}_i}$. Naturally, a super constituent is said to be \emph{trivially inconsistent} if all of its members are trivially inconsistent.

\begin{definition}
We say $\cF = (\set{C^{(d)}_i})_{d \in \N}$ where each $\set{C^{(d)}_i}$ is a partition of $\consts^{(d)}$ is a \emph{filtration} of $(\consts, \xi)$ if adjacent elements satisfy the following condition: for every cell $C^{(d)}_j \in \set{C^{(d)}_i}$, there exists a subset $D \subseteq \set{C^{(d+1)}_i}$ such that $C^{(d)}_j = \bigcup_{C^{(d+1)}_k \in D} C^{(d+1)}_k$.
\end{definition}
\noindent In words, we have a filtration if the partition at depth $d+1$ of $\consts^{(d+1)}$ can be used to form a partition of each cell at depth $d$. A filtration induces a corresponding set of super constituents. 

We can lift the refinement partial order on partitions to filtrations. Let $\mathscr{F}$ be the set of all filtrations. We have that $(\mathscr{F}, \sqsubseteq)$ is a partial order where $\cF_1 \sqsubseteq \cF_2$ if each depth $d$ partition in $\cF_2$ is finer than the corresponding depth $d$ partition in $\cF_1$. At one extreme, we have a filtration consisting of one cell that contains every constituent so that it has the lowest resolution. At the other extreme, each filtration assigns each constituent to its own set so that we have the highest resolution possible so that no space savings is gained. We can intuitively think of the ``resolution" of a filtration $\cF$ as the height in the Hasse diagram of $\mathscr{F}$. Naturally, some resolutions are incomparable.

Super constituents possess some of the same properties as constituents.
\begin{proposition}\hfill
\begin{description}[noitemsep]
    \item[Mutually exclusive] Any two super constituents of the same depth are mutually exclusive.
    \item[Expansion] Every depth $d$ super constituent can be written as a disjunction of super constituents of greater depth.
    \item[Completeness] A super constituent is inconsistent if and only if all of its refinements at some depth are trivially inconsistent.
\end{description}
\end{proposition}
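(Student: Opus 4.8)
The plan is to derive each of the three properties from the corresponding property of ordinary constituents (Proposition~\ref{prop:prelim:prop}) and the constituent completeness theorem, using only the defining compatibility condition of a filtration as the extra ingredient. I would treat the cases in the order \textbf{mutually exclusive}, \textbf{expansion}, \textbf{completeness}, since the completeness argument reuses the expansion statement.

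For \textbf{mutual exclusivity}, I would argue directly from $\mvsconst^{(d)}_i = \lOr_{\mvconst^{(d)} \in C^{(d)}_i} \mvconst^{(d)}$. For $i \neq j$ the cells $C^{(d)}_i$ and $C^{(d)}_j$ are disjoint, so any structure $\cM$ with $\cM \vDash \mvsconst^{(d)}_i$ satisfies some $\mvconst^{(d)} \in C^{(d)}_i$; by mutual exclusion of distinct depth-$d$ constituents it then falsifies every constituent of $C^{(d)}_j$, hence $\cM \nvDash \mvsconst^{(d)}_j$. Thus $\mvsconst^{(d)}_i \implies \lnot \mvsconst^{(d)}_j$. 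This step is routine.

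For \textbf{expansion}, I would first handle the one-step case and then iterate. By the expansion clause of Proposition~\ref{prop:prelim:prop}, each $\mvconst^{(d)} \in C^{(d)}_j$ is equivalent to the disjunction of its children $\child(\mvconst^{(d)})$ in the pruned refinement tree; since children of distinct depth-$d$ constituents are disjoint, $\mvsconst^{(d)}_j$ is equivalent to the disjunction over $\bigcup_{\mvconst^{(d)} \in C^{(d)}_j}\child(\mvconst^{(d)})$. Read through the refinement tree, the defining condition of a filtration says precisely that this set of depth-$(d+1)$ constituents is a union $\bigcup_{C^{(d+1)}_k \in D}C^{(d+1)}_k$ of depth-$(d+1)$ cells, so $\mvsconst^{(d)}_j \equiv \lOr_{C^{(d+1)}_k \in D}\mvsconst^{(d+1)}_k$. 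Composing these equivalences by induction gives, for every $e > d$, a decomposition of $\mvsconst^{(d)}_j$ into depth-$e$ super constituents whose member constituents range exactly over $\bigcup_{\mvconst^{(d)} \in C^{(d)}_j}\expand(e-d,\mvconst^{(d)})$; I would record this last fact explicitly for use in the completeness argument.

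For \textbf{completeness} I would prove both directions. ($\Leftarrow$) If every depth-$e$ refinement of $\mvsconst^{(d)}_i$ is trivially inconsistent, each is a disjunction of trivially inconsistent (hence inconsistent) constituents, so is inconsistent; by the expansion step $\mvsconst^{(d)}_i$ is equivalent to their disjunction, hence inconsistent. ($\Rightarrow$) If $\mvsconst^{(d)}_i$ is inconsistent then every $\mvconst^{(d)} \in C^{(d)}_i$ is inconsistent (a disjunction is consistent iff some disjunct is), so by the constituent completeness theorem each has a depth $e_{\mvconst^{(d)}}$ at which all of its expansions are trivially inconsistent; since $C^{(d)}_i$ is finite (finitely many predicates, fixed depth), put $e = \max_{\mvconst^{(d)} \in C^{(d)}_i} e_{\mvconst^{(d)}}$ and use that trivial inconsistency is inherited under further refinement. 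Then, by the decomposition recorded in the expansion step, every depth-$e$ refinement of $\mvsconst^{(d)}_i$ has all its members among the depth-$e$ expansions of members of $C^{(d)}_i$, hence is trivially inconsistent. I expect the main obstacle to be purely bookkeeping: pinning down the filtration condition so that ``the depth-$e$ refinements of a super constituent'' is itself a well-defined set of super constituents indexed by a subset of the depth-$e$ cells; beyond that the proof only uses Proposition~\ref{prop:prelim:prop}, the constituent completeness theorem, finiteness of each $\consts^{(d)}$, and monotonicity of trivial inconsistency under refinement, all available in the present setting.
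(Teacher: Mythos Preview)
Your proposal is correct and follows essentially the same approach as the paper, which offers only the one-line justification ``These properties all follow directly from the properties of partitions.'' You are simply unpacking what that sentence means: mutual exclusion from disjointness of cells together with mutual exclusion of ordinary constituents; expansion from the filtration compatibility condition combined with constituent expansion; completeness from the constituent completeness theorem plus finiteness of each cell. The only ingredient you invoke that the paper does not make explicit is monotonicity of trivial inconsistency under further refinement (needed to pass from the per-constituent depths $e_{\mvconst^{(d)}}$ to their maximum); this is a standard property of the usual notions of trivial inconsistency and is implicit in how the completeness theorem is used elsewhere in the paper, so it is not a gap.
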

\begin{proof}
These properties all follow directly from the properties of partitions.
\end{proof}

In general, we lose existence of super constituents: there are depth $d$ sentence $\mvform^{(d)}$ that cannot be written as a disjunction of depth $d$ super constituents. For example, the super constituents obtained from the trivial filtration cannot express logically invalid statements. We say that a filtration is \emph{complete} if it assigns every consistent constituent to its own cell.
\begin{proposition}[Complete existence]
Every depth $d$ sentence can be written as a disjunction of depth $d$ super constituents given by a complete filtration $(\set{C^{(d)}})_{d \in \N}$.
\end{proposition}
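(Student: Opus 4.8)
The plan is to reduce the claim to the ordinary distributive normal form of $\mvform^{(d)}$ and then observe that, under a complete filtration, the super constituent carrying a consistent constituent is logically equivalent to that constituent itself. First I would apply the Existence part of Proposition~\ref{prop:prelim:prop} to write $\mvform^{(d)}$ as the disjunction $\lOr_{\mvconst^{(d)} \in \dnf(\mvform^{(d)})} \mvconst^{(d)}$ of its depth $d$ constituents. By soundness and completeness of first-order logic a constituent is inconsistent exactly when it is logically equivalent to $\false$, so the inconsistent constituents in $\dnf(\mvform^{(d)})$ can be deleted without changing the disjunction; hence $\mvform^{(d)} \equiv \lOr \set{ \mvconst^{(d)} \ST \mvconst^{(d)} \in \dnf(\mvform^{(d)}),\ \mvconst^{(d)} \text{ consistent} }$.

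Next I would use completeness of the filtration $(\set{C^{(d)}})_{d \in \N}$: by definition each consistent constituent $\mvconst^{(d)}$ is the sole member of its cell $C^{(d)}_i$, so its super constituent is $\mvsconst^{(d)}_i = \lOr_{\mvconst' \in C^{(d)}_i} \mvconst' = \mvconst^{(d)}$. Substituting these equalities into the disjunction above expresses $\mvform^{(d)}$ as a disjunction of depth $d$ super constituents of the filtration, as required. The one degenerate case to record is $\nvDash \mvform^{(d)}$: then $\dnf(\mvform^{(d)})$ contains only inconsistent constituents, the surviving disjunction is empty, and $\mvform^{(d)} \equiv \false$ is represented by the empty disjunction of super constituents (equivalently, by any super constituent built from a cell of inconsistent constituents).

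I do not expect a genuine obstacle here; the argument is essentially bookkeeping. The only points needing care are (i) justifying that dropping inconsistent constituents from a dnf is harmless, which is immediate, and (ii) being precise that in a complete filtration every cell is either a singleton consisting of one consistent constituent or a set of inconsistent constituents, so the super constituents one actually needs are precisely the singletons. If one wanted literal syntactic equality rather than logical equivalence the statement would fail (inconsistent constituents occurring in $\dnf(\mvform^{(d)})$ would have to be removed by hand), so I would keep the conclusion at the level of $\equiv$, matching the phrasing ``can be written as'' used for the other properties of super constituents.
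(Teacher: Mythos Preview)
Your argument is correct and rests on the same observation as the paper's---that inconsistent constituents are logically equivalent to $\false$ and can be freely added to or removed from a disjunction---but you run it in the opposite direction. The paper's one-line proof \emph{adjoins} inconsistent constituents to $\dnf(\mvform^{(d)})$ so that each cell of the filtration appearing is complete; you instead \emph{delete} the inconsistent constituents so that only singleton cells (the consistent constituents) remain. Both moves are immediate once one notes that a complete filtration makes every consistent constituent its own super constituent; your version has the mild advantage of producing a canonical, minimal disjunction of super constituents, while the paper's version avoids having to single out the degenerate empty case.

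One notational slip: in your degenerate case you write $\nvDash \mvform^{(d)}$, but that means $\mvform^{(d)}$ is not logically valid, not that it is inconsistent. The condition you want is that $\mvform^{(d)}$ is unsatisfiable (equivalently $\vDash \lnot\mvform^{(d)}$); only then does $\dnf(\mvform^{(d)})$ contain exclusively inconsistent constituents. The handling of the case itself is fine.
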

\begin{proof}
Recall that we can adjoin inconsistent constituents to a dnf without affecting its satisfiability.
\end{proof}
\noindent As we might expect by now, a complete filtration is not computable.

\subsection{Choosing Filtrations}
\label{subsec:abs:choose}

\begin{figure}
    \centering
    \begin{tikzpicture}[
  level 1/.style={sibling distance=18em},
  level 2/.style={sibling distance=9em},
  level 3/.style={sibling distance=4em}]
\node [] (z){\begin{tabular}{c}
      $\mvaconst^{(d)}$ \\
      $\approx$ \\
      $\true$
    \end{tabular}}
  child {node [] (a) {\begin{tabular}{c}
      $(\pm)^{b_1} (\exists x_1)$ \\
      $\mvaconst^{(0)}_{1}[x_1]$
  \end{tabular}}
    child {node [] (b) {\begin{tabular}{c}
      $\lnot (\exists x_2)$ \\
      $\mvaconst^{(0)}_{1}[x_1, x_2]$
  \end{tabular}}
      child {node (c) {$\vdots$}} 
      child {node (d) {$\vdots$}}
    }
    child {node [] (g) {\begin{tabular}{c}
      $\lnot (\exists x_2)$ \\
      $\mvaconst^{(0)}_{\lvert \cG^0_2 \rvert}[x_1, x_2]$
  \end{tabular}}
      child {node (e) {$\vdots$}}
      child {node (f) {$\vdots$}}
    }
  }
  child {node [] (j) {\begin{tabular}{c}
      $(\pm)^{b_{\lvert \cG^{d-1}_1 \rvert}} (\exists x_1)$ \\
      $\mvaconst^{(0)}_{\lvert \cG^0_1 \rvert}[x_1]$
  \end{tabular}}
    child {node [] (bb) {\begin{tabular}{c}
      $(\exists x_2)$ \\
      $\mvaconst^{(0)}_{1}[x_1, x_2]$
  \end{tabular}}
      child {node (cc) {$\vdots$}} 
      child {node (dd) {$\vdots$}}
    }
    child {node [] (gg) {\begin{tabular}{c}
      $(\exists x_2)$ \\
      $\mvaconst^{(0)}_{\lvert \cG^0_2 \rvert}[x_1, x_2]$
  \end{tabular}}
      child {node (ee) {$\vdots$}}
      child {node (ff) {$\vdots$}}
    }
};
\path (a) -- (j) node [midway] {\dots};
\path (b) -- (g) node [midway] {\dots};
\path (bb) -- (gg) node [midway] {\dots};
\path (c) -- (d) node [midway] {\dots};
\path (cc) -- (dd) node [midway] {\dots};
\path (e) -- (f) node [midway] {\dots};
\path (ee) -- (ff) node [midway] {\dots};
\end{tikzpicture}
    \caption{An attributive constituent tree of depth $d$ where nodes are existential formula (except for root node) and edges indicate the scope of the quantifier. Each attributive constituent of depth $d$ corresponds to choosing each $b_i \in \bm{2}$ in the attributive constituent tree of depth $d$. We enumerate every combination of $(\pm)$ from left to right, starting with all negations and moving to all positives.}
    \label{fig:abs:choose:actree}
\end{figure}

For pragmatic purposes, we will need to cleverly choose a filtration. One method for constructing filtrations uses the fact that depth $d$ constituents indicate which depth $d-1$ attributive constituents (with $1$ free individual term) exist or not. As a reminder, constituents are defined in terms of attributive constituents as
\[
\mvconst^{(d)}_{s} = \lAnd_{(r_1, s_1) \in \cG^{d-1}_1} (\pm)^{s(r_1, s_1)} (\exists x_1) \mvaconst^{(d-1)}_{r_1, s_1}[x_1] \,.
\]
Let $o: \cG^{d-1}_1 \rightarrow {\set{O, I}}$ be an \emph{observation} of depth $d$ constituents where $o(r_1, s_1) = O$ means that we observe position $(r_1, s_1)$ and $o(r_1, s_1) = I$ means that we ignore position $(r_1, s_1)$. Then we can define an equivalence class on constituents $\mvconst^{(d)}_s \sim_{o} \mvconst^{(d)}_t$ if $s(r_1, s_1) = t(r_1, s_1)$ whenever $o(r_1, s_1) = O$. The collection of equivalence classes forms a filtration. Whenever $o(r_1, s_1) = I$, we have that every super constituent contains both $\lnot (\exists x_1) \mvaconst^{(d)}_{r', s'}[x_1]$ and $(\exists x_1) \mvaconst^{(d)}_{r', s'}[x_1]$ so that we can no longer tell the two possibilities apart. When $o(r_1, s_1) = I$ for every $(r_1, s_1) \in \cG^{d-1}_1$, the induced filtration produces exactly one super constituent. When $o(r_1, s_1) = O$ for every $(r_1, s_1) \in \cG^{d-1}_1$, the induced filtration assigns each constituent to its own set.

We can further break down the construction of filtrations by constructing an observation of depth $d$ constituents using their substructure. Unfolding the recursive definition of a constituent $\mvconst^{(d)}_s$ by depth, we see that it is a formula of the form
\begin{multline*}
\mvconst^{(d)} = \lAnd_{(r_1, s_1) \in \cG^{d-1}_{1}} (\pm)^{s(r_1, s_1)} (\exists x_1) \mvaconst^{(0)}_{r_1}[x_1] \land \dots \land \lAnd_{r_d \in \cG^0_{d}} (\pm)^{s_{d-1}(r_d)} (\exists x_d) \mvaconst^{(0)}_{r_d}[x_1, \dots, x_d] \,.
\end{multline*}
Figure~\ref{fig:abs:choose:actree} gives an illustration of a depth $d$ attributive constituent tree. In this unfolded form, we see that a depth $d$ attributive constituent is a tree where nodes are existential formulas (except for the root node which is $\true$) of the form $(\pm)^b (\exists x_e) \mvaconst^{(0)}_r[x_1, \dots, x_e]$ and edges indicate the scope of the quantifier. Each partial description $(\pm)^{s(r_1, s_1)} (\exists x_1) \mvaconst^{(d-1)}_{r_1, s_1}[x_1]$ corresponds to a subtree in the attributive constituent of depth $d$ indicating which nested sequences of individuals described by the appropriate depth $0$ attributive constituents exist or not. We can thus construct an observation by indicating which subtrees to observe or ignore.

\subsection{Trailblazer: Game Play with Super Constituents}
\label{subsec:abs:trail}

We can play Pathfinder using super constituents instead of constituents in the obvious way. When Pathfinder is played with super constituents obtained from a filtration that is not a complete filtration, agents will only be able to learn beliefs that enable them to prove a subset of the first-order theorems. This situation makes intuitive sense: we cannot prove certain theorems if we use inappropriate abstractions, even if we have infinite compute. This brings us to a variation of Pathfinder called \emph{Trailblazer} where agents can additionally choose abstractions during game play.

A player is given a depth $d$ super constituent $\mvsconst^{(d)}$ and allowed to make one of three \emph{moves}: select, challenge, or \emph{refine}. The first two are similar to the corresponding ones in Pathfinder. For the last move, a \emph{refine} move takes a super constituent and breaks it into smaller super constituents and chooses one of the smaller super constituents to continue the game. This corresponds to increasing the resolution at which that player would like to continue the game at. We describe the game more formally now.

Let $*$ represent the terminal state reached after a challenge is issued. Let the dependent sum $X \eqdef \sigma_{\cF : \mathscr{F}} (\mathbb{S}_\cF \cup \set{*})$ denote the \emph{states} of Trailblazer which pairs a filtration $\cF$ with the super constituents $\mathbb{S}_\cF$ obtained from filtration $\cF$. Define the \emph{positions} of Trailblazer to be the set
\[
P \eqdef \bigcup_{d \in \N} \seqfunc^d(X) 
\]
of all finite sequences of states.

The turn order for Trailblazer is identical to that of Pathfinder. However, whereas player $O$ selects constituents of odd depth and player $E$ selects constituents of even depth in Pathfinder, this is not the case in Trailblazer due to the refine move.

The transition relation $\rightsquigarrow: P \rightarrow P \rightarrow \bm{2}$ for Trailblazer has an additional clause for refine. For the sake of completeness, we give all the inference rules generating $\rightsquigarrow$ below.
\begin{description}[noitemsep]
    \item[Select] $\seq{\cF_0, \mvsconst^{(0)}} \dots \seq{\cF_d, \mvsconst^{(d)}} \rightsquigarrow \seq{\cF_0, \mvsconst^{(0)}} \dots \seq{\cF_d, \mvsconst^{(d)}} \, \seq{\cF_{d+1}, \mvsconst^{(d + 1)}}$ when $\mvsconst^{(d)} \expandsinto \mvsconst^{(d+1)}$
    \item[Challenge] $\seq{\cF_0, \mvsconst^{(0)}} \dots \seq{\cF_d, \mvsconst^{(d)}} \rightsquigarrow \seq{\cF_0, \mvsconst^{(0)}} \dots \seq{\cF_d, \mvsconst^{(d)}} \, \seq{\cF_d, *}$
    \item[Refine] $\seq{\cF_0, \mvsconst^{(0)}} \dots \seq{\cF_d, \mvsconst^{(d)}} \rightsquigarrow \seq{\cF_0, \mvsconst^{(0)}} \dots \seq{\cF_d, \mvsconst^{(d)}} \, \seq{\cF'_d, \mvsconst^{'(d)}}$ whenever $\cF_d \sqsubseteq \cF'_d$ and $C^{'(d)} \subseteq C^{(d)}$ where $C^{(d)}$ and $C^{'(d)}$ are the cells corresponding to $\mvsconst^{(d)}$ and $\mvsconst^{'(d)}$ respectively
\end{description}
The player whose turn it is to move chooses either select, challenge, or refine.

The winning condition $W: P \rightarrow \set{O, E}$ is the similar to that of Pathfinder where we use inconsistency of super constituents as opposed to inconsistency of constituents.

\begin{definition}
The \emph{Trailblazer} game is given by the tuple $\mathfrak{P} \eqdef (P, T, \rightsquigarrow, W)$.
\end{definition}

As before, there are no winning strategies in Trailblazer and the winning strategy is not computable. Note that we can start game play in Trailblazer with any filtration $\cF$ including the minimal one (\ie, the trivial filtration). Like Pathfinder, Trailblazer is also amenable to self-play training.

\section{Related Work}
\label{sec:rel}

We review related work relevant to each section encountered in the body of the paper. We apologize in advance for missing connections to the literature.

\subsection{Representing Beliefs in Mathematical Statements}
\label{subsec:rel:bel}

As a reminder, the inspiration for the definition of a HT comes from our reading of~\citet[pg. 274--282]{hintikka1970surface}. Our contribution is to extract and formalize some of the ideas for the purposes of ``learning to prove". Notably, we factor out the statics of weight assignment from the dynamics of renormalization as well as formalize renormalization as a Bayesian update localized to subtrees of the refinement tree. To the best of our knowledge, the application of HTs to assigning probabilities to first-order sentences and the embedding of first-order statements are new.

There have been several approaches proposed for assigning probabilities to statements with first-order quantifiers and probabilistic assertions. One approach defines measures on a suitable space of structures where the probability of a statement is the measure of the set of structures that satisfy the statement~\citep[\eg, see][]{gaifman1964concerning, scott1966assigning}~\citep[see][for the case of higher-order logic]{hutter2013probabilities}. Logically valid statements are satisfied in every structure so they are assigned measure $1$. We are not concerned with the ability to express probabilistic assertions in the logic because we simply use the logic to encode mathematics as opposed to empirical propositions. However, we are concerned with weakening the requirement that logically equivalent statements are assigned the same probability. \citet{demski2012logical} proposes another approach (that enforces logical omniscience) that assigns probabilities to sentences based on algorithmic probability.

There have been several approaches developed with learning in mind that assign probabilities to statements based on a measure on structures. A Markov logic network~\citep{richardson2006mln} is a representation designed with probabilistic inference in mind that assigns probabilities to statements expressed in a first-order language interpreted in models with finite domains. The restriction to finite domains means that the setting for Markov logic networks is effectively propositional because an existential quantifier can be encoded as a finite disjunction (similarly, a universal quantifier can be encoded as a finite conjunction). Thus the quantifiers in Markov logic can be translated away at the cost of increasing the sizes of the formulas considered. Blog~\citep{milch2005blog} is a representation that combines first-order logic and probabilities designed with Bayesian inference in mind that assigns probabilities to statements based on a possible worlds semantics.\footnote{Note that this differs from assigning probabilities to possible \emph{kinds} of worlds as we have done which does not directly consider the individuals in the domain of quantification.} Thus the representation also enforces logical omniscience.

\citet{garrabrant2016logical} propose a method called logical induction for assigning probabilities to first-order sentences that only enforces logical omniscience in the limit. Thus their objective of weakening logical omniscience for the purpose of assigning probabilities to mathematical statements is identical to ours although our methods take opposite approaches. Logical induction identifies a market mechanism for assigning probabilities to sentences\footnote{The market mechanism assigns ``prices" which can be interpreted as probabilities using a no Dutch book argument.} and then shows that such a mechanism has nice convergence properties so that logical omniscience holds in the limit but can fail in the finite time regime. In contrast, we begin with a special kind of Bayesian update mechanism that has nice convergence properties by construction and then use it to assign probabilities. Logical omniscience fails provided that we do not start at a fixed point (\ie, a depth HT). 

Another approach to weakening logical omniscience in the context of assigning probabilities to logical sentences is to syntactically model an agent's knowledge. For instance, we can restrict logical omniscience to a subset of sentences~\citep{gaifman2004reasoning} or introducing syntax for a new modality to distinguish implication from provability~\citep{garber1983old}.

It is also possible to adapt a syntactic approach where an agent's reasoning capability is modeled as bounded. For instance,~\citet{icard2014mind} studies bounded reasoning in the context of Markov decision processes and~\citet{bjerring2018dynamic} studies bounded reasoning in the context of bounded-length proofs. Under this approach, we can assign probabilities to statements as the probability of its provability in a proof system where inference rules are applied non-deterministically.\footnote{One issue with this approach is that there can be multiple proofs (or refutations) of the same fact so some notion of minimal length proof is required.}

\subsection{Conjecturing}
\label{subsec:rel:conj}

\citet{larson2005survey} provides a nice survey of the field of automatic conjecture generation. Many of these programs generate conjectures by enumerating syntax (generated from a production system) and pruning them by a combination of hand-crafted heuristics and model checking~\citep[\eg, see][]{lenat1976artificial,fajtlowicz1988conjectures,haase1990invention,colton1999automatic}. Some methods such as the one implemented in Grafitti~\citep{fajtlowicz1988conjectures} are based on the idea of generating the strongest conjecture that has no known counter-example have produced ``research-level" conjectures~\citep[\eg, see][]{chung1988average}. In contrast to these operational descriptions of conjecturing, our description of conjecturing is denotational. One advantage of a denotational approach is that it is not sensitive to the order in which syntax is enumerated.

One form of conjecturing concerns making generalizations from special cases. In short, given that we have seen that $P(a_1), \dots, P(a_N)$ where each $a_i$ is a constant that identifies a unique element in the domain of quantification and $P$ is a unary predicate, to what degree do we believe that $(\forall x) P(x)$ is true? This form of conjecturing has been studied in inductive philosophy. For example,~\cite{carnap1952ind} studies inductive generalizations in monadic first-order logic and~\cite{hintikka1966two} studies inductive generalizations on constituents. We do not address this form of conjecturing. In particular, each $P(a_i)$ results in a depth $0$ dnf whereas $(\forall x) P(x)$ results in a depth $1$ dnf so that we would need to compare conjectures across depth. It would be an interesting direction of future work to analyze the notion of conjecturing while taking depth into account. We note that we can apply any method of inductive generalization defined on constituents~\citep[\eg,][]{hintikka1966two} to our setting.

\subsection{On Games and Proving}
\label{subsec:rel:game}

The connection between games and first-order logic has been recognized since the development of modern first-order logic. The philosopher Peirce casts first-order theorem proving as a non-alternating-turn game on existential graphs~\citep[\eg, see][]{peirce1909m514,sowa2011peirce}. That the semantics of first-order logic can be given in terms of games has also been recognized in the literature~\citep[\eg, see][]{henkin1961inf,hintikka1973logic,lorenz1978gts,hintikka1999game}. The connection between games and other logics (especially modal logic) has also been recognized~\citep[\eg, see][]{benthem2014games}.

There are alternating-turn games that can be played on first-order structures. For instance, the well-known Ehrenfeucht-Fra\"{i}ss\'{e} game, also known as a back-and-forth game, can be used to determine the elementary equivalence of first-order structures. The game-theoretic semantics of first-order logic gives rise to a game for checking the satisfiability of first-order formulas in a given first-order structure, and is an alternating-turn game when played on constituents of the second kind.\footnote{The game semantics of first-order logic is defined by induction on the structure of formulas. It is a game between two players: Eloise who controls the positive fragment of the logic and Abelard who controls the negative fragment of the logic. Negations correspond to switching who controls the positive and negative fragments of the logic. Eloise has a winning strategy if the formula is satisfiable in a structure $\cM$ whereas Abelard has a winning strategy if the formula is not satisfiable in $\cM$.

To see that game play on a dnf results in alternating-turn move order, recall that a constituent $\mvconst^{(d)}[y_1, \dots, y_k]$ of the second kind is a formula of the form $\lAnd (\exists x) \mvconst^{(d-1)}[y_1, \dots, y_k, x] \land (\forall x) \lOr \mvconst^{(d-1)}[y_1, \dots, y_k, x]$. Thus, either (1) Abelard picks a conjunct from $\lAnd$ and passes play to Eloise to instantiate an existential $\exists$ or (2) Abelard instantiates a universal and passes play to Eloise to play a disjunct from $\lOr$. By an induction on $d$, we see that this results in an alternating-turn play. Play begins with Eloise selecting a disjunct.}

\subsection{On Abstraction}
\label{subsec:rel:abs}

\citet{hintikka1970towards} study the concept of definition, a form of abstraction, using constituents. They show a la analysis on constituents that a theory employing definitions that are explicitly definable in a first-order logic can reveal the trivial inconsistency of sentences at shallower depths compared to a theory not employing those definitions. The idea is that definitions are useful, even if they can be translated away, because they make certain theorems easier to prove. In contrast, we consider abstraction as a method for controlling the sizes of constituents, and as a cost, give up the ability to prove certain theorems. 

A form of abstraction, namely proofs with cut (\ie, proofs where we can use lemmas) can be used to reduce the sizes of proofs in first-order proof calculi. Notably, first-order proofs with cut-elimination increases the sizes of proofs by a super-exponential amount~\citep{pudlak1998length}.

\section{Conclusion}
\label{sec:concl}

In summary, we consider the problem of learning a first-order theorem prover where we directly use a representation of beliefs in mathematical claims to construct proofs. Towards this end, we introduce a representation of beliefs that assigns probabilities to the exhaustive and mutually exclusive first-order possibilities found in Hintikka's theory of distributive normal forms. We then examine conjecturing as (statistical) model selection and an alternating-turn proving game that involves determining the consistency of constituents. The game is amenable (in principle) to self-play training for learning beliefs which can be used to construct a prover that is complete when logical omniscience is attained and sound when the beliefs are reasonable. Along the way, we give another method for assigning probabilities to first-order statements that does not enforce logical omniscience as well as an embedding of first-order logic into an associated Hilbert space.

We have left numerous questions unanswered. One direction of future work is to further examine the embedding of first-order logic into a Hilbert space. Another direction of future work concerns the efficient implementation and empirical testing of self-play for Trailblazer (\ie, the variation of the Pathfinder proving game using abstractions). In particular, (1) can we efficiently implement HTs by selecting clever abstractions and using lazy representations, (2) what machine learning representations are effective for representing HTs, and (3) do self-play learning systems for the game learn stable and meaningful evaluation functions that can be used to build actual theorem provers? It is unclear to us how and if these technical issues can be resolved. In spite of the numerous technical difficulties, we are also intrigued by this direction of future work.

It is often said that mathematics is not a spectator sport, that one learns mathematics by doing mathematics. P{\'o}lya expresses this sentiment in conjunction with the necessity of ``plausible reasoning":
\begin{quote}
The result of the mathematician's creative work is demonstrative reasoning, a proof; but the proof is discovered by plausible reasoning, by guessing. If the \emph{learning} [emphasis added] of mathematics reflects to any degree the invention of mathematics, it must have a place for guessing, for plausible inference.~\citep[][pg. vi]{polya1990mathematics1}    
\end{quote}
If we agree with P{\'o}lya, then the implication for applying machine learning to proving is that we require both plausible and demonstrative reasoning in \emph{training} and in \emph{inference}. Put another way, we will be missing many modes of mathematical reasoning that are useful (at least for humans) for the discovery of proofs if we constrain ourselves to an exclusively proof-theoretic view of proving during training. 

What we have accomplished in this paper is largely to give a descriptive account of the mathematical process where one ``learns" mathematics by ``doing" mathematics. More concretely, we have seen that (1) \emph{proving} requires the construction of individuals with certain properties (as opposed to the strategy of eliminating existentials), (2) \emph{conjecturing} can be defined independently of enumerating syntax and can be employed for proving, and (3) \emph{abstractions} (and laziness) are necessary for managing complexity although we (potentially) lose completeness. We hope that the thought experiment conducted in this paper has shed some additional light on the activity of ``learning to prove".

\section*{Acknowledgements}
We thank Dawn Song for helpful discussions and support to pursue this direction of research. We also thank Henryk Michalewski for reading an earlier version of this paper and helpful discussions. Kshitij Bansal, Ian Fischer, Sarah Loos, Marcus Rabe, Christian Szegedy, and Scott Garrabrant provided helpful feedback. Finally, we owe a great deal to Jaakko Hintikka whose work enabled many of the ideas in this paper.

\bibliography{references}

\end{document}